\documentclass[12pt]{article}
\usepackage{fullpage}
\usepackage{mathrsfs}
\usepackage{float}
\usepackage{amsmath, amsfonts, amssymb, amsthm, amsbsy, amscd, bm, bbm}
%



\usepackage{booktabs}
    
\usepackage{url}
\usepackage{ifthen}
\usepackage[toc,page]{appendix}

\usepackage{graphicx}
\usepackage{algorithmic}
\usepackage[linesnumbered,ruled,vlined]{algorithm2e}
\usepackage{hyperref}
\usepackage{balance}
\usepackage{subfig}
\usepackage{color}
\usepackage{paralist}
\usepackage{comment}
\usepackage{cases}
\usepackage{balance}
\usepackage{kotex}

\newtheorem{theorem}{\bf Theorem}
\newtheorem{definition}{\bf Definition}
\newtheorem{lemma}{Lemma}

\newtheorem{proposition}{\bf Proposition}
\newtheorem{claim}{\bf Claim}

\renewcommand{\paragraph}[1]{\smallskip \noindent{\bf #1.} }
\newcommand{\EXP}{\mathbb{E}}
\newcommand{\PROB}{\mathbb{P}}
\DeclareMathOperator*{\argmax}{argmax}

\newcommand{\regret}{\textnormal{Regret}}








\newcommand{\set}[1]{\ensuremath{\mathcal #1}}

\newcommand{\separator}{
  \begin{center}
    \rule{\columnwidth}{0.3mm}
  \end{center}
}






\newcommand{\prob}[1]{\mathbb{P}[ #1 ]}

\newcommand{\beq}{\begin{eqnarray*}}
\newcommand{\eeq}{\end{eqnarray*}}
\newcommand{\beqn}{\begin{eqnarray}}
\newcommand{\eeqn}{\end{eqnarray}}
\newcommand{\bemn}{\begin{multiline}}
\newcommand{\eemn}{\end{multiline}}








\def\R{\mathbb{R}}




\hyphenation{similar}

\bibliographystyle{apalike}

\title{Multi-armed Bandit Algorithm against Strategic Replication}
\author{
Suho Shin \thanks{LINE Plus Corporation, email: suhoyah@gmail.com.}
\and
Seungjoon Lee  \thanks{Department of Computer Science, Pohang University of Science and Technology, email: dltmdwns777@gmail.com.}
\and
Jungseul Ok \thanks{Department of Computer Science and Engineering and Graduate School of Artificial Intelligence, Pohang University of Science and Technology, email: ockjs1@gmail.com}
}

\begin{document}
\maketitle
%

%

\begin{abstract}
We consider a multi-armed bandit problem
in which a set of arms is registered by each agent, and the agent receives 
reward when its arm is selected.
An agent might strategically submit more arms with replications, 
which can bring more reward
by abusing the bandit algorithm's exploration-exploitation balance.
Our analysis reveals that
a standard algorithm indeed fails at preventing replication
and suffers from linear regret in time $T$.
We aim to design a bandit algorithm which demotivates replications
and also achieves a small cumulative regret.
We devise Hierarchical UCB (H-UCB) of replication-proof, which 
has $O(\ln T)$-regret under any equilibrium.
We further propose Robust Hierarchical UCB (RH-UCB)
which has a sublinear regret even in a realistic scenario with irrational agents replicating careless. 
We verify our theoretical findings through numerical experiments.
\end{abstract}

\section{Introduction}
\label{sec:intro}
With increasing attention to smart recommender system, 
multi-armed bandit (MAB) problem is extensively studied under various settings
as it captures the fundamental trade-off between exploration and exploitation.
In a standard stochastic MAB setting,
at each time step,
a principal chooses which arm to pull a given set of arms. 
Once an arm is pulled, a random reward associated to the arm is revealed and 
the principal receives the reward.
The principal's goal is to maximize the expected cumulative reward
or equivalently to minimize the cumulative regret 
which is defined by the expected loss compared to oracle policy that always pulls the optimal arm over the given time horizon.
To this end, at each decision, the principle needs to
address the trade-off between
exploring new arms with high potential
and exploiting empirically best arms with low risk.
A extensive line of works have been studied 
the fundamental limit in \cite{mab:lowerbound}
and proposed optimal algorithms achieving the limit~\cite{mab:ucb1, mab:klucb1, mab:ts1, mab:ts2}.

These algorithms work efficiently in ideal case, however, they might suffer from a strategic manipulation of the agents who register the arms in the platform.
For example if the principal runs $\varepsilon$-greedy based algorithm~\cite{mab:ucb1}, the probability of being selected at exploration phase
will certainly increase as the agent registers more arms.
This implies that the agent should replicate their contents as much as possible to increase their revenue. As a consequence, it will interrupt the platform's algorithm in identifying the best contents and possibly result in a decrease of the platform's revenue.



To prevent such abuse of the players, on one hand, the platform can introduce an automated process to detect the duplicated contents and blacklist the owners~\cite{YoutubeReplication, YoutubeMonetization}.
However, these process are often discouraged in some real-world applications since they might be erroneous or too costly to adopt since the platform is required to analyze features of registered contents.

On the other hand, one can borrow some off-the-shelf algorithms proposed in infinitely many-armed bandit literature~\cite{manyarmed:berry, manyarmed:wang} assuming a worst-case scenario. Yet, there are some limitations on it either: (i) these works usually assume certain characterizations on the pool of arms which might be discouraged in practice, and (ii) the replication of the arms itself could be harmful to the platform since it introduces an additional burden in hardware resources.


In this context, our major focus is how an algorithm can demotivate the agents in advance from strategically replicating arms, when it can only distinguish to whom each arm belongs.

Our main contributions are summarized as follows:
\begin{compactenum}[(a)]
\item To the best of our knowledge, 
this is the first study to address the issue of strategic replication in multi-armed bandit problem and mathematically analyze it.
\item We show that UCB1 is replication-prone and hence admits infinitely many 
replications for the strategic agents, which obviously consequences linear regret.
\item We propose H-UCB which mainly restrict the amount of exploration to be $O(\ln T)$ for each agent by separating agent and arm selection phase, and prove that it is replication-proof while achieving $O(\ln T)$-regret under the equilibrium.
\item To cope with practical challenges, we present RH-UCB which integrates arm sampling and enlarges the amount of agent exploration into $O(\sqrt{T}\ln T)$, and prove that it has $O(\sqrt{T}\ln T)$ regret in the presence of replicators who infinitely replicate their arms.
\item We provide numerical experiments to support our claim, and present some useful insights from it.
\end{compactenum}

In Section~\ref{sec:model}, we present our model and objectives.
We then formally define replication-proneness with a corresponding negative result of UCB1, and introduce replication-proofness in Section~\ref{sec:preliminary}.
We provide our positive results in \ref{sec:result} and present numerical experiments in Section~\ref{sec:simulation}.
All the proofs are presented in Appendix~\ref{sec:proofs} due to space limit.


\subsection{Related Work}
\label{sec:related}
We first present a line of works considering strategic behavior in various multi-armed bandit problem.
\cite{mabstrategy:braverman} study a multi-armed bandit problem when each arm itself is a strategic agent, and once the arm is selected by a principal, a random reward is realized, and the agent reports the reward to the principal, and finally this amount of reported reward is delivered to the principal. Here, the reported value can either be genuine or not with respect to the agent's strategic decision. The agents want to maximize their own cumulative expected utility defined by the summation of undelivered rewards, and the principal aims to extract maximal cumulative reward from the agents. The authors show that existing algorithms maintain a bad approximate equilibrium, and proposed a mechanism that only possesses an efficient approximate equilibrium in which the principal enjoys sub-linear regret. \cite{mabmanipulate:feng} study a similar problem but when the agents try to maximize the total expected number of selections by manipulating their random reward within a fixed amount of total manipulation cost, and show that popular algorithms achieve an intrinsic robustness.

Some of the works consider a strategic behavior of users in user-generated content platforms to motivate the strategic users to promptly contribute their content rather than delaying it~\cite{mabugc:jain}, or to contribute a high quality content which possibly requires their costly effort~\cite{mabugc:ghosh, mabugc:ghosh2, mabugc:ghosh3, mabugc:liu}. 
Initiated by~\cite{mabincentivize:kremer}, there also exists a line of works which study 
incentivizing scheme to motivate myopic selfish agent who selects the arms to explore more arms submitted in the system rather than exploiting the well-seeming ones~\cite{mabincentivize:wang,mabincentivize:bahar}, and also in  Bayesian setting~\cite{mabincentivize:mansour,mabincentivize:mansour2, mabincentivize:frazier}.
Since these works are quite distant from our focus of interest, we skip the detailed explanation.

Since we focus in designing an algorithm that is robust to the agent's repetitive replication, it seems worthwhile to present the works studying efficient algorithms in case of infinitely many arms. \cite{manyarmed:berry} study infinitely many-armed Bernoulli bandit problems when each arm's parameter is sampled from a common distribution under the assumption that the optimal arm has mean $1$. \cite{manyarmed:wang,manyarmed:carpentier} rather assume a certain characteristic in near-optimal arm's reward distribution to attain sublinear algorithms. \cite{ballooningmab:ghalme} consider the scenario where the arms are not fully given at first but become available in a sequential manner, which draws a need of new definition in regret. They show that it is necessary to have certain assumptions on the arrival time of the optimal arm to achieve sublinear regret, and provide corresponding sublinear regret algorithm. Unlike from these approaches, our work mainly focus at preventing such situation by demotivating the agents in advance.




\section{System model}
\label{sec:model}

We consider $n$ agents, where 
each agent $i \in \set{N} = [n] := \{1,2, ..., n\}$
is endowed with 
the set of unique arms, denoted by
$\set{O}_i := \{o_{i,1}, o_{i,2}, \ldots, o_{i,l(i)}\}$,
with a constant $l(i) \ge 1$.
Each arm $a \in \set{O} :=  \cup_{i\in\set{N}}\set{O}_i$
is associated with 
Bernoulli\footnote{
This can be extended to the reward distributions of single-parameter exponential family, while it can be corresponded with the plausible scenario of recommender system
in which the performance is often measured by the number of total clicks or hits per content.} reward distribution
of mean $\mu(a)$.
We consider a
one-shot game scenario with a principal and $n$ agents,
where we assume that agents can hide or replicate a part of its original arms.
More formally, 
in advance of MAB, 
each agent $i \in \set{N}$
decides and registers its strategy 
$\set{S}_i = \{s_{i,k}^{(c)}: c \in [c_{i, k}], k \in [l(i)] \}$ of support $\set{O}_i$
with parameters $(c_{i, 1}, c_{i, 2}, ..., c_{i,l(i)})$
such that for $k \in [l(i)]$,
$c_{i, k}$ is the number of copies of $o_{i, k}$ (including itself) in $\set{S}_i$
and $c_{i,k} = 0$ implies that $o_{i, k}$ is not registered at all,
where each copy $s_{i,k}^{(c)}$ has the same reward distribution 
of arm $o_{i,k}$, i.e., $\mu(o_{i,k}) = \mu(s_{i,k}^{(c)})$ for each $c \in [c_{i,k}]$.
Then, 
given ${\set{S}} := (\set{S}_1, \set{S}_2, ..., \set{S}_n)$,
at each round $t = 1,2, ..., T$, 
the principal selects one of the registered arms, denoted by $A_t$, according to an MAB algorithm $\mathfrak{A}$,
and receives reward $R_t$ drawn independently from the corresponding distribution of mean $\mu(A_t)$.
This is indeed a one-shot game as 
provided the principal's algorithm $\mathfrak{A}$,
each agent $i \in \set{N}$ simultaneously decides strategy $\set{S}_i$
and never updates it once registered.
We formally describe the behavior of strategic agents
in Section~\ref{sec:agent},
and the principal's objective in Section~\ref{sec:principal}.

\subsection{Strategic Agent}
\label{sec:agent}
For simplicity, we describe strategic agents
when agent~$i$ has the utility~$v_i$
defined by the accumulated reward from its arms in strategy $\set{S}_i$, i.e., 
$v_i(\set{S}; \mathfrak{A}, T) 
    := \sum_{t=1}^T R_t \mathbbm{1}[A_t \in \set{S}_i]$,
which directly corresponds to the canonical case that
a fixed portion of the rewards
is shared from the principal.
We note that
the entire analysis can be extended for a more general definition of utility
including discount factor over time and 
non-negative marginal utility, rigorously described in Appendix~\ref{sec:appendixprelim}.
A strategic agent $i$ 
aims at maximizing the {\em expected} utility, denoted by
\begin{align}
u_i(\set{S}; \mathfrak{A}, T) 
:= \EXP[ v_i(\set{S}; \mathfrak{A}, T) ] 
\;,
\end{align}
where the expectation takes over the randomness
of algorithm $\mathfrak{A}$ and reward $R_t$'s.
We define {\em dominant strategy} as follows:
\begin{definition}[Dominant strategy] \label{def:dominant}
Agent $i$'s strategy $\set{S}_i$ is 
dominant if 
regardless of the other agents' strategy $\set{S}_{-i} 
:= (\set{S}_1, \ldots, \set{S}_{i-1}, \set{S}_{i+1}, \ldots \set{S}_n)$, 
it provides 
the expected utility 
at least as much as any other $\set{S}'_i$ of support $\set{O}_i$ does, i.e., 
\begin{align} \label{eq:dominant}
u_i(\set{S}_i, \set{S}_{-i}) \geq u_i(\set{S}'_i, \set{S}_{-i}) \;,
\end{align}
where 
for simplicity,
we often write
$u_i(\set{S}_i, \set{S}_{-i})
= u_i(\set{S}_i, \set{S}_{-i}; \mathfrak{A}, T)$
omitting $\mathfrak{A}$ and $T$.
The dominance of $\set{S}_i$ is strict if the inequality
\eqref{eq:dominant} is strict.
\end{definition} 
It is clear that 
once an agent $i$ is able to identify a dominant strategy $\set{S}_i$, the agent's rational decision is always selecting $\set{S}_i$
regardless of the others' strategy $\set{S}_{-i}$.
Hence, it can be used to characterize the Nash equilibrium of agents' strategy.
However, depending on the principal's algorithm and the agent's knowledge,
it is possible that there exists no dominant one, or the agent has no ability to identify one.
In our analysis, we will elaborate 
the existence or identifiability of dominant strategy.

\subsection{Principal's objective}
\label{sec:principal}

The principal aims at maximizing the utility, 
which is defined by the accumulated reward from selecting arms submitted by the agents.
In other words,
the principal's objective can be alternatively formulated
as the minimization of the expected cumulative opportunity cost
defined as $\regret(\set{S}; \mathfrak{A}, T)$ in the followings:
\begin{definition}[Principal's regret]
Given strategy $\set{S}$, 
the regret of algorithm $\mathfrak{A}$ up to time horizon $T$ 
is 
\begin{align*}
\regret(\set{S}; \mathfrak{A}, T) := 
\sum_{t=1}^T \sum_{i \in \set{N}}\sum_{a \in \set{S}_i} \Delta(a) \EXP 
\big[ 
\mathbbm{1}[A_t = a]
\big] \;,
\end{align*}
where we let $\Delta(a) := \mu^\star - \mu(a)$ denote
the expectation of the instantaneous regret from selecting arm $a$
instead of the optimal arm $\mu^\star := \max_{a' \in \set{O}} \mu(a')$.
\end{definition}
It is well-known that without any prior knowledge,
the optimal regret is scaling with the number of arms \cite{mab:lowerbound}.
To be specific, if $\set{S}$ includes the best arm
$a^\star := \argmax_{a \in \set{O}} \mu(a)$, then
the optimal regret is given by $O(|\set{S}| \ln T)$,
where $|\set{S}|$ is the number of arms registered.
Such a scaling regret in the number of arms
is inevitable mainly because 
$O(\ln T)$ explorations per arm
is required to 
information-theoretically
identify the best arm $a^\star$ with an error probability less than $O({1}/{t})$ at round $t$
so that the regret from misidentifying $a^\star$ is bounded by $O\big(\sum_{t=1}^T {1}/{t} \big)=O(\ln T)$.
We refer to \cite{mab:lowerbound} for a rigorous analysis.
This illustrates the importance of maintaining the number of suboptimal arms as less as possible, while having smaller number of arms is also beneficial in terms of
principal's management cost, e.g., server capacity, in practice.

\section{Preliminary}
\label{sec:preliminary}
In this section, we provide an analysis revealing
the vulnerability of a canonical algorithm and a simple baseline 
against replicators. This shows not only the
significance but also the challenge of the replication problem. 


\subsection{Replication-proneness} 
\label{sec:sybil}

We first investigate the strategic behavior of agents
when using a canonical algorithm of regret minimization,
and check if the agents 
have a strong motivation to replicate 
arms under the algorithm, i.e., the algorithm is {\em replication-prone}.
\begin{definition}[Replication-proneness]
An algorithm $\mathfrak{A}$ for time horizon $T$ 
is replication-prone
if for an agent $i \in \set{N}$, any strategy
$\set{S} = (\set{S}_i, \set{S}_{-i})$, 
there exists $\set{S}'_i$ such that $\set{S}_i \subset \set{S}'_i$ and
\begin{align} 
u_i(\set{S}'_i, \set{S}_{-i}; \mathfrak{A}, T) >u_i(\set{S}_i, \set{S}_{-i}; \mathfrak{A}, T) \;. \label{eq:prone}
\end{align}
\end{definition}
In words, a replication-prone algorithm provides 
a strictly positive gain from replicating a part of arms for each agent,
and thus each agent's best decision 
is to register infinitely many arms regardless of the others' strategy.
Our analysis verifies that UCB1 \cite{mab:ucb1} is replication-prone, meanwhile 
we believe that an extended analysis can show
the replication-proneness of other canonical MAB algorithms such as KL-UCB \cite{mab:klucb1}.
\begin{theorem}\label{thm:negative}
UCB1 is replication-prone for $T \geq n$.
\end{theorem}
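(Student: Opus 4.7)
The plan is to exhibit, for each agent $i$ and each strategy profile $\set{S}=(\set{S}_i,\set{S}_{-i})$, a specific $\set{S}_i'\supset \set{S}_i$ whose expected utility strictly exceeds that of $\set{S}_i$ under UCB1. The construction leverages UCB1's initialization rule: any arm with $N_t(a)=0$ carries UCB value $+\infty$, so every registered arm is pulled at least once before UCB1 begins re-pulling any arm.

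Concretely, I would choose $o^*\in\set{O}_i$ maximizing $\mu(o^*)$ over $\set{O}_i$ (the degenerate case $\mu\equiv 0$ is vacuous, giving utility identically $0$), and form $\set{S}_i'$ by adjoining $m$ fresh copies of $o^*$, with $m$ chosen based on whether agent $i$ owns the overall best arm. I would then compare $u_i(\set{S}_i,\set{S}_{-i})$ and $u_i(\set{S}_i',\set{S}_{-i})$ round by round, splitting the horizon into an initialization phase (one pull per registered arm) and an exploitation phase. The extra $m$ initialization pulls contribute $m\,\mu(o^*)$ to agent $i$'s utility, which must be weighed against the $m$ exploitation rounds truncated at the end.

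In the regime where agent $i$ does not own the overall best arm, taking $m=1$ is already sufficient: the lost exploitation round would, with overwhelming probability, have been spent by UCB1 on a competitor's arm, contributing $0$ to agent $i$, while the extra initialization pull of $o^*$ contributes $\mu(o^*)>0$ in expectation. In the regime where agent $i$ does own the best arm $a^*$, I would instead take $m$ large enough to fill essentially the entire horizon with initialization; the strict gain comes from eliminating UCB1's unavoidable $O(\ln T)$ exploration of the competitor's suboptimal arms, which under $\set{S}_i$ robs agent $i$ of expected reward $\Theta(\ln T)$ that would otherwise have accrued from $a^*$.

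The main obstacle will be formalizing the coupling between the two trajectories after the extra copies are introduced: the empirical counts and UCB values differ globally after round $|\set{S}|$, so the post-initialization behavior of UCB1 could in principle diverge in complex ways. I would handle this by exploiting the statistical exchangeability of $o^*$ and its copies (every pull of either accrues to agent $i$) and by invoking standard UCB1 regret bounds to absorb the lower-order terms. The hypothesis $T\ge n$ is exactly what is needed to guarantee that initialization under $\set{S}'$ reaches at least one of the new copies of $o^*$, so that the $\mu(o^*)$ surplus is actually realized rather than truncated away.
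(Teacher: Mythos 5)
Your intuition is the right one (the forced initialization pull of a replica is pure surplus, and replication should only help), but the formalization you propose has a genuine gap at exactly the point you flag as "the main obstacle." Modeling the effect of adjoining $m$ copies as ``$m$ extra initialization pulls traded against $m$ exploitation rounds truncated at the end'' is not a valid description of UCB1's dynamics: after initialization the round counter $t$, the arm counts, and hence every subsequent index $r(a)+\sqrt{2\ln t/n(a)}$ differ between the two runs, so the post-initialization trajectory under $\set{S}'_i$ is not the trajectory under $\set{S}_i$ shifted and truncated. Your proposed repair---invoking standard UCB1 regret bounds to ``absorb the lower-order terms''---cannot close this gap in your Case 1: there the claimed gain is a single constant $\mu(o^\star)=O(1)$, while the uncontrolled perturbation to the number of times agent $i$'s arms are pulled is a priori of order $\ln T$ (regret bounds only pin each suboptimal arm's pull count to within $O(\ln T)$, in either direction). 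A bound that is swallowed by its own error term cannot certify a \emph{strict} inequality, which is what replication-proneness requires. Your Case 2 (flooding the horizon with initialization when agent $i$ owns the best arm) can be made elementary, but Case 1 as written would not go through.

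The paper closes precisely this gap with an exact coupling rather than an approximate one: it constructs a ``proper embedding'' of the unreplicated process into the replicated one, proving by induction on the sample-path length that the subsequence of pulls restricted to the original arm set has \emph{identical law} in both processes (the key step shows that once the replica ceases to have the maximal index, the conditional distribution of the next original arm pulled is unchanged). Consequently every pull of a replica is pure surplus for agent $i$, giving first-order stochastic dominance of agent $i$'s cumulative pull count $N_{\set{S}'_i,t}$ over $N_{\set{S}_i,t}$ for every $t$, hence dominance of the utility for any non-decreasing utility and discount sequence; strictness then follows from the guaranteed second initialization pull once $t\ge n$. If you want to salvage your outline, you must prove this distributional identity (or an equivalent monotone coupling) rather than appeal to regret bounds; the exchangeability of $o^\star$ with its copies alone does not give it, because the replica also perturbs the timing of pulls of the \emph{other} agents' arms.
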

This theorem shows a theoretical understanding of replicating behaviors in practice,
and an importance to prevent replicators.
It is intuitive from the fact that 
UCB1 is oblivious to agents
and thus grants a certain amount of explorations for every single registered arm.
We provide a formal proof of
Theorem~\ref{thm:negative} in Appendix~\ref{sec:proofs}, 
where we mainly use a sophisticated coupling argument 
to compare rewards from two MAB processes
with or without
an additional replication of
the entire set of agent $i$'s original arms.
We note that our analysis requires no special 
condition on $\set{O}$ to show \eqref{eq:prone},
and thus 
just given UCB1, every agent $i$ is able to realize 
that its best decision must include infinitely many replications
even without knowledge on its arms or others. 

As mentioned earlier,
UCB1 with registration of suboptimal arms more than $\Omega(T/\ln T)$ may suffer from linear regret.
To overcome such an unwilling behavior of the algorithm, one might consider using the algorithms provided in many-armed bandit literature. However, as we briefly discussed in Section~\ref{sec:intro}, this may not work well when only some of the agents replicate their arm, and even the replication itself harms the principal since it exhausts the hardware resources.
Therefore, we instead aim at demotivating the agents replicating the arms in advance.

\subsection{Replication-proofness and Fair($\mathfrak{A}$)}
\label{sec:truthful}
To prevent the infinite replicas in advance, 
we focus on constructing a mechanism that makes
agents' Nash equilibrium at not replicating the arms.
To this end, 
we define {\em replication-proof} algorithm as a solution concept of our problem as follows:
\begin{definition}[Replication-proofness] \label{def:proof}
An algorithm $\mathfrak{A}$ is replication-proof if there exists a dominant strategy that no agent replicates.
\end{definition}
Due to the equality in 
\eqref{eq:dominant}
the replication-proofness actually refers to the weakly dominant strategy in the literature. Hence, it does not guarantee that making no replication is  making more money for the agents, which possibly induces a deviation for the agents. However, in practice, this can be prevented by introducing a tiny constant cost in registering the arms or charging a periodical commission fee with respect to the number of registered arms in the platform.
\begin{algorithm}
\SetAlgoLined
$i \in \set{N}$, MAB algorithm $\mathfrak{A}$\\
 \While{true}{
  Chooses agent $i \in \set{N}$
  uniformly at random
  \\
  Run $\mathfrak{A}$ with arms in $\set{S}_i$
 }
 \caption{Fair($\mathfrak{A}$)}
\end{algorithm}
In this sense, under a replication-proof algorithm,
the agents have a Nash equilibrium without replicating arms.
Replication-proofness and -proneness are not complementary to each other,
while one implies the negation of the other, i.e.,
UCB1 is replication-prone and thus it is not replication-proof.
We remark that Definition~\ref{def:proof} shares a similar virtue with the truthfulness in the mechanism design literature~\cite{agt}
since truthfully reporting a private valuation can be regarded as reporting only a part of the original arms without any replication in our scenario.



We note that
the replication-proofness can be
achieved by a very simple baseline equipped
with any MAB algorithm $\mathfrak{A}$, denoted 
by Fair($\mathfrak{A}$).
At each round, 
Fair($\mathfrak{A}$) chooses an agent $i \in \set{N}$
uniformly at random,
and then plays an arm $a \in \set{S}_i$
from running $\mathfrak{A}$ based on the history of the agent $i$'s arms.
Then it is straightforward to check that Fair($\mathfrak{A}$) is replication-proof since the selection of each agent is not affected by the other agents' strategy for any $\mathfrak{A}$. 
In addition, if possible, the best strategy of each agent is to register its best arm.
However, even with the best strategy containing only each agent's best arm,
every agent is selected with the equal probability at each round, and thus
any suboptimal agents will be selected in linear portion of rounds, i.e., linear regret is inevitable.

Therefore, 
the principal's objective can be given in two folds: 
i) demotivating the agents from replicating arms; and ii) 
acquiring small regret.
More formally,
we aim at designing an algorithm that is replication-proof and has sublinear regret at the same time.

\section{Main result}
\label{sec:result}
In what follows, we first describe 
an replication-proof algorithm, named Hierarchical UCB (H-UCB), which has a sublinear regret assuming every agent is fully rational and thus not replicating arms.
We then propose a robust version of H-UCB (RH-UCB), that are guaranteed to be replication-proof and of sublinear regret 
even in the realistic scenario possibly with irrational agents. 


\subsection{Hierarchical UCB}\label{sec:hucb}
We recall that Fair($\mathfrak{A}$) is equipped with the replication-proofness 
by the agent selection, irrelevant to the number of arms that the agent has,
in advance of the arm selection. Inspired by such a hierarchy of selections 
of agent and then arm, we devise hierarchical UCB (H-UCB) consisting of
two phases per round, 
where {\tt Phase 1} to choose an agent
is followed by {\tt Phase~2} 
to select its arm, sequentially.
We provide a pseudo code of H-UCB in Algorithm~\ref{alg:hucb},
where we keep track of the empirical mean rewards from each agent $i$ and arm $a$ by $R(i)$ and $r(a)$, respectively,
and the numbers of rounds selecting agent $i$'s arms and 
an arm $a$ by $N(i)$ and $n(a)$, respectively.
In {\tt Phase 1}, 
as a part of identifying an agent with the optimal arm,
a UCB algorithm selects an agent $\hat{i}$ of the highest value
of agent index $R(i) + \sqrt{\frac{2\ln t}{N(i)}}$ which estimates
the potential rewards from agent $i$'s arms at $t$-th round.
Then, in {\tt Phase 2},
given the empirically best agent $\hat{i}$,
we aim at selecting the agent $\hat{i}$'s best arm.
To this end, another UCB algorithm is employed to choose an arm $\hat{a}$ from the agent $\hat{i}$
of the highest value of arm index $r(a) + \sqrt{\frac{2\ln N(\hat{i})}{n(a)}}$
that assesses the potential of selecting arm $a$ 
on the part of time horizon when selecting the agent $\hat{i}$ only.
This forms a hierarchy of UCB algorithms
as the arm selection in {\tt Phase 2} depends on the agent selection in
{\tt Phase 1}.
We then have the following analysis on H-UCB:
\begin{theorem}\label{thm:H-UCB_superior}
H-UCB is replication-proof.
\end{theorem}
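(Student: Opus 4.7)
The plan is to exhibit a dominant strategy that registers no replica, from which replication-proofness follows by Definition~\ref{def:proof}. A natural candidate is $\set{S}^*_i := \{a^\star_i\}$ with $a^\star_i \in \argmax_{a \in \set{O}_i} \mu(a)$: under this choice, whenever Phase~1 selects agent~$i$, Phase~2 is forced to play $a^\star_i$, which yields the largest possible per-selection expected reward $\mu(a^\star_i)$ for agent~$i$. I would fix $\set{S}_{-i}$ and an arbitrary alternative $\set{S}'_i$, compare two H-UCB processes running under $(\set{S}^*_i, \set{S}_{-i})$ and $(\set{S}'_i, \set{S}_{-i})$, and establish $u_i(\set{S}^*_i, \set{S}_{-i}) \ge u_i(\set{S}'_i, \set{S}_{-i})$.

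The crux is a coupling of reward streams across the two processes such that at every round the Phase~1 agent index $R(i) + \sqrt{2 \ln t / N(i)}$ of agent~$i$ is at least as large in Process~A (running under $\set{S}^*_i$) as in Process~B (running under $\set{S}'_i$). I would share identical reward streams for all arms in $\set{S}_{-i}$ and align the streams of $a^\star_i$ and its replicas across both processes, while supplying independent streams for the remaining arms of $\set{S}'_i$. Under this coupling, the empirical mean $R(i)$ in Process~A is the average of pure $\mu(a^\star_i)$-samples, whereas $R(i)$ in Process~B is an empirical average contaminated by $\Omega(\ln N(i))$ forced samples from arms of strictly smaller mean, since Phase~2's internal UCB mandates logarithmic exploration of every arm registered by agent~$i$. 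A standard concentration argument then yields stochastic dominance of Process~A's $R(i)$ over Process~B's. Propagating this inductively through Phase~1 shows agent~$i$ is selected at least as often in Process~A; together with the observation that every Phase~2 pull of agent~$i$ in Process~A yields expected reward exactly $\mu(a^\star_i)$ while the corresponding pull in Process~B yields at most $\mu(a^\star_i)$, summing over rounds gives $u_i(\set{S}^*_i, \set{S}_{-i}) \ge u_i(\set{S}'_i, \set{S}_{-i})$.

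The hard part will be the induction step, because Phase~1 is a competitive selection among agents: raising agent~$i$'s index in one round changes every other agent's selection counts in subsequent rounds, so the trajectories of the two processes necessarily diverge. I expect to have to track the full joint state $\{(R(j), N(j))\}_{j \in \set{N}}$ of Phase~1 and invoke a monotonicity argument showing that whenever Phase~1 shifts a selection from another agent toward agent~$i$, the resulting drift of indices can only further widen the gap in agent~$i$'s favor in later rounds, thereby preserving the inductive hypothesis. Handling ties in the $\argmax$ of Phase~1 and the deterministic nature of the UCB selection rule is where the bulk of the technical effort will lie.
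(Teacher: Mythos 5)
Your high-level plan coincides with the paper's: both identify the singleton best-arm strategy $\{a^\star_i\}$ as the dominant non-replicating strategy, both reduce the claim to showing that agent $i$'s selection count under $\{a^\star_i\}$ stochastically dominates its count under any alternative $\set{S}'_i$, and both proceed by induction over rounds. However, two steps in your execution have genuine gaps. First, ``a standard concentration argument then yields stochastic dominance'' is a category error: concentration inequalities give high-probability bounds, not first-order stochastic dominance, and a high-probability comparison of the two indices is not enough to order the selection indicators in distribution. What actually delivers the needed ordering (and what the paper uses) is exact and per-sample: conditioned on agent $i$ having been selected $m$ times, each of the $m$ reward samples under $\set{S}'_i$ is drawn from a Bernoulli distribution stochastically dominated by that of $a^\star_i$, so the conditional empirical mean $\bar{r}_{i,m}$ is dominated outright (dominance is preserved under sums of independent variables). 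No quantitative $\Omega(\ln N(i))$ accounting of Phase~2's forced exploration is needed, and indeed your phrasing quietly assumes the extra arms have strictly smaller means, which need not hold.

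Second, and more seriously, the monotonicity you propose for the induction step is false as stated. Shifting a Phase~1 selection toward agent $i$ increases $N(i)$ and therefore \emph{decreases} its exploration bonus $\sqrt{2\ln t/N(i)}$ while leaving the displaced agent's bonus larger, so pathwise the index gap can narrow or reverse in later rounds; a monotone coupling of full trajectories does not go through. The paper sidesteps exactly this obstacle by arguing in distribution rather than pathwise: it conditions on agent $i$'s own sufficient statistics $(N_{\set{S}_i,t'}, \bar{r}_{i,m})$, observes that the Phase~1 selection probability given these statistics is the same function in both processes (the other agents' conditional dynamics are identically distributed) and is monotone in $\bar{r}$, and then combines the conditional dominance of $\bar{r}$ with the induction hypothesis on the counts to order $\PROB[I_{t+1}\in\set{S}_i]$ directly. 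Unless you replace your trajectory-level monotonicity claim with a conditioning argument of this kind, the induction step does not close. As a side remark, the paper also proves a stronger statement---dominance of the whole discounted revenue distribution, hence dominance for every non-decreasing utility---whereas your argument only targets the expected reward; this is not a gap for the theorem as stated, but it is what the stochastic-dominance machinery is buying the authors.
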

\begin{algorithm} 
\SetAlgoLined
Initialize $R(i), N(i), r(a)$ and $n(a)$ at zero  
for all $i \in \set{N}$ and $a\in \cup_{i \in \set{N}} \set{S}_i$;
 \For{$t=1,2,\ldots$}{
  \tcp{Phase 1 - agent selection}
  \eIf{Unexplored agent exists}{ 
    Pick an unexplored $\hat{i}\in \{i \in \set{N}: N(i) = 0\} $; 
  }{
    Pick $\hat{i} = \argmax_{i \in \set{N}} \left\{R(i) + \sqrt{\frac{2 \ln t}{N(i)}} \right\}$;
  }
  
  \tcp{Phase 2 - arm selection}
  \eIf{Unexplored arm exists in $\set{S}_{\hat{i}}$}{
    Pick an unexplored $\hat{a} \in \{a \in \set{S}_{\hat{i}}: n(a) = 0\} $; 
  }{
    Pick $\hat{a} = \argmax_{a \in \set{S}_{\hat{i}}} 
    \left\{r(a) + \sqrt{\frac{2 \ln N(i)}{n(a)}} \right\}$;
  }
  
  Play $A_t = \hat{a}$ and receive reward $R_t$; \\
  Update statistics:
  $r(\hat{a}) \leftarrow \tfrac{r(\hat{a})n(\hat{a}) + R_t}{n(\hat{a})+1}; ~~
  R(\hat{i}) \leftarrow \tfrac{R(\hat{i})N(\hat{i}) + R_t}{N(\hat{i})+1};$
  
  $n(\hat{a}) \leftarrow n(\hat{a}) + 1; ~~ N(\hat{i}) \leftarrow N(\hat{i}) + 1;$
 }
 \caption{Hierarchical UCB (H-UCB) \label{alg:hucb}}
\end{algorithm}

Intuitively, 
the exploration of the best agent in {\em phase 1}
distributes $O(\ln t)$ budget for each agent regardless of the number of the agent's arms.
In addition, given the asymptotically equal budget per agent,
if the agent replicated its arms, then it would increase
the risk of misindentifying its best arm and reduce its utility. 
Hence, this demotivates the agent from replicating arms.
A formal proof of Theorem~\ref{thm:H-UCB_superior} is provided in Appendix~\ref{sec:proofs}. In the proof, we in fact show  
a stronger notion of replication-proofness
based on stochastical orderings to compare
two different strategies instance-wisely.
Thus, H-UCB is replication-proof even when agent seeks 
gain from replication in randomness.

We remark that under H-UCB,
assuming each agent $i$ knows its best arm 
$a^\star_i \in \argmax_{a\in \set{O}_i} \mu(a)$,
the agent's dominant strategy is to register only $a^\star_i$'s.
On such a equilibrium, 
we obtain a logarithmic bound on the regret of H-UCB:
\begin{theorem}\label{thm:H-UCB_regret}
Given the original arms $\set{O}$ and the corresponding mean rewards $\{\mu(a): a \in \set{O}\}$, 
under an equilibrium $\set{S}$ when each agent $i$ decides 
its strategy knowing its best arm $a^\star_i$, 
$\regret(\set{S}; \textnormal{H-UCB}, T)$ is at most
\begin{align*}
    \sum_{i: \Delta_i > 0 } \frac{8}{\Delta_i}\ln T + 
    \left(1 + \frac{\pi^2}{3}\right) \left(\sum_{i=1}^n \Delta_i \right)\; ,
\end{align*}
where $\Delta_i := \mu^\star - \max_{a \in \set{O}_i} \mu(a)$.
\end{theorem}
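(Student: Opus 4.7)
The plan is to observe that under the equilibrium H-UCB degenerates into a plain UCB1 over $n$ ``meta-arms'' (one per agent), and then invoke the classical UCB1 regret bound of \cite{mab:ucb1}.

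At the equilibrium each agent $i$ registers only its best arm, so $\set{S}_i = \{a^\star_i\}$. Hence Phase~2 of Algorithm~\ref{alg:hucb} is deterministic: whenever Phase~1 picks agent $\hat{i}$, the unique arm $a^\star_{\hat{i}} \in \set{S}_{\hat{i}}$ is played, yielding an i.i.d.\ Bernoulli($\mu^\star_i$) reward with $\mu^\star_i := \mu(a^\star_i) = \max_{a \in \set{O}_i}\mu(a)$. Consequently $R(i)$ is simply the empirical mean of $N(i)$ i.i.d.\ draws from Bernoulli($\mu^\star_i$), and the Phase~1 selection rule $\argmax_{i \in \set{N}}\{R(i) + \sqrt{2\ln t / N(i)}\}$ is exactly UCB1 applied to the $n$ meta-arms with means $\{\mu^\star_i\}_{i\in\set{N}}$. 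The optimal meta-arm has mean $\mu^\star = \max_i \mu^\star_i$, and its suboptimality gaps are precisely the $\Delta_i$ of the theorem.

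Given this reduction, the classical per-arm pull bound for UCB1 \cite{mab:ucb1} gives
\[
    \EXP[N(i)] \;\le\; \frac{8 \ln T}{\Delta_i^2} + 1 + \frac{\pi^2}{3}
    \qquad \text{for every } i \text{ with } \Delta_i > 0.
\]
Since the instantaneous regret when agent $i$ is chosen equals $\Delta_i$ (Phase~2 always plays $a^\star_i$), the cumulative regret equals $\sum_{i} \Delta_i \EXP[N(i)]$, and multiplying the displayed bound by $\Delta_i$ and summing reproduces exactly the stated expression. No step is particularly difficult; the only minor care required is to account for the initial unexplored-agent pulls (at most $n$ of them, contributing total regret at most $\sum_i \Delta_i$), which is already absorbed into the $(1+\pi^2/3)\sum_i \Delta_i$ additive term in the UCB1 analysis.
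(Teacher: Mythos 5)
Your reduction of Phase~1 to UCB1 over $n$ meta-arms, and the resulting bound $\EXP[N(i)] \le \frac{8\ln T}{\Delta_i^2} + 1 + \frac{\pi^2}{3}$, is exactly the quantitative core of the paper's argument, and that part is fine. The gap is in the very first sentence: you \emph{assert} that at equilibrium $\set{S}_i = \{a^\star_i\}$, but the theorem is stated for ``an equilibrium $\set{S}$,'' and characterizing which strategy profiles can actually arise as equilibria is a substantive part of the claim that you have skipped. Two things need to be ruled in or out. First, you must show that no dominant strategy can contain a suboptimal arm; the paper does this by revisiting the stochastic-dominance argument from the proof of Theorem~\ref{thm:H-UCB_superior} and noting that the relevant inequality on the conditional distribution of the agent's empirical mean becomes strict once a suboptimal arm is present, so such a strategy is strictly worse and cannot be dominant. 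Second --- and this is where your assumption is actually false, not merely unproven --- because replication-proofness here is only \emph{weak} dominance, an equilibrium strategy may consist of \emph{multiple copies} of $a^\star_i$ (the paper's own remark says the dominant strategy is ``to register only $a^\star_i$'s,'' plural). For such an $\set{S}''_i = \{o^{(1)}_{i,1},\ldots,o^{(c_{i,1})}_{i,1}\}$, Phase~2 is no longer a deterministic single-arm step, and your claim that the regret equals $\sum_i \Delta_i \EXP[N(i)]$ with $R(i)$ being a clean i.i.d.\ average needs an extra argument. The paper closes this by constructing a proper embedding (via the arm-translation collapsing all copies of $o_{i,1}$ to a single arm) and invoking Claim~\ref{thm:coupling_equal} to conclude that the regret distribution is unchanged; since all copies share the same mean, the conclusion survives, but it does not follow from your ``Phase~2 is deterministic'' reasoning.

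In short: your proof establishes the bound for the particular profile where every agent registers exactly one copy of its best arm, which is the easy third of the paper's proof. To prove the theorem as stated you additionally need (i) the equilibrium characterization excluding suboptimal arms and (ii) the invariance of the regret under duplication of the best arm.
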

The proof of Theorem~\ref{thm:H-UCB_regret}
is given in Appendix~\ref{sec:proofs}, where 
we present a formal characterization of equilibrium under the assumption.
We note that the asymptotic order of regret bound is $O(n \ln T)$,
which scales with the number of agents rather than
the total number of arms, while
in practice, it is much harder 
to replicate agents than arms
since a genuine identification, such as social security number or bank account, 
is required to register an agent.

\subsection{Robust Hierarchical UCB}
\label{sec:robust}

The logarithmic regret of H-UCB in Theorem~\ref{thm:H-UCB_regret}
assumes that every agent
is rational enough to understand no benefit from replicating arms under H-UCB,
and registers its arm truthfully.
Meanwhile, in practice, 
it is still possible to 
have {\em replicators} who falsely duplicate their arms even under a replication-proof
algorithm. 
We assume that each replicator $i$'s strategy $\set{S}_i$
includes at least a constant portion of arms of its best mean reward $\mu^\star_i:= \max_{a \in \set{O}_i} \mu (a)$, i.e.,
there exists a constant $c > 0$ such that
$\min_{i \in \set{N}} \frac{|\{a \in \set{S}_i: \mu^\star_i = \mu(a) \}|}{|\set{S}_i|} > c$.
This is a reasonable assumption since no replicator wants to hide 
its best arm. 
Even when a replicator cannot identify its best arm, the safest choice is to replicate the entire arms equally.


To obtain sublinear regret even with the presence of such replicators, 
modifying H-UCB, we propose robust H-UCB (RH-UCB), described in Algorithm~\ref{alg:rhucb}.
Before explaining the design of RH-UCB, we note that
RH-UCB inherits the hierarchical structure of H-UCB.
Hence, an analog analysis to that of Theorem~\ref{thm:H-UCB_superior}
concludes that:
\begin{theorem}\label{thm:RH-UCB:replicationfree}
RH-UCB is replication-proof.
\end{theorem}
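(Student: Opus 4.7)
The plan is to mirror the coupling argument that underlies Theorem~\ref{thm:H-UCB_superior}, adapted to the modifications that distinguish RH-UCB from H-UCB. The crucial observation is that RH-UCB inherits the hierarchical structure: the Phase~1 agent index depends only on the agent-level statistics $R(i)$, $N(i)$, and the round $t$, none of which contains $|\set{S}_i|$ as an argument. Hence the probability that agent $i$ is selected at round $t$ is a function of the strategy profile only through the empirical rewards produced by each agent's past pulls, not through the cardinality of that agent's registered arm set.

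First I would set up a coupling between two instances of RH-UCB sharing the environment $\set{S}_{-i}$ but run on $\set{S}_i$ and on an arbitrary replication $\set{S}_i'\supsetneq\set{S}_i$. On a single probability space I would allocate one i.i.d.\ Bernoulli reward tape per original arm $o_{i,k}\in\set{O}_i$, reused by every copy $s_{i,k}^{(c)}\in\set{S}_i'$ of that arm, together with a shared tie-breaking randomness. Next I would show, by induction on the round $t$, that under this coupling the agent-level empirical mean $R(i)$ produced by $\set{S}_i'$ is stochastically dominated by the one produced by $\set{S}_i$. The reason is that, conditioned on agent $i$ being selected, RH-UCB's Phase~2 subroutine must allocate its exploration budget over a strictly larger pool under $\set{S}_i'$ than under $\set{S}_i$, so either additional explorations fall on suboptimal copies (strictly lowering $R(i)$ in expectation) or on redundant copies of the best arm (leaving $R(i)$ no larger). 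This is exactly the mechanism exploited for H-UCB, and the additional arm-sampling step in RH-UCB is compatible with the coupling because it acts on the arms of the currently chosen agent only and is monotone in the pool it samples from.

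Feeding the ordering $R(i)$ under $\set{S}_i'$ $\preceq$ $R(i)$ under $\set{S}_i$ back into Phase~1, the agent index of $i$ is pointwise no larger under $\set{S}_i'$, so $N(i)$ at every round $t\le T$ is likewise stochastically dominated; combined with the Phase~2 degradation I obtain $u_i(\set{S}_i',\set{S}_{-i};\textnormal{RH-UCB},T)\le u_i(\set{S}_i,\set{S}_{-i};\textnormal{RH-UCB},T)$. Choosing $\set{S}_i=\set{O}_i$ (no replication) for every $i$ therefore constitutes a dominant strategy in the sense of Definition~\ref{def:dominant}, proving replication-proofness in the sense of Definition~\ref{def:proof}. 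The main obstacle is verifying that the two RH-UCB-specific changes—the enlarged Phase~1 exploration budget of order $\sqrt{T}\ln T$ and the integrated arm-sampling step—do not break the monotonicity of $R(i)$ in $\set{S}_i$ used above; this requires a careful check that the sampling step is an invariant-preserving operation under the shared reward-tape coupling, which is the technical content that must be pushed through to justify the ``analogous analysis'' claim in the text.
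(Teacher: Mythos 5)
Your overall architecture matches the paper's: RH-UCB's Phase~1 index depends only on $(R(i),N(i),t)$, one argues by induction on rounds that the agent-level empirical mean and hence the selection count $N(i)$ are stochastically ordered between two strategies, and then a reduction (the paper's Claim~\ref{thm:dominant_general}) converts count dominance into utility dominance. The paper indeed proves Theorem~\ref{thm:RH-UCB:replicationfree} only by reference to the proof of Theorem~\ref{thm:H-UCB_superior}, so this is the right skeleton.

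However, there is a genuine gap in what you compare and in what you nominate as the dominant strategy. Replication-proofness (Definition~\ref{def:proof}) requires exhibiting a single strategy with no replication that is \emph{dominant}, i.e., weakly beats \emph{every} strategy of support $\set{O}_i$ --- including strategies that hide arms, such as $\{a^\star_i\}$. You only compare a base strategy $\set{S}_i$ against its supersets $\set{S}_i'\supsetneq\set{S}_i$. This yields at best ``for every replication there is a no-replication strategy at least as good,'' which is strictly weaker than the existence of a dominant strategy; and the strategy you then nominate, $\set{S}_i=\set{O}_i$, is generally \emph{not} dominant under RH-UCB: if $\set{O}_i$ contains a suboptimal arm, its presence strictly lowers the agent's empirical mean $R(i)$ relative to registering $\{a^\star_i\}$ alone, hence strictly lowers the Phase-1 index and the selection count (this is exactly the strictness the paper invokes in the proof of Theorem~\ref{thm:H-UCB_regret} to rule such strategies out of the equilibrium). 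The paper's argument avoids your difficulty by taking the candidate dominant strategy to be $\{a^\star_i\}$ and comparing it against an \emph{arbitrary} $\set{S}_i'$: since the best arm's Bernoulli reward stochastically dominates that of every arm in $\set{S}_i'$, each conditional draw --- and therefore the empirical mean $\bar r_{i,m}$ after $m$ selections --- is dominated term by term (inequality~\eqref{thm:dominance_tcount:ineq2}), with no need to reason about how Phase~2's UCB or the subsampling redistributes pulls. Your asserted monotonicity of $R(i)$ under adding arms to an \emph{arbitrary} base strategy (``extra exploration falls on suboptimal or redundant copies'') is itself nontrivial and unproved: adding a copy of the best arm to a mixed pool does not obviously leave the finite-time empirical mean stochastically no larger. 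To repair the proof, replace the comparison class by $\{a^\star_i\}$ versus arbitrary $\set{S}_i'$ and use the arm-level stochastic dominance $\mu(a^\star_i)\ge\mu(a)$ as the engine of the induction; the RH-UCB-specific modifications (larger Phase-1 bonus, subsampling of $\set{B}_i$) are then harmless exactly because they never consult $|\set{S}_i|$ in Phase~1 and because the singleton $\{a^\star_i\}$ is always fully subsampled.
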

We now explain rationales behind RH-UCB,
where we append a modification to each phase of H-UCB.
In {\tt Phase 2}, 
for each agent $i$, 
the risk of misidentifying the best arm in strategy $\set{S}_i$
increases with the size of $\set{S}_i$.
Hence, as a part of bounding the risk in {\tt Phase 2}, 
RH-UCB begins with
$\set{B}_i$'s from subsampling each $\set{S}_i$
uniformly at random 
without replacement
such that
the cardinality of subset $\set{B}_i$
is at most $L  \ln T$, where $L > 0$ is a hyperparameter.
The choice of size bound depends on $T$ for analytical simplicity,
while our analysis can be extended to an online subsampling with increasing size bound $L  \ln t$.
Given the subsampled $\set{B}_i$'s, in {\tt Phase 2}, 
we use the same UCB1 for $\set{B}_{\hat{i}}$.
Although the number of arms in $\set{B}_i$ has the upper bound $L  \ln T$,
but it is scaling in $T$.
Hence, the exploration budget $O(\ln T)$ distributed over agents 
in H-UCB may be insufficient to identify each agent's best arm at a desired confidence.
For this reason, we increase confidence interval of {\tt Phase 1}
from $\sqrt{\frac{\ln t}{N(i)}}$ to $\sqrt{\frac{\sqrt{t} \ln t}{N(i)}}$.

\begin{algorithm} 
\SetAlgoLined

Initialize $R(i), N(i), r(a)$ and $n(a)$ at zero  
for all $i \in \set{N}$ and $a\in \cup_{i \in \set{N}} \set{S}_i$;

Subsample $\set{B}_i$ of size $\min\{|\set{S}_i|, L \ln T \}$ from $\set{S}_i$;  \\
 \For{$t=1,2,\ldots$}{
  \tcp{Phase 1 - agent selection}
  \eIf{Unexplored agent exists}{ 
    Pick an unexplored $\hat{i}\in \{i \in \set{N}: N(i) = 0\} $; 
  }{
    Pick $\hat{i} = \argmax_{i \in \set{N}} \left\{R(i) + \sqrt{\frac{\sqrt{t}\ln t}{N(i)}} \right\}$;
  }
  
  \tcp{Phase 2 - arm selection}
  \eIf{Unexplored arm exists in $\set{B}_{\hat{i}}$}{
    Pick an unexplored $\hat{a} \in \{a \in \set{B}_{\hat{i}}: n(a) = 0\} $; 
  }{
    Pick $\hat{a} = \argmax_{a \in \set{B}_{\hat{i}}} 
    \left\{r(a) + \sqrt{\frac{2 \ln N(i)}{n(a)}} \right\}$;
  }
  
  Play $A_t = \hat{a}$ and receive reward $R_t$; \\
  Update statistics:
  $r(\hat{a}) \leftarrow \tfrac{r(\hat{a})n(\hat{a}) + R_t}{n(\hat{a})+1}; ~~
  R(\hat{i}) \leftarrow \tfrac{R(\hat{i})N(\hat{i}) + R_t}{N(\hat{i})+1};$
  $n(\hat{a}) \leftarrow n(\hat{a}) + 1; ~~ N(\hat{i}) \leftarrow N(\hat{i}) + 1;$
 }
 \caption{Robust H-UCB (RH-UCB)\label{alg:rhucb}}
\end{algorithm}

For ease of exposition, we assume that there exists a unique agent $i^\star$ who possesses the optimal arm\footnote{Indeed, multiple optimal agents only reduce the regret bound, since the constant $C$  in our regret bound $O(C\sqrt{T}\ln T)$ will decrease and $O(\ln^2T)$ appear instead.}, and present the following regret bound for RH-UCB:
\begin{theorem}\label{thm:RH-UCB:regret}
Given the original arms $\set{O}$ and $\{\mu(a): a \in \set{O}\}$, under any $\set{S}$ given that all the agents are either strategic or replicator, if $L \geq 1/c$, then $\regret(\set{S}; \textnormal{RH-UCB}, T)$ is at most
\begin{align*}
\sum_{i \in \set{N}}
\sqrt{T}\ln T 
\bigl(
1 + \frac{4}{\Delta^2_i}  + 
\sum_{\substack{a\in \set{O}_{i^\star} \\ \delta_{i^\star,a}>0}} \frac{50L}{\delta_{i^\star, a}} 
&+ 
\sum_{\substack{a\in \set{O}_{i} \\ \delta_{i,a}>0}} \frac{114L}{\delta_{i, a}}
\bigr)+ O(\ln^5 T)\;,\nonumber
\end{align*}
where $\delta_{i,a} =
\mu^\star_i - \mu(a)$.
\end{theorem}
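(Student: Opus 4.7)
The plan is to combine a concentration argument for the random subsampling step with two UCB analyses---one for Phase~1 (agent selection with the enlarged bonus $\sqrt{\sqrt{t}\ln t/N(i)}$) and one for Phase~2 (standard UCB1 within $\set{B}_{\hat i}$)---and then to control the interaction between the two hierarchical levels. First I introduce the good event $\set{G}$ that for every agent $i$, the subsample $\set{B}_i$ contains at least one copy of an optimal arm of that agent, i.e.\ an arm with mean $\mu^\star_i$. For strategic agents this holds deterministically since (by Theorem~\ref{thm:H-UCB_superior} applied to RH-UCB via Theorem~\ref{thm:RH-UCB:replicationfree}) their equilibrium strategy registers only a best arm, so $\set{B}_i=\set{S}_i$. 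For replicators, the assumption that at least a fraction $c$ of the arms in $\set{S}_i$ are copies of a best arm, together with $L\ge 1/c$, gives $\Pr[\set{B}_i\text{ misses every best copy}]\le (1-c)^{L\ln T}\le T^{-cL}\le T^{-1}$ by an elementary sampling-without-replacement calculation. A union bound yields $\Pr[\set{G}^c]\le n/T$, so the contribution of the bad event to the total regret is $O(n)$, absorbed into the lower-order term.

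Next, under $\set{G}$, I would analyze Phase~2 by conditioning on $N(i)$. Phase~2 runs UCB1 over $\set{B}_i$ with $|\set{B}_i|\le L\ln T$, so the standard UCB1 bound gives $E[n(s)\mid N(i)]\le 8\ln N(i)/\delta_{i,s}^2+O(1)$ for each suboptimal copy $s\in\set{B}_i$. Summing the instantaneous gaps and using $N(i)\le T$, the Phase~2 cost within agent $i$ is at most $\sum_{s\in\set{B}_i,\,\delta_{i,s}>0} 8\ln T/\delta_{i,s}$. Mapping copies back to original arms via the subsampling distribution (each original arm $a\in\set{O}_i$ is represented by at most $L\ln T$ copies in $\set{B}_i$), this yields a contribution of $\sum_{a\in\set{O}_i,\,\delta_{i,a}>0} O(L\ln T/\delta_{i,a})\cdot$(factor from $N(i)$), which I then inflate to the uniform $\sqrt{T}\ln T$ scaling used in the stated bound.

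For Phase~1 I follow the UCB1 template with the enlarged bonus $c_t(i)=\sqrt{\sqrt{t}\ln t/N(i)}$. The argument is that for each suboptimal agent $i$, as soon as $N(i)\ge 4\sqrt{t}\ln t/\Delta_i^2$ the bonus drops below $\Delta_i/2$, and a Hoeffding/Azuma concentration on $R(i)$ and $R(i^\star)$ forces the UCB index of $i$ below that of $i^\star$, so that agent $i$ is no longer chosen. This yields $E[N(i)]\le 4\sqrt{T}\ln T/\Delta_i^2+O(1)$ and hence Phase~1 regret $\sum_{i\neq i^\star}\Delta_i\cdot E[N(i)]=\sum_{i\neq i^\star} 4\sqrt{T}\ln T/\Delta_i$. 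The nontrivial ingredient is that $R(i)$ is not an i.i.d.\ average: the arm chosen within $\set{B}_i$ evolves with the Phase~2 UCB. I handle this by noting that every arm in $\set{B}_i$ has mean at most $\mu^\star_i$, so $R(i)$ is stochastically dominated by an i.i.d.\ Bernoulli average of mean $\mu^\star_i$ up to a martingale correction, and that under $\set{G}$ the Phase~2 regret within agent $i^\star$ is only $O(L\ln^2 T)$, whence $R(i^\star)$ concentrates around $\mu^\star$ once $N(i^\star)$ exceeds a polylogarithmic burn-in. The burn-in accumulates into the $O(\ln^5 T)$ term.

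The main obstacle, and the reason for the enlargement of the Phase~1 bonus from $\sqrt{\ln t/N(i)}$ (as in H-UCB) to $\sqrt{\sqrt{t}\ln t/N(i)}$, is precisely this decoupling of the two hierarchical levels: the Phase~1 arm sees a non-stationary reward stream driven by the still-learning Phase~2, and the bias of $R(i)$ towards $\mu^\star_i$ only emerges after $\set{B}_i$'s best arm has been sufficiently identified. The enlarged bonus buys enough slack so that this transient bias is dominated uniformly in $t$. Collecting the contributions from the three steps above---bad-event cost $O(n)$, Phase~1 regret $\sum_{i\neq i^\star}4\sqrt{T}\ln T/\Delta_i$, Phase~2 regret $\sum_{i}\sqrt{T}\ln T\cdot\bigl(\sum_{a\in\set{O}_{i^\star}}50L/\delta_{i^\star,a}+\sum_{a\in\set{O}_i}114L/\delta_{i,a}\bigr)$, and burn-in $O(\ln^5 T)$---and writing the Phase~1 term uniformly as $4\sqrt{T}\ln T/\Delta_i^2$ per agent yields exactly the claimed bound.
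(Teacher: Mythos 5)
Your skeleton matches the paper's: the good event that each $\set{B}_i$ retains a best copy (via $(1-c)^{L\ln T}\le 1/T$ when $L\ge 1/c$), a standard UCB1 analysis of {\tt Phase 2} inside $\set{B}_i$, and a {\tt Phase 1} argument that a suboptimal agent stops being chosen once $N(i)\ge 4\sqrt{t}\ln t/\Delta_i^2$ while $N(i^\star)\ge\sqrt{t}\ln t$. The genuine gap is in how you control the agent-level statistic $R(i)$. You assert that the non-stationarity of $R(i)$ --- the fact that {\tt Phase 2} is still playing suboptimal copies --- contributes only a ``polylogarithmic burn-in'' absorbed into $O(\ln^5 T)$, and you separately ``inflate'' the per-agent {\tt Phase 2} cost $O(L\ln^2 T/\delta)$ to the $\sqrt{T}\ln T$ scale without justification. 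Neither step is correct. The per-round deviation of $R(i)$ below $\mu^\star_i$ caused by {\tt Phase 2}'s transient is $\sum_{a}\delta_{i,a}N_{a,t}/N_{i,t}$, a sum of selection counts for which the only readily available tail bound is Markov's inequality against its conditional expectation $O(L\ln T\ln N_{i,t}/\delta_{i,a})$. At round $t$ this yields a failure probability of order $L\ln T\ln N_{i,t}\big/\big(\delta_{i,a}\sqrt{N_{i,t}\sqrt{t}\ln t}\big)$, and summing over $t$ with $N_{i,t}\gtrsim\sqrt{t}\ln t$ gives $\Theta(L\sqrt{T}\ln T/\delta_{i,a})$ --- a leading-order term, not a burn-in. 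This is precisely the origin of the $50L/\delta_{i^\star,a}$ and $114L/\delta_{i,a}$ coefficients in the statement (the paper's Lemma~\ref{thm:H-UCBgeneral:cl1} combined with the integral test~\eqref{thm:H-UCBgeneral:ineq7}), whereas the $O(\ln^5 T)$ term comes from summing the exponential Chernoff--Hoeffding tails $2L\ln T\,e^{-\sqrt{t}\ln t/(2L^2\ln^2 T)}$ over $t$ as in~\eqref{thm:H-UCBgeneral:ineq8}. Your accounting therefore attributes the dominant terms to the wrong mechanism and leaves the step that actually produces them unproved.

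A secondary issue: your stochastic-dominance device (every arm in $\set{B}_i$ has mean at most $\mu^\star_i$) only controls the \emph{upper} tail of $R(i)$, which suffices for suboptimal agents. For $i^\star$ you need the \emph{lower} tail of $R(i^\star)$, and that is exactly the case where the internal-regret deviation cannot be waved away; the decomposition of $\big|N_{i,t}R_{i,t}-N_{i,t}\mu^\star_i\big|$ into per-copy sampling noise plus $\sum_a\delta_{i,a}N_{a,t}$, with Chernoff--Hoeffding on the first piece and Markov on the second, is the missing ingredient you would need to make the argument go through.
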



In Appendix~\ref{sec:proofs}, we provide not only the proof of Theorem~\ref{thm:RH-UCB:regret}
but also a further discussion on each term in the regret bound.
Even with the replicators,
the regret of RH-UCB is asymptotically $O(\sqrt{T} \ln T)$
for any given configuration of the original arms.
Hence, RH-UCB is replication-proof but also achieves sublinear. 
The majority of regret $O(\sqrt{T} \ln T)$ is from the probability 
of misidentifying each agent $i$'s best arm among subsampled $\set{B}_i$
of size $O(\ln T)$.
We note that $O(\sqrt{T}\ln T)$ amount of agent exploration indeed optimizes the order of regret bound in our analysis.
For any $1$-dimensional increasing function $f$, suppose that {\tt Phase 1}'s agent exploration term is given by $R_i + \sqrt{\frac{f(t)}{N(i)}}$. Then, it can similarly be derived that regret bound becomes $O(f(T)) + O(\frac{T\ln^2T}{f(T)})$, where the optimized regret can be obtained for $f(t) = O(\sqrt{t}\ln t)$ due to the inequality of arithmetic and geometric means.

We note that the regret analysis in Theorem~\ref{thm:RH-UCB:regret}
is problem-dependent as the bound is given as a function of the configuration of original arms. In the following theorem, we further present
a problem-independent regret bound of RH-UCB:
\begin{theorem}
RH-UCB has problem-independent regret upper-bound $O(\frac{n}{\sqrt{c}}T^{\frac{3}{4}}\ln T)$ for sufficiently large~$T$.
\end{theorem}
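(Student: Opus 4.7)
The plan is to convert the problem-dependent bound of Theorem~\ref{thm:RH-UCB:regret} into a problem-independent one via the standard gap-thresholding trick. The $T^{3/4}\ln T$ rate comes out naturally by balancing, via AM-GM, a ``large-gap'' contribution of order $\sqrt{T}\ln T/\epsilon$ (bounding $1/\text{gap} \le 1/\epsilon$ in Theorem~\ref{thm:RH-UCB:regret}) against a ``small-gap'' contribution of order $\epsilon T$ (since the total number of plays is $T$ and each pull of an arm whose gap is at most $\epsilon$ charges at most $\epsilon$ to the regret), with the optimizer $\epsilon \asymp T^{-1/4}$ up to logarithmic/constant factors. Balancing $A/\epsilon + \epsilon T$ by AM-GM yields $2\sqrt{AT}$, and the square root in AM-GM is what will eventually produce the $1/\sqrt{c}$ (rather than $1/c$) in the final bound after substituting $L = \lceil 1/c\rceil$.

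Concretely, I would split the parenthesis in Theorem~\ref{thm:RH-UCB:regret} into its four summands and treat each separately. The constant term contributes $n\sqrt{T}\ln T$ outright, already within target. For the agent-gap term $4/\Delta_i^2$, I would reinterpret $4\sqrt{T}\ln T/\Delta_i^2$ as the standard UCB-style upper bound on the pull count $N(i)$ (which is how it arises in the proof of Theorem~\ref{thm:RH-UCB:regret}), so that its regret contribution $\Delta_i N(i)$ is bounded by $4\sqrt{T}\ln T/\Delta_i$; thresholding agents by $\Delta_i \ge \epsilon$ vs.\ $\Delta_i < \epsilon$ then gives $O(n\sqrt{T}\ln T/\epsilon + \epsilon T)$. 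The dominant contribution comes from the arm-gap sums: since RH-UCB only ever plays arms in the sub-sampled sets $\set{B}_i$ with $|\set{B}_i|\le L\ln T$, at most $O(nL\ln T)$ arms contribute through actual pulls, and thresholding their combined contribution yields a large-gap bound of order $nL^2 \sqrt{T}(\ln T)^2/\epsilon$ together with a small-gap bound $\epsilon T$. Optimizing $\epsilon$ via AM-GM balances the two sides of this dominant term and, after substituting $L=\lceil 1/c\rceil$, produces the claimed $O(nT^{3/4}\ln T/\sqrt c)$. The $O(\ln^5 T)$ residual of Theorem~\ref{thm:RH-UCB:regret} is dominated by $T^{3/4}\ln T$ for sufficiently large $T$, which justifies the qualifier in the theorem statement.

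The main obstacle, in my view, lies in careful bookkeeping of how the factors of $n$ and $L$ combine across the four pieces so that the final constant collapses to exactly $n/\sqrt c$ without introducing extra powers of $n$ or $c$. This requires summing large-gap contributions across agents before applying AM-GM balancing; a naive per-agent balance produces a strictly worse constant. A secondary obstacle is the quadratic dependence $1/\Delta_i^2$ in the agent-gap term: applying thresholding directly to $1/\Delta_i^2$ yields a worse exponent than $T^{3/4}$, so the reinterpretation of $4\sqrt{T}\ln T/\Delta_i^2$ as a pull-count bound and the subsequent multiplication by $\Delta_i$ to recover $1/\Delta_i$-style regret is essential to the $T^{3/4}$ rate.
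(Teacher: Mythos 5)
Your treatment of the agent-gap term is essentially what the paper does: in the proof of Theorem~\ref{thm:RH-UCB:regret} the quantity $4\sqrt{T}\ln T/\Delta_i^2$ is indeed a bound on the pull count $\EXP[N_{i,T}]$, the paper multiplies it by $\Delta_i$ and thresholds agents at $\Delta_i \asymp (16n^2\ln^2 T/T)^{1/4}$, so your remark about the $1/\Delta_i^2$ subtlety is on target. The genuine gap is in the dominant arm-gap sums $\sum_a L/\delta_{i,a}$ and $\sum_a L/\delta_{i^\star,a}$: these cannot be converted by black-box large-gap/small-gap thresholding of the statement of Theorem~\ref{thm:RH-UCB:regret}. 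The reason is that in the underlying proof these terms do not arise as ``gap times pull count'' at all. They enter through Markov's inequality applied to each agent's expected \emph{internal} regret inside the deviation probability $\rho_{i,t}$ of Lemma~\ref{thm:H-UCBgeneral:cl1}; that is, they bound how often an agent's empirical mean strays from $\mu^\star_i$ and causes the wrong agent to be chosen in {\tt Phase 1}. In particular, the term $\sum_{a\in\set{O}_{i^\star}} 50L/\delta_{i^\star,a}$ attached to a suboptimal agent $i$ counts selections of $i$ caused by the \emph{optimal} agent's empirical mean being dragged down by its own near-optimal arms; each such event costs $\Delta_i$, not $\delta_{i^\star,a}$, so ``arms with $\delta < \epsilon$ charge at most $\epsilon$ per pull, hence $\epsilon T$ in total'' is simply false for these gaps (they are internal gaps, possibly of a different agent than the one incurring the regret). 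Your worry about the bookkeeping of $n$ and $L$ is a symptom of this deeper issue: there is no consistent accounting under which the final bound collapses to $n/\sqrt{c}$ by balancing those sums against an $\epsilon T$ term.

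What the paper actually does is re-open the proof of Theorem~\ref{thm:RH-UCB:regret} rather than post-process its conclusion. It replaces the problem-dependent internal-regret bound $\EXP[\sum_a \delta_{i,a}N_{i_a,t}\mid Y_i] \le L\ln T\sum_a O(\ln N_{i,t}/\delta_{i,a})$ of Lemma~\ref{lem:conditional_replicator} by the problem-independent bound $O(\sqrt{K N_{i,t}\ln N_{i,t}\ln T}) + O(K\ln T)$ with $K=1/c$ --- this is where a gap-thresholding at $\delta_{i,a}\asymp\sqrt{K\ln T\ln N_{i,t}/N_{i,t}}$ is legitimately applied, namely to the agent's own UCB1 subroutine on $|\set{B}_i|\le K\ln T$ arms, giving the usual $\sqrt{mN\ln N}$ shape. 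Substituting this into the deviation probability yields a modified $\rho'_{i,t}$ whose dominant term decays like $\sqrt{K}\ln T\cdot t^{-1/4}$, and summing over $t\le T$ produces $\sqrt{K}\,T^{3/4}\ln T = T^{3/4}\ln T/\sqrt{c}$. So both the exponent $3/4$ and the factor $1/\sqrt{c}$ originate from inserting the problem-independent internal regret into the concentration step and re-running the analogue of Lemma~\ref{thm:H-UCBgeneral:cl2}, not from balancing the arm-gap sums of the final bound. To repair your proposal you would need to carry out this substitution and re-summation, and then layer your (correct) agent-level thresholding on top; the residual $O(\ln^5 T)$ and the ``sufficiently large $T$'' qualifier are handled exactly as you say.
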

The detailed terms and proof are presented in Appendix~\ref{sec:proofs},
where we provide a worst-case analysis on regret. We note that
with no replication, 
the optimal asymptotic of problem-independent regret is known to be $O(\sqrt{T})$
\cite{mab:audibert}, while 
RH-UCB has $O(T^{3/4}\ln T)$ even with replicators. 
Hence, the gap between $O(\sqrt{T})$ and $O(T^{3/4}\ln T)$
can be interpreted as an algorithmic cost for being robust and replication-proof.

Note that our regret bound holds for $L \geq 1/c$.
Running RH-UCB with sufficiently large $L$ would guarantee this bound, but it might hurt the algorithm's efficiency since there could be a chance that too much redundant arms are subsampled for each agent. In case when precisely estimating $c$ a priori is difficult, we can instead subsample $\ln^2 T$ arms per agent since sufficiently large $T$ would guarantee $\ln T \geq 1/c$.
Meanwhile, this still requires the principal to determine $T$ in advance. This dependency can be removed by gradually subsampling arms up to $\ln^2 T$ within each agent.
These techniques indeed enable us to derive $O(\frac{n}{c^2}\sqrt{T\ln^3 T})$ regret and $O(\frac{n}{\sqrt{c}}T^{\frac{3}{4}}\ln^{\frac{3}{2}}T)$ problem-independent regret without any prior information on $L$ and $T$, where we put back a detailed description of the algorithm and corresponding analysis in Appendix~\ref{sec:prh-ucb}.

We remark that both the hierarchical structure and the subsampling are necessary to ensure sublinear regret. If there exists no hierarchy, it might work poor when only the suboptimal agents are replicators, but optimal one is not.
Otherwise if subsampling is discarded, any replicators would hurt the system since their optimal arms cannot be identified.
We provide a relevant ablation study in Section~\ref{sec:simulation}.



Finally, it is obvious that H-UCB works better when all the agents only register their optimal arm, since RH-UCB achieves $O(\sqrt{T}\ln T)$-regret in this case.
However, we need to put more exploration in {\tt Phase 1} to ensure that all the arms within the agent are explored sufficiently under the existence of replicators.
Hence, by closely observing behavior of the agents in the system, one may balance the trade-off between robustness and efficency.
We will present a relevant case study in the following section.

\section{Simulation}
\label{sec:simulation}

\begin{figure}[!t]
\centering
\captionsetup{justification=centering}
\subfloat[Reward of $0.5$-agent.]{
\includegraphics[width=0.38\columnwidth]{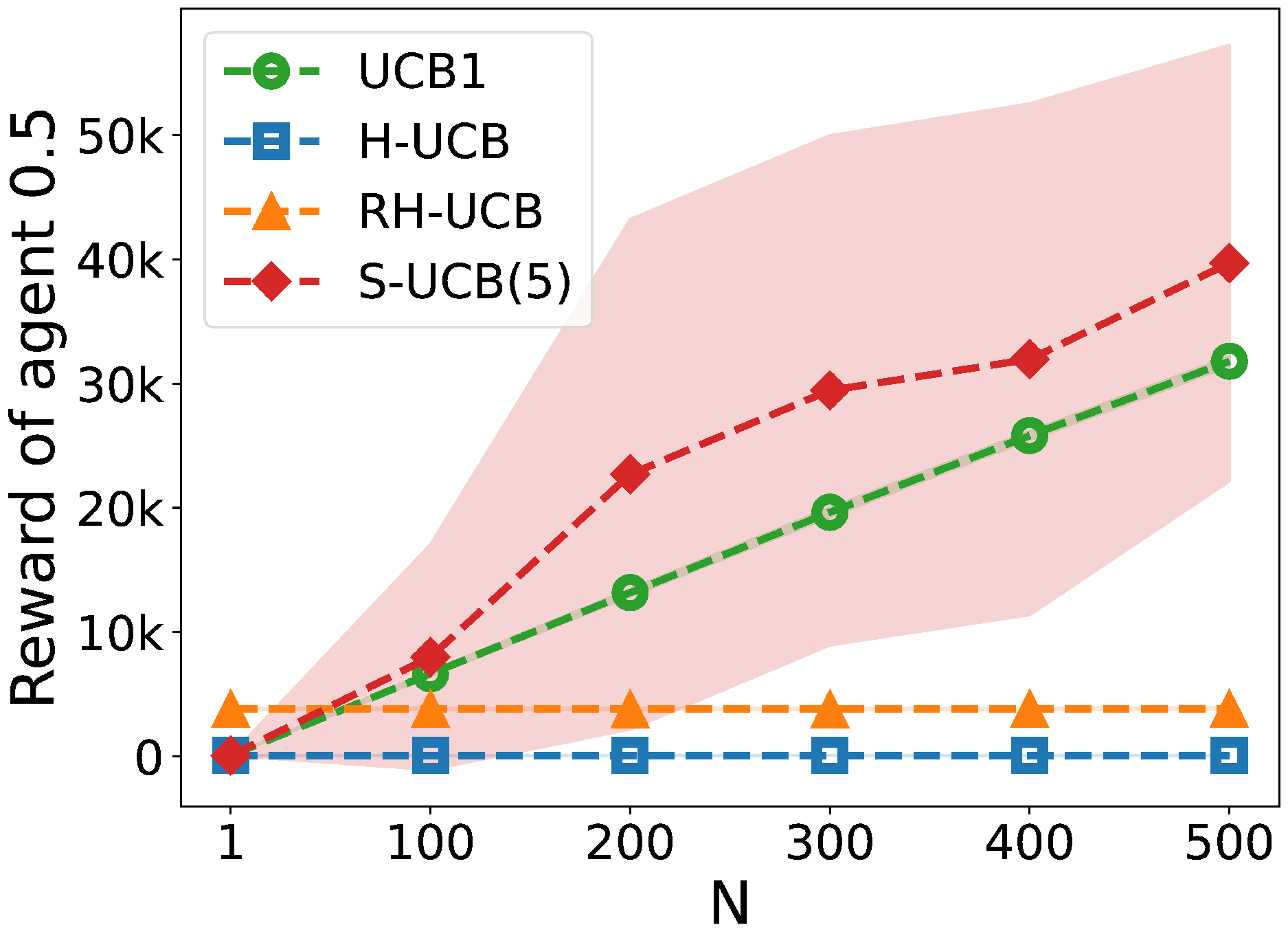}
\label{fig:reward_low}
}
\hspace{0.4cm}
\subfloat[Reward of $0.9$-agent.]{
\includegraphics[width=0.38\columnwidth]{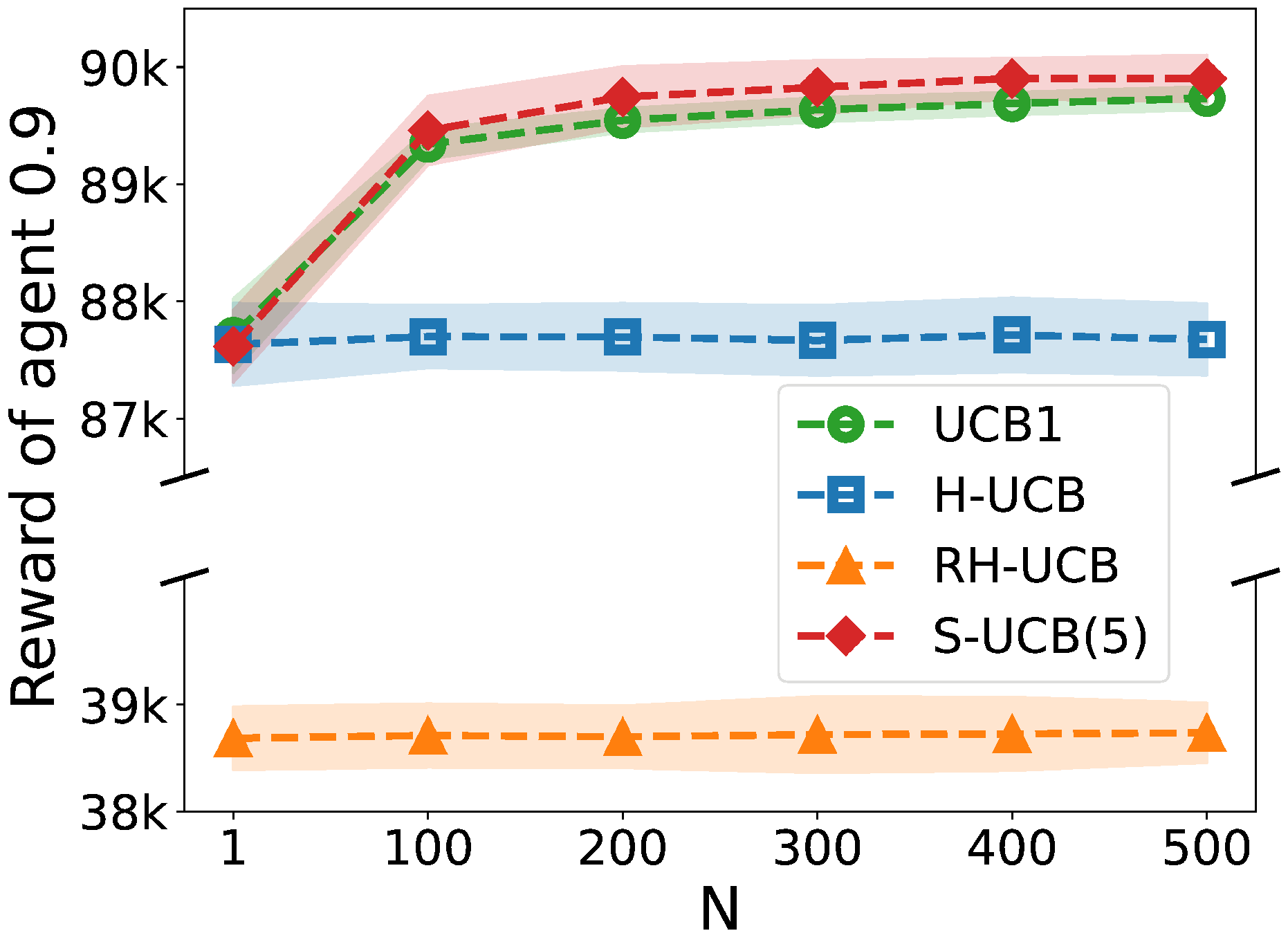}
\label{fig:reward_high}
}
\captionsetup{justification=justified}
\caption{(a) The average revenue and one-sigma interval for $0.5$-agent as it replicates more arms in the single original arm setup.
(b) The revenue of $0.9$-agent in the same setup with (a) but when $0.9$-agent replicates.
}
\label{fig:sim1}
\vspace{-0.2cm}
\end{figure}

We evaluate the proposed algorithms and compare them with existing ones by borrowing well-known open-source library SMPyBandits~\cite{SMPyBandits}.
In addition to UCB1, H-UCB and RH-UCB, we measure the performance of S-UCB($l$) which samples at most $l\ln T$ arms among the entire arms given $T$, and then runs UCB1 on them. That is, S-UCB($l$) is an algorithm where the hierarchy is removed from RH-UCB, discarding the agent selection phase. We decide the subsampling ratio $l$ as the number of total original arms.
Each solid or dashed line indicates the mean value of the treatment over the repetitions, and its shade refers to the one-sigma interval.

Figure~\ref{fig:sim1}\subref{fig:reward_low} and~\ref{fig:sim1}\subref{fig:reward_high} represent the changes of agents' revenue with respect to their strategies in each algorithm.
In both scenarios, we consider $5$ agents each of whom has a single original Bernoulli arm with parameters $0.5, 0.6, 0.7, 0.8, 0.9$, respectively. We denote $x$-agent as the agent whose optimal arm's parameter is $x$. Figure~\ref{fig:sim1}\subref{fig:reward_low} refers to $0.5$-agent's revenue when only the $0.5$-agent replicates its arm among the agents, and Figure~\ref{fig:sim1}\subref{fig:reward_high} indicates that of $0.9$-agent when only the $0.9$-agent replicates.

\paragraph{Vulnerability of non-hierarchical algorithms}
The replicating agents' mean revenue increases in both scenarios under UCB1 and S-UCB, but stays consistent under H-UCB and RH-UCB. The revenue increment of $0.5$-agent is more drastic than that of $0.9$-agent. It is because the replication of the optimal arm does not fundamentally change the asymptotic selection ratio of any sub-optimal arms, but only delays their selections. Note that the $0.5$-agent's revenue increases almost in linear manner which is quite intuitive regarding that every suboptimal arm is sampled $O(\ln T)$ times in UCB1. On the other hand, in S-UCB, the revenue increases in somewhat convex manner due to the nature of probabilistic subsampling.

\begin{figure*}[!t]
\centering
\captionsetup{justification=justified}
\subfloat[Cumulative regret when $0.5$-agent  is a replicator.]{\includegraphics[width=0.3\columnwidth]{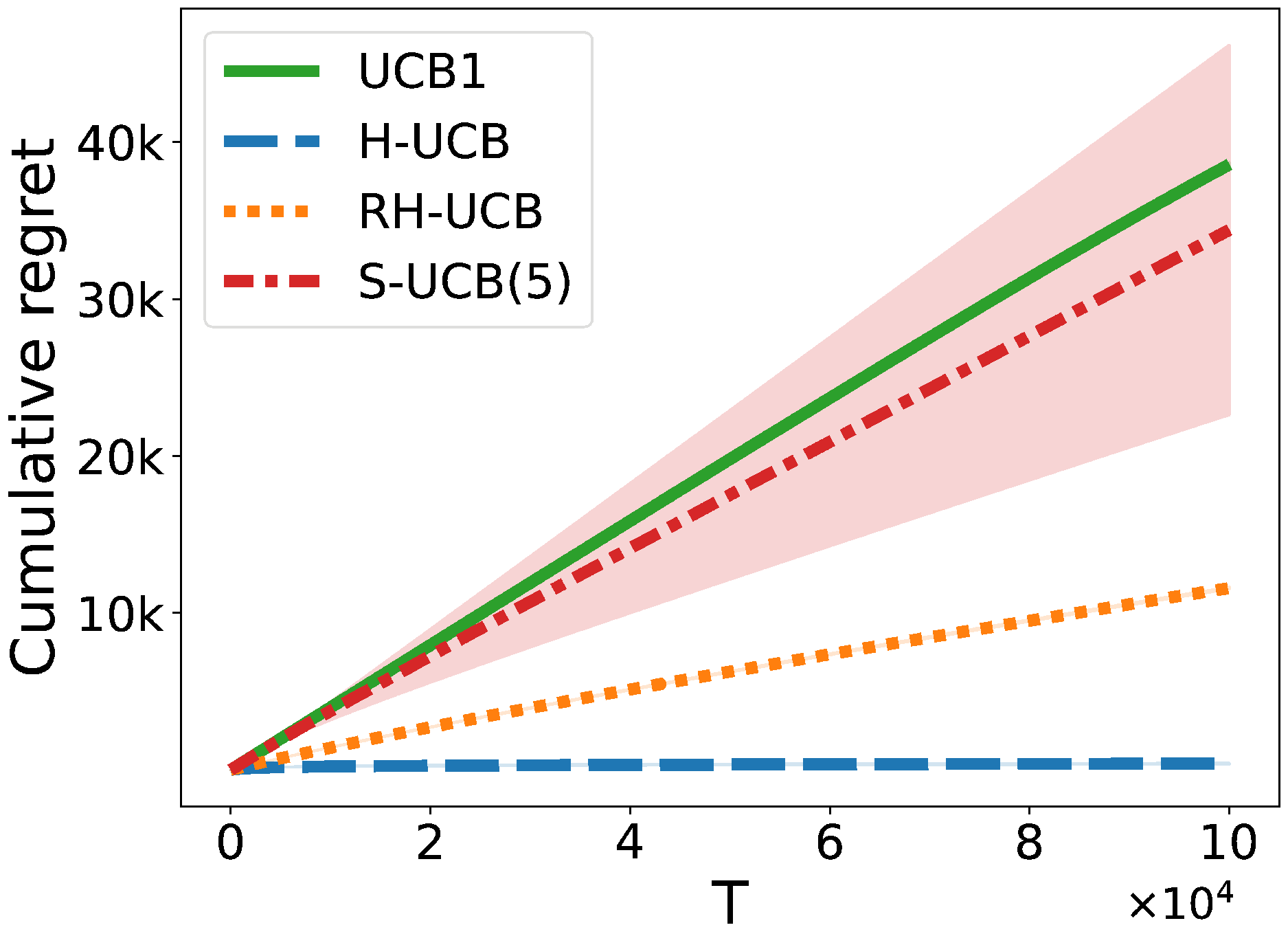}
\label{fig:regret_poor}
}
\hspace{0.1cm}
\subfloat[Cumulative regret when all agents except $0.9$-agent are replicators.]{
\includegraphics[width=0.3\columnwidth]{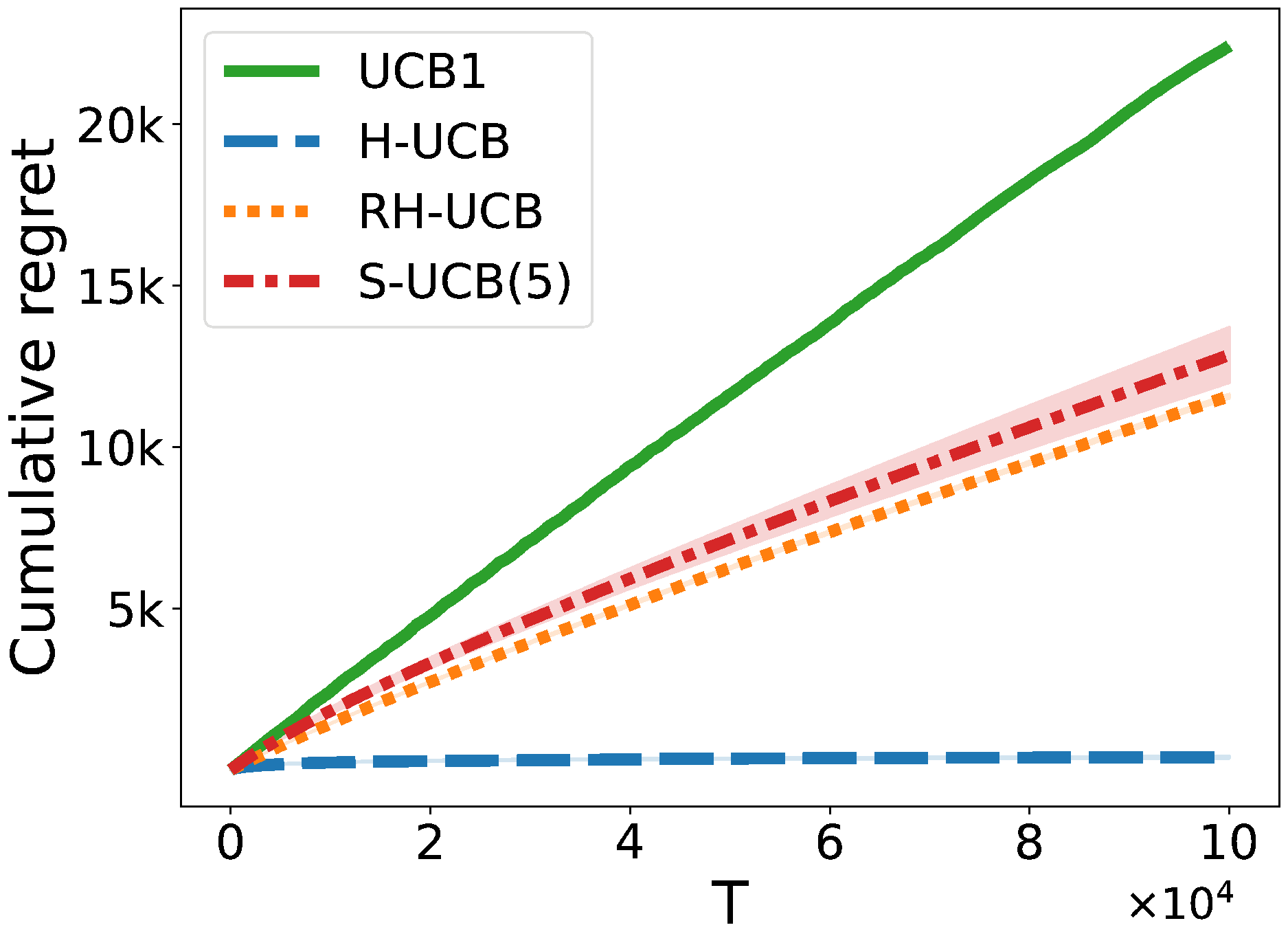}
\label{fig:regret_many}
}
\hspace{0.1cm}
\subfloat[Cumulative regret when $0.5$, $0.6$, $0.7$-agents are replicators, and $0.8$, $0.9$-agents partially replicate.]{
\includegraphics[width=0.3\columnwidth]{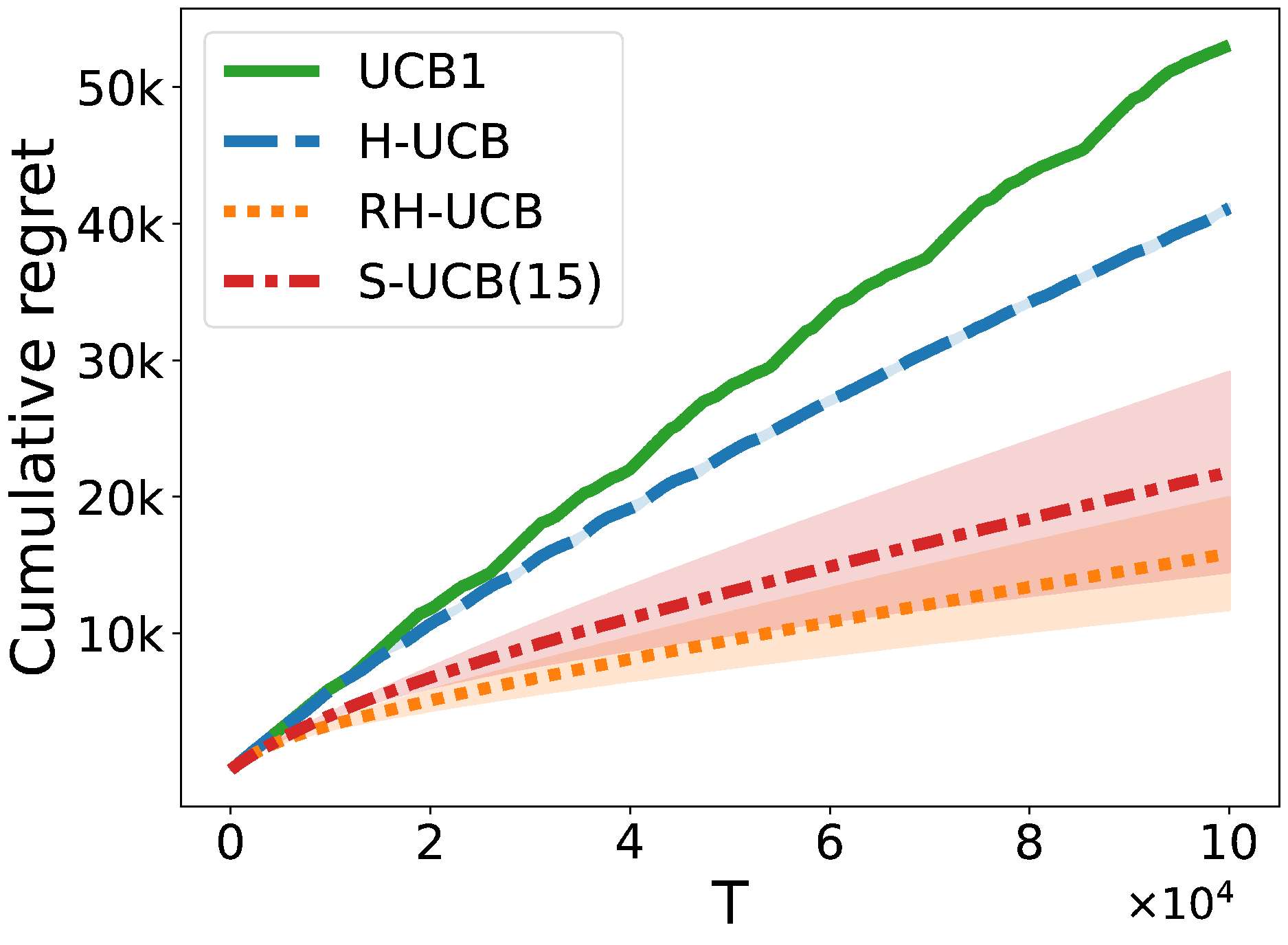}
\label{fig:regret_opt}
}
\caption{(a) The average cumulative regret and one-sigma interval for the principal in each algorithm, when $0.5$-agent replicates its arm $1000$ times in the single original arm setup.
(b) This shares the same setup with (a), but when all the agents except $0.9$-agent replicate their arms $1000$ times.
(c) $0.5$, $0.6$, $0.7$-agents replicate their original arms $1000$ times and $0.8$, $0.9$-agents partially replicate their arms in the multiple original arms setup.
}
\vspace{-0.4cm}
\label{fig:sim2}
\end{figure*}

\paragraph{Hierarchy makes low regret}
Figure~\ref{fig:sim2}\subref{fig:regret_poor} and Figure~\ref{fig:sim2}\subref{fig:regret_many} depicts the principal’s cumulative regret in the single original Bernoulli arm setup with $5$ agents like Figure~\ref{fig:sim1}.
Figure~\ref{fig:sim2}\subref{fig:regret_poor} shows the regret when the $0.5$-agent replicates its arm $1000$ times but the others do not replicate at all. UCB1 and S-UCB suffer from the replication even if there is only one replicator agent. However, the hierarchy-based ones allocate a fixed portion of exploration within each agent. Because that amount of exploration is shared across the agent's arms, the replication does not significantly harm the principal. Since RH-UCB enforces a larger amount of agent exploration, low-reward agents are also explored more than H-UCB. This results in worse performance for RH-UCB than H-UCB.

Besides, in Figure~\ref{fig:sim2}\subref{fig:regret_many}, we consider a similar scenario but where $0.5$, $0.6$, $0.7$, $0.8$-agents replicate their arms 1000 times yet only the $0.9$-agent does not. H-UCB still achieves lower regret than RH-UCB since the optimal agent does not replicate arms at all. Hence we believe that H-UCB might work well if there exists any optimal agent who does not replicate its arm at all. This indeed can be observed in our analysis, where the predominant term $O(L\sqrt{T}\ln T)$ can be removed in the regret bound in Theorem~\ref{thm:RH-UCB:regret} so that only $O(\sqrt{T}\ln T)$ remains.

\paragraph{Primacy of RH-UCB in a world of replicators}
On the other hand, Figure~\ref{fig:sim2}\subref{fig:regret_opt} represents the regime that RH-UCB works more robustly.
We assume that there are $5$ agents with $3$ Bernoulli original arms. Each of those agents has arms with mean $0.2$ and $0.1$ in common, and has a different optimal arm whose parameter is $0.5$, $0.6$, $0.7$, $0.8$, or $0.9$, respectively.
The $0.5$, $0.6$, and $0.7$-agents are replicators and they register $1000$ arms per each original arm.
Besides, we assume that the $0.8$-agent and the $0.9$-agent are {\em partial replicators}. Each of those two agents registers its own optimal arm $10$ times, and the suboptimal arms $100$ times each.
In this case, the power of RH-UCB reveals so that it succeeds to identify the best agent and the corresponding best arm. However, the regret of H-UCB grows almost in linear manner, which is even worse than S-UCB. This is because H-UCB only explores each agent $O(\ln T)$ times, and the existence of many arms in the optimal agent possibly hinders H-UCB to discover the best arm within the agent. This would be the case when all the agents are replicators.


Our intention of designing hierarchically-structured algorithms is solely to demotivate the agents from replicating their arms.
Interestingly, however, our algorithms outperform existing ones in the context of cumulative regret in many regimes.
This might arise because there is an inherent dominance structure among the agents in the setups of our experiments including that of Figure~\ref{fig:sim2}\subref{fig:regret_opt}.
In the setups, the suboptimal arms remain the same across the agents, and the agents are only distinguished by the optimal arm of each agent so that the parameter of the optimal arm fully determines an agent-wise dominance.
Our hierarchical algorithms naturally exploit such inherent dominance structure among the registered arms by grouping them by agent, which possibly increases their performance.
Hence, we believe that H-UCB or RH-UCB would perform better than our theoretical guarantees since many practical scenarios may possess the structure across the agents.

\section{Conclusion}
\label{sec:conclusion}
We analytically study multi-armed bandit algorithms against strategic replication. 
We firstly show that UCB1 admits an infinitely many registration of arms for strategic agents which inevitably induces linear regret.
We mainly observe that it is essential to limit the amount of exploration per agent to demotivate such strategic replication, and present that a simple fair algorithm achieves it while it suffers from linear regret.
We propose H-UCB to catch the both rabbits of sublinear regret and replication-proofness, and present more robust algorithm named RH-UCB which adapts to the infinitely many replicas from replicators.
We conduct numerical experiments to validate our theoretical findings, and provide some practical insights from it.




\clearpage

\bibliography{ref}

\clearpage
\onecolumn
\appendix
\appendixpage


\section{Open problems and future directions}
\label{sec:future}
We discuss some open questions and future research directions in this section.

{\bf Tightening regret bound.}
We show that there exists an algorithm 
that is replication-proof and also has a sublinear regret
regardless of whether agents are strategic or not.
However, it is not studied whether the regret bound we obtain is tight or not.
To tighten the regret analysis, 
an fundamental limit analysis is necessary.
To be specific, we need a regret lower bound
for uniformly good and also replication-proof algorithms.
The fundamental analysis with the notion of replication-proofness
seems challenging.
Hence, we would bypass it
by studying a lower bound for uniformly good algorithms
given the scenario of replicator in our paper
with an argument of change-of-measure, e.g., \cite{mabstructure:combes}.
The relaxed scope of algorithms of interest
would provide a loose lower bound
to be achieved by replication-proof algorithms.
Meanwhile, this can provide a useful insight to improve
the regret upper bound fully exploiting 
the assumption on replicators, i.e., $\min_{i \in \set{N}} \frac{|\{a \in \set{S}_i: \mu^\star_i = \mu(a) \}|}{|\set{S}_i|} > c > 0$,
as in \cite{mabstructure:combes}, the lower bound analysis 
corresponds to an optimal upper bound algorithm.
In fact, the assumption is analog to the bandit problems with
linear or row-rank structures \cite{mab:auer:linear, mab:jun, mab:katariya, mab:kveton}.
However, our bandit problem with replicators is more challenging
since we do not know an embedding (or feature map)
of arms, which are provided in 
the linear and row-rank bandit problems.

{\bf Agent information model.}
We assume that the strategic agents 
have ability to {\em exactly} assess their arms, or at least 
to identify the best one. 
This may be not true in practice.
Hence, one can consider a Bayesian information model where each agent only knows (or believes) the meta distribution of its arm's parameter, but does not know the arm's distribution exactly.
This can be interpreted as a scenario of recommender system
to a population of heterogeneous clients.
Perhaps this meta distribution could be different for each arms within the agent.
In these cases, the agents might be motivated to register all the original arms since it will suffer from linear regret with constant probability if some of the original arms are not registered.
This is somewhat different from our result, since our model encourages the agent to exactly register only the best arm if such information is readily available. 
This difference seems natural since any agent who is oblivious to its own arms would definitely register all the arms if there exists no significant cost in registering more arms.
Nonetheless, in both cases, we believe that our hierarchical algorithms would also demotivate the agents from replicating their arms since replication might not result in an increase of the expected value of the empirical average reward.

{\bf Beyond UCB-based algorithms.}
We mainly reveal that an algorithm needs to restrict its exploration cost for each agent to achieve replication-proofness. For $\varepsilon$-greedy based algorithms~\cite{mab:ucb1}, it is straightforward to examine the exploration cost since the exploration phase and exploitation phase are explicitly separated. Hence, we believe that the hierarchical structure can analogously be applied to construct a replication-proof and sublinear regret $\varepsilon$-greedy based algorithm.
Meanwhile, it becomes nontrivial for algorithms based on probability matching~\cite{mab:ts1, mab:ts2}.
The exploration and exploitation phase are not clearly separated in these algorithms, hence we need to define the exploration cost in rigorous manner so that restricting exploration cost would result in a replication-proofness.
These analysis might be more challenging since we need to perform Bayesian analysis, and even these algorithms might be discouraged since they need information on a parametric structure of reward distributions.
However, we believe that there may exist some valuable insights in these analysis, since different notion of exploration cost could lead to a different point of the equilibrium. 
This might bridge the gap between our notion of replication-proofness and replication-proneness. Indeed, if we construct a logarithmic regret algorithm that is not replication-prone, but only admits a finite number of replications to the strategic agents, then it could be better than RH-UCB for the principal since the finite number of replications only contributes to increase constant term in the regret bound while RH-UCB admits square root regret. To fairly compare the various algorithms in this context, it might require us to define a new criterion to estimate their efficiency, e.g., worst-case regret over the dominant strategies (or over any strategies if no such one exists).

{\bf Beyond stochastic bandit with one-shot game.}
In practice, many recommendation systems are modeled as contextual bandit 
which utilizes features of clients and arms.
Since replicating could still hurt the platform under the contextual bandit algorithm, we believe that we can generalize our results into this setting.
Besides, we can consider a sequential arrival of the agents, which will result in a repeated-game between the principal and the agents.
In this case, the platform might have more controllability on handling agent's strategic behavior since agents are forced to register only certain amount of contents at once, and principal can block the agent's registration in advance at some rounds if they've been registered too much till that time.
On the other hand, agents could also abuse the system more viciously, for example, they can get rid of their registered contents, and register it again as if it is newly created ones, which might hamper the algorithm from identifying the optimal arm.
Finally we remark that our analysis on replication-proofness are mainly based on the fact that a family of Bernoulli distributions equipped with stochastic ordering form a totally ordered set. If such ordering cannot be made within a family of distributions in consideration, then we cannot compare utilities between two strategies with use of stochastic dominance. In this case, there might be a chance that such stochastic dominance between two strategies' utilities may not hold for any non-decreasing function $U$. Nevertheless, we believe that we can still compare the expected utilities, but it may require more sophisticated method to analyze on it.

\section{Preliminaries to The Analysis}
\label{sec:appendixprelim}
Our result on replication-proneness and replication-proofness can be shown in more generalized statements than the ones provided in the main paper.
In this context, before presenting the main proofs, we provide a generalized model for our problem, then introduce some preliminaries required in presenting our results.
We mainly consider two extensions: discount sequence and generalized utility function.
Throughout the appendix, we abuse $I_t(\set{S}_i, \set{S}_{-i})$ or $I_t$ to denote the arm selected at round $t$ instead of $I_t(\set{S}_i, \set{S}_{-i}; \mathfrak{A}, T)$ if the context is clear.

\subsection{Discount sequence}
As discussed, we consider a scenario that each agent $i$ receives the sum of rewards from its arms.
In practice, this corresponds to the scenario where a fixed portion of the rewards may be shared from the principal to the agents.
Then, agent's revenue can precisely be defined as $v_i(\set{S}_i, \set{S}_{-i}) \cdot \varepsilon$. However, we can ignore this constant since we're only interested in comparing the agent's utility between two or more strategies, where the impact of $\varepsilon$ will be wiped out in this case.
Hence, we re-define the following notion of agent's revenue by introducing a discount sequence:
\begin{definition}[Agent's discounted revenue]
Given a principal's algorithm $\mathfrak{A}$, time horizon $T \geq n$, other agents' strategic decision $\set{S}_{-i} = (\set{S}_1, \ldots, \set{S}_{i-1}, \set{S}_{i+1}, \ldots \set{S}_n)$, and discount factor $\gamma = (\gamma_t )_{t=1}^T$,  agent $i$'s revenue is defined as,
\begin{align*}
v_i(\set{S}_i, \set{S}_{-i};\mathfrak{A}, T) = \sum_{t=1}^T \gamma_t R_t\;.
\end{align*}
\end{definition}

The discount sequence captures the agent's various viewpoint in computing long-term cumulative reward, e.g. agents' utility can be discounted over time ($\gamma_t = 1/t, t\in \mathbb{Z}_{\geq0}$) or not ($\gamma_t = 1, t\in \mathbb{Z}_{\geq0}$). This generalized the practical scenario where the advertisers or content providers in online platform are usually myopic than the platform~\cite{repeated:amin, repeated:mohri}.
We define a discount sequence to be {\em proper} if it is non-negative real-valued ($\gamma_t \in \mathbb{R}_{\geq 0}$), non-increasing ($\gamma_t \geq \gamma_{t+1}$), and has at least $n$ non-zero elements.
We denote the family of all proper discount sequences given $T$ be $\Gamma_T$.

\subsection{Agent's expected utility}
Agent's revenue is actually a random variable which can be varying with respect to the randomness of the policy, and the randomness of realized reward.
Hence, there could be difference between the agents in how they assess the risk in the random revenue.
For example, some agents might want to avoid a large variability in the random revenue, but some might love it.
In this context, we consider the following general notion of agent's expected utility, instead of agent's expected revenue.
\begin{definition}[Agent's expected utility]
Given a utility function $U:\mathbb{R}_{\geq 0} \mapsto \mathbb{R}_{\geq 0}$, agent $i$'s expected utility is defined as,
\begin{align*}
u_i(\set{S}_i, \set{S}_{-i}; \mathfrak{A}, T) = \EXP [U(v_i(\set{S}_i, \set{S}_{-i};\mathfrak{A}, T))] \; .
\end{align*}
\end{definition}
Utility function $U$ represents how the agents assess an uncertainty in revenue.
If $U$ is concave, they gradually underestimate the revenue growth as its absolute amount becomes large, hence in this case they prefer a certain amount of revenue rather than taking a risk to get larger revenue even if the expected revenue is smaller. We say that the agents are {\em risk-averse} in this case.
In comparison, we call the agents be {\em risk-seeking} if $U$ is convex, since they are willing to take a risk for having a large amount of revenue with small probability.
If $U$ is an identity function, we say that the agents are {\em risk-neutral} and they only care about the expected amount of revenue.
In this case, expected utility simply reduces to the expected revenue which was provided in our main paper:
\begin{align*}
\sum_{t=1}^T \gamma_t  \sum_{a \in \set{S}_i}  \mu(a) \cdot \EXP \big[ \mathbbm{1}[I_t(\set{S}_i, \set{S}_{-i};\mathfrak{A}, T) = a]\big]\; .
\end{align*}
Although we consider a homogeneous scenario so that every agents share a common utility function, all the results can easily be extended to heterogeneous utility scenario.




\subsection{Further notions and lemmas}
Before getting into the proofs, we introduce some notions to make the explanation more clear.
We denote a pair of multi-armed bandit problem instance and corresponding algorithm be $\set{X} = (\langle T, \set{S}\rangle, \mathfrak{A})$ for time horizon $T$, set of arms $\set{S}$ (with corresponding set of reward distributions $d(\cdot)$), and MAB algorithm $\mathfrak{A}$. We call such $\set{X}$ be {\em MAB process}. 
We often omit $T$ in $\set{X}$ as $\set{X} = (\langle \cdot, \set{S}\rangle, \mathfrak{A})$ if $\mathfrak{A}$ is horizon-independent.
Given a MAB process $\set{X}$, its sample path $r^{\set{X}}_T = (i_t,x_t)_{t\in [T]}$, and any set of arms $\set{A} \subset \set{S}$, we define $\set{A}$-sample path $c_{\set{A}}(r^{\set{X}}_T)$ be $r^{\set{X}}_T$'s longest subsequence that consists of elements contained in $\set{A}$, i.e. 
\begin{align*}
    c_{\set{A}}(r^{\set{X}}_T) =\argmax_{r} \{|r|: r = (i_{n_t},x_{n_t})_{t\in [T]},1\leq n_1\leq n_2\leq \ldots \leq n_T \leq T,i_{n_t}\in\set{A} \} \; .
\end{align*}
We define {\em $t$-count} $N_{\set{A},t}^{\set{X}}$ be the number of times arm $i \in \set{A}$ is pulled in first $t$ rounds in $\set{X}$. We often omit $\set{X}$ in $N_{\set{A},t}^{\set{X}}$ if no confusion arises.
We abuse the notation as $N_{a,t}^{\set{X}}$ when we consider a single arm $a$ instead of a set of arms.

Next, we define a notion of stochastic dominance between two random variables as the following:
\begin{definition}[Stochastic dominance]
Let $X$ and $Y$ be two random variables.
$X$ has stochastic dominance over $Y$
and we denote $X \succeq Y$ if 
$\forall x\in \mathbb{R}$, $\prob{X \leq x} \leq \prob{Y \leq x}$.
If $X \succeq Y$ and there exists some $x$ such that $\prob{X \leq x} < \prob{Y \leq x}$, then we say that $X$ has strict stochastic dominance over $Y$, $X \succ Y$.
\end{definition}
We note that stochastic dominance always induce a higher expectation since $\EXP [X] = \sum_{x=0}^\infty \prob{X > x} \geq \sum_{x=0}^\infty \prob{X > x} = \EXP [Y]$, i.e. it is stronger ordering compared to ordering with expectation.
Our definition of stochastic dominance refers to the first-order stochastic dominance in the literature, and it enables us to analyze the behavior of agents regardless of their behavior against the risk.

We now introduce some well-known properties on stochastic dominance which are useful in proving our main results.
The following proposition provides a connection between stochastic dominance and expected utility:
\begin{proposition}\label{thm:dominance_to_utility}
Given random variables $X$ and $Y$, the following holds:
\begin{compactenum}[1)]
\item $X \succeq Y$ if and only if $\EXP [U(X)] \geq \EXP[U(Y)]$ for any non-decreasing utility function $U(\cdot)$,
\item $X \succ Y$ if and only if $\EXP [U(X)] > \EXP[U(Y)]$ for any strictly increasing utility function $U(\cdot)$.
\end{compactenum}
\end{proposition}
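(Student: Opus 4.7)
The plan is to establish both equivalences via the standard quantile coupling (Skorokhod representation) for the forward directions and indicator-function approximations for the backward directions, then layer in the strictness argument for part 2.

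For part 1, the forward direction $(\Rightarrow)$ uses a coupling argument. Assuming $X \succeq Y$, i.e.\ $F_X(x) \le F_Y(x)$ for all $x$, I would let $U \sim \textnormal{Unif}(0,1)$ and define $\tilde{X} = F_X^{-1}(U)$, $\tilde{Y} = F_Y^{-1}(U)$, which are distributional copies of $X$ and $Y$. Pointwise comparison of the quantile functions gives $\tilde{X} \ge \tilde{Y}$ almost surely, so for any non-decreasing $U(\cdot)$ we have $U(\tilde{X}) \ge U(\tilde{Y})$ a.s., whence $\EXP[U(X)] = \EXP[U(\tilde X)] \ge \EXP[U(\tilde Y)] = \EXP[U(Y)]$. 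For the backward direction $(\Leftarrow)$, I would plug in the non-decreasing family $U_x(t) := \mathbbm{1}[t > x]$ indexed by $x \in \mathbb{R}$; then $\EXP[U_x(X)] = \mathbb{P}[X > x]$ and similarly for $Y$, so the hypothesis yields $\mathbb{P}[X > x] \ge \mathbb{P}[Y > x]$ for every $x$, which is exactly $X \succeq Y$.

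For part 2, the forward direction $(\Rightarrow)$ reuses the quantile coupling. If $X \succ Y$, then $F_X \le F_Y$ with strict inequality at some $x_0$, so by right-continuity $F_X(x) < F_Y(x)$ on a small interval around $x_0$; translating this back through the quantile construction, there is a positive-measure subset of $(0,1)$ on which $\tilde{X} > \tilde{Y}$ strictly. On that set any strictly increasing $U$ satisfies $U(\tilde X) > U(\tilde Y)$, while on the complement we still have $U(\tilde X) \ge U(\tilde Y)$, so integrating gives strict inequality $\EXP[U(X)] > \EXP[U(Y)]$ (assuming integrability, which we may enforce by composing with a bounded strictly increasing map if needed). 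For the backward direction $(\Leftarrow)$, I first derive $X \succeq Y$ by approximating each step function $\mathbbm{1}[t > x_0]$ by a pointwise-increasing sequence of bounded strictly increasing functions, e.g.\ $U_n(t) = \tfrac{1}{\pi}\arctan(n(t-x_0)) + \tfrac{1}{2}$; applying the hypothesis to each $U_n$ and passing to the limit by dominated convergence yields $\mathbb{P}[X > x_0] \ge \mathbb{P}[Y > x_0]$. Then strictness follows by contradiction: if $F_X \equiv F_Y$ we would have $\EXP[U(X)] = \EXP[U(Y)]$ for every $U$, contradicting the hypothesis; combined with $X \succeq Y$ this gives $F_X(x) < F_Y(x)$ at some $x$, i.e.\ $X \succ Y$.

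The main obstacle is the strict-inequality bookkeeping in part 2. In the forward direction the delicate point is arguing that strict inequality $F_X(x_0) < F_Y(x_0)$ at a single point translates into strict inequality of the coupled random variables on a set of positive measure, and that this positive-measure strict gap survives integration against any strictly increasing $U$; I would handle this by invoking right-continuity of CDFs to upgrade pointwise strictness to a small interval of strictness, and by noting that $U$ being strictly increasing means $U(\tilde X) - U(\tilde Y)$ is strictly positive (and nonnegative elsewhere) with positive probability. In the backward direction the subtle issue is that passing to the limit only preserves weak inequality, so the strictness must be extracted separately via the distribution-equality contradiction, which relies on the class of strictly increasing functions being rich enough to distinguish distributions (any bounded strictly increasing continuous function suffices).
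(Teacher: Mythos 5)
Your proof is correct, but it takes a genuinely different route from the paper's. The paper proves only the finite discrete case, and only part 1 in detail: it places $X$ and $Y$ on a common ordered support $\{x_1,\dots,x_k\}$, applies Abel summation to write $\EXP[U(X)]-\EXP[U(Y)]=\sum_{i=1}^{k-1}\bigl(U(x_{i+1})-U(x_i)\bigr)d_i$ with $d_i=\PROB[Y\le x_i]-\PROB[X\le x_i]$, reads the forward implication off $d_i\ge 0$, and obtains the converse by constructing a step utility that isolates a hypothetical negative $d_j$; part 2 is dismissed as derivable ``in similar argument.'' You instead argue for general distributions: quantile (monotone) coupling for the forward directions, indicator test functions $\mathbbm{1}[t>x]$ for the converse of part 1, and smoothed approximants plus a distribution-equality contradiction for the converse of part 2. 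Your version is strictly more general (no finiteness or common-support assumption) and actually supplies the strictness bookkeeping for part 2 that the paper skips, which matters because the strict variant is what the replication-proneness argument invokes. Two small wrinkles, neither fatal: (i) in the converse of part 2, $U_n(t)=\tfrac{1}{\pi}\arctan(n(t-x_0))+\tfrac{1}{2}$ converges to $\tfrac{1}{2}$ at $t=x_0$, so the limit gives $\PROB[X>x_0]+\tfrac{1}{2}\PROB[X=x_0]\ge \PROB[Y>x_0]+\tfrac{1}{2}\PROB[Y=x_0]$ rather than $\PROB[X>x_0]\ge\PROB[Y>x_0]$; since the paper's revenues are discrete and have atoms, you need one extra step (e.g., deduce $F_X(x)+F_X(x^-)\le F_Y(x)+F_Y(x^-)$ for all $x$ and let $x\downarrow x_0$ using right-continuity, or recenter the approximants away from the atom). (ii) In the forward direction of part 2 the strict conclusion genuinely requires $\EXP[U(Y)]<\infty$; composing with a bounded map proves a different statement, not the one about the original $U$. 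This caveat is inherent to the proposition as stated and is moot for the bounded rewards used in the paper, but it is worth acknowledging rather than deferring to a parenthetical.
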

\begin{proof}
We only prove for the discrete case since it can easily be generalized.
Suppose that $X$ and $Y$ are discrete random variables with outcome $\{x_1,x_2, \ldots, x_k\}$ where $x_1 < x_2 < \ldots < x_k$.

Let $d_i = \PROB[Y \leq x_i] - \PROB[X \leq x_i]$ and we have
\begin{align*}
\EXP [U(X)] - \EXP [U(Y)] 
&= \sum_{i=1}^{k} (\PROB[X = x_i] - \PROB[Y = x_i])U(x_i) \nonumber
\\
&= \sum_{i=1}^{k-1}(\PROB[X\leq x_i] - \PROB[Y \leq x_i])(U(x_i) - U(x_{i+1})) + (\PROB[X \leq x_k] - \PROB[Y \leq x_k])u_k
\\
&= \sum_{i=1}^{k-1}(U(x_{i+1}) - U(x_{i}))d_i\; ,
\end{align*}
where the second equation holds from Abel's lemma and the last comes from $d_k = 0$.
It is straightforward that if $d_i \geq 0 $ for $\forall i=1,2,\ldots, k$ then $\EXP[U(X)] \geq \EXP[U(Y)]$ since $U$ is non-decreasing.
Suppose that $\EXP[U(X)] \geq \EXP[U(Y)]$ for any non-decreasing function $U$ and there exists $j$ such that $d_j < 0$. If we consider a utility function $U$ such that $U(x_{i+1}) = U(x_i)$ for $i \neq j$, and $U(x_{j+1}) > U(x_{j})$, then $\EXP [U(X)] - \EXP [U(Y)] = d_j (U(x_{j+1}) - U(x_j)) <0 $ which is a contradiction.
We omit the proof for the second statement since it can easily be derived in similar argument.
\end{proof}

We also note that the stochastic dominance is preserved under summation and multiplication between positive independent random variables:
\begin{proposition}\label{thm:dominance_preserve}
$X_1,X_2,Y_1$ and $Y_2$ are positive random variables such that $X_1$ and $Y_1$ is independent, and $X_2$ and $Y_2$ is independent.
If $X_1 \succeq Y_1$ and $X_2 \succeq Y_2$, then, (i) 
$X_1+X_2 \succeq Y_1+Y_2$ and (ii) $X_1X_2 \succeq Y_1Y_2$.
\end{proposition}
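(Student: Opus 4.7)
The plan is to invoke the coupling characterization of first-order stochastic dominance: $X \succeq Y$ is equivalent to the existence of versions $X', Y'$ on a common probability space with $X' \stackrel{d}{=} X$, $Y' \stackrel{d}{=} Y$, and $X' \geq Y'$ almost surely. Constructing such couplings componentwise reduces both claims to elementary manipulations of a.s. inequalities.

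Concretely, let $U_1, U_2$ be independent uniform random variables on $[0,1]$, and set
\[
X_j' = F_{X_j}^{-1}(U_j), \qquad Y_j' = F_{Y_j}^{-1}(U_j), \qquad j=1,2,
\]
where $F^{-1}$ denotes the generalized inverse CDF. Independence of $U_1, U_2$ yields $X_1' \perp X_2'$ and $Y_1' \perp Y_2'$, so the marginals of interest agree in distribution: $X_1' + X_2' \stackrel{d}{=} X_1 + X_2$, $X_1' X_2' \stackrel{d}{=} X_1 X_2$, and similarly for the $Y$-pair. The hypothesis $X_j \succeq Y_j$ translates to $F_{X_j}^{-1}(u) \geq F_{Y_j}^{-1}(u)$ for every $u \in (0,1)$, so $X_j' \geq Y_j'$ almost surely for $j=1,2$.

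For (i), summing the two a.s.\ inequalities gives $X_1' + X_2' \geq Y_1' + Y_2'$ a.s., which trivially implies stochastic dominance between the corresponding marginals. For (ii), positivity is used crucially in the chain $X_1' X_2' \geq Y_1' X_2' \geq Y_1' Y_2'$ a.s., since multiplication is monotone in a factor only when the other factor is non-negative; the conclusion then follows as before. An alternative, equally clean, approach would be to apply Proposition~\ref{thm:dominance_to_utility} twice by conditioning: fix any non-decreasing $U$, observe that $g(x) := \EXP[U(x + X_2)]$ (resp.\ $\EXP[U(x \cdot X_2)]$) is non-decreasing, and iterate the dominance over each coordinate; but the coupling formulation is more transparent.

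The main subtlety is not in the argument but in parsing the independence hypothesis, which as written appears to be a typo. For $X_1 + X_2$ and $Y_1 + Y_2$ to be well-defined random variables with determined distributions, the natural reading is $X_1 \perp X_2$ and $Y_1 \perp Y_2$. The coupling construction above sidesteps this issue by placing all four variables on a single auxiliary probability space with the required independence structure and the correct marginals, so under this reading no additional obstacle arises.
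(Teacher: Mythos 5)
Your proof takes essentially the same route as the paper's: the paper also invokes the equivalence between stochastic dominance and the existence of a monotone coupling, builds couplings $(\hat{X}_j,\hat{Y}_j)$ for $j=1,2$, and concludes by summing (resp.\ multiplying) the almost-sure inequalities. Your version is somewhat more explicit — you spell out the quantile coupling, the role of independence in matching the joint distributions, and the use of positivity in the product case, all of which the paper leaves implicit — but the underlying argument is identical.
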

\begin{proof}
Since stochastic dominance is equivalent to the existence of monotone coupling~\cite{book:roch2015modern}, we can construct monotone couplings $(\hat{X}_1, \hat{Y}_1)$ and $(\hat{X}_2, \hat{Y}_2)$ of $(X_1, Y_1)$ and $(X_2, Y_2)$, respectively.
Then, we have $X_1+X_2 \sim \hat{X}_1 + \hat{X}_2 \succeq \hat{Y}_1 + \hat{Y}_2 \sim Y_1+Y_2$ and $X_1X_2 \sim \hat{X}_1\hat{X}_2 \succeq \hat{Y}_1\hat{Y}_2 \sim Y_1Y_2$ where $\sim$ refers to the equality in distribution, and it concludes the proof.
\end{proof}

Finally, with use of stochastic dominance, the following theorem represents a sufficient condition for a strategy to be dominant over another strategy for any non-decreasing utility function $U$.
We remark that we often say that a strategic is dominant over another strategy when it induces an expected utility which is at least that of another.
\begin{claim}\label{thm:dominant_general}
Given a principal's algorithm $\mathfrak{A}$, time horizon $T \geq n$, and other agents' strategic decision $\set{S}_{-i}$, suppose that agent $i$'s two strategies $\set{S}_i$ and $\set{S}'_i$ satisfy the followings:

\begin{compactenum}[1)]
\item $a' \succeq a$ for $\forall a' \in \set{S}'_i$ and $\forall a \in \set{S}_i$,
\item $N^{\set{X}'}_{\set{S}'_i,t}$ has stochastic dominance over $N^{\set{X}}_{\set{S}_i, t}$ for any $t \in [T]$
\end{compactenum}
Then, $\set{S}'_i$ is a dominant over $\set{S}_i$ under any non-decreasing utility function $U$ and any proper discount sequence $\gamma \in \Gamma_T$.
\end{claim}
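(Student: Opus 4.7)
The plan is to prove the stronger statement that $v_i(\set{S}'_i,\set{S}_{-i};\mathfrak{A},T) \succeq v_i(\set{S}_i,\set{S}_{-i};\mathfrak{A},T)$, from which the desired expected-utility inequality follows for every non-decreasing $U$ by Proposition~\ref{thm:dominance_to_utility}, and hence $\set{S}'_i$ dominates $\set{S}_i$ under every proper discount sequence $\gamma \in \Gamma_T$.

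First I would rewrite the revenue via Abel summation so that the non-increasing discount structure is exposed as nonnegative weights. Writing $X_t := R_t\,\mathbbm{1}[A_t\in\set{S}_i]$ and $S_t := \sum_{s=1}^t X_s$, with the convention $\gamma_{T+1} := 0$, one obtains
\begin{equation*}
v_i \;=\; \sum_{t=1}^{T}(\gamma_t-\gamma_{t+1})\, S_t,
\end{equation*}
and every weight $\gamma_t - \gamma_{t+1}$ is nonnegative since $\gamma$ is non-increasing and nonnegative. The same identity holds with $S'_t$ for $\set{X}' := (\langle T,(\set{S}'_i,\set{S}_{-i})\rangle,\mathfrak{A})$. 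Hence, if I can put $\set{X}'$ and $\set{X} := (\langle T,(\set{S}_i,\set{S}_{-i})\rangle,\mathfrak{A})$ on a common probability space such that $S'_t \geq S_t$ holds pathwise for every $t$, a nonnegative combination preserves the inequality and yields $v_i(\set{X}') \geq v_i(\set{X})$ almost surely, which implies $v_i(\set{X}') \succeq v_i(\set{X})$.

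Next I would construct this joint coupling in two layers. Layer~(a): condition~(1) says every arm in $\set{S}'_i$ stochastically dominates every arm in $\set{S}_i$, so by the one-dimensional Strassen theorem (used already in Proposition~\ref{thm:dominance_preserve}) each pair admits a monotone coupling; gluing these arm-level couplings, the $k$-th reward agent $i$ collects in $\set{X}'$ can be made at least the $k$-th one collected in $\set{X}$ pathwise. Layer~(b): condition~(2), combined with the nondecreasing unit-step nature of the counting processes, is upgraded to a joint coupling on which $N^{\set{X}'}_{\set{S}'_i,t} \geq N^{\set{X}}_{\set{S}_i,t}$ holds simultaneously for all $t$, equivalently the $k$-th agent-$i$ selection time satisfies $\tau'_k \leq \tau_k$ for every $k$. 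Under the combined coupling, $S'_t = \sum_{k:\tau'_k\leq t} R'_k$ is a sum of at least as many terms as $S_t = \sum_{k:\tau_k\leq t} R_k$, with each corresponding term at least as large, giving $S'_t \geq S_t$ almost surely for every $t$, and the Abel identity then closes the argument.

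The main obstacle is the joint-in-$t$ part of layer~(b). Condition~(2) is a marginal statement for each fixed $t$, whereas the Abel rearrangement needs dominance of the counting processes to hold simultaneously along the whole trajectory. I expect this to be resolved by exploiting that the counts are monotone integer-valued processes with unit increments, so a Strassen/Skorokhod-style inverse-CDF coupling of the selection-time sequences $(\tau'_k)_k$ and $(\tau_k)_k$ preserves the marginal dominance while producing the joint pathwise dominance needed above. Everything else in the argument is routine algebra once this coupling is in hand.
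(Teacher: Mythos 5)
Your skeleton is the same as the paper's: Abel summation turns $v_i$ into a nonnegative combination $\sum_{t=1}^{T}(\gamma_t-\gamma_{t+1})S_t$ of the agent's cumulative process (the paper writes the analogous identity with $\sum_{a\in\set{S}_i}r(a)N^{\set{X}}_{a,t}$, modulo a sign typo), and then one tries to transfer dominance termwise. The genuine gap is exactly the step you flag as the main obstacle, layer~(b): hypothesis~2) is a marginal statement for each fixed $t$, and it does \emph{not} upgrade to a single coupling under which $N^{\set{X}'}_{\set{S}'_i,t}\ge N^{\set{X}}_{\set{S}_i,t}$ holds for all $t$ simultaneously --- not even for unit-increment counting processes, so the monotone/inverse-CDF rescue you propose fails. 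Concretely, take $T\ge 10$, all rewards deterministically equal to $1$ (so 1) is vacuous), and an algorithm under which agent $i$ is selected at rounds $\{1,3\}$ or $\{2,10\}$ with probability $1/2$ each when registering $\set{S}'_i$, and at rounds $\{1,10\}$ or $\{2,3\}$ with probability $1/2$ each when registering $\set{S}_i$. Every marginal $N^{\set{X}'}_{\set{S}'_i,t}$ has the same law as $N^{\set{X}}_{\set{S}_i,t}$, so 2) holds. But any coupling with $N'_{1}\ge N_{1}$ forces $N'_1=N_1$ a.s.\ (equal means), hence matches the path $\{2,10\}$ with $\{2,3\}$, and at $t=3$ dominance fails; no simultaneous coupling exists. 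Worse, the discounted revenues are uniform on $\{\gamma_1+\gamma_3,\ \gamma_2+\gamma_{10}\}$ versus uniform on $\{\gamma_1+\gamma_{10},\ \gamma_2+\gamma_3\}$, i.e.\ the extremes versus the middle values of the same four numbers, which are not stochastically comparable for generic non-increasing $\gamma$. So the conclusion cannot be derived from 1) and 2) alone, and the per-$k$ inverse-CDF coupling of the selection times you invoke does not even produce the correct joint law of each process's time sequence unless the jump times are comonotone within each process.

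To be fair, the paper's own write-up is equally terse at this point: inequality \eqref{thm:dominant_general:ineq3} is justified by Proposition~\ref{thm:dominance_preserve}, which is stated for \emph{independent} summands, yet is applied across the highly dependent family $\{N^{\set{X}}_{a,t}\}_{t\in[T]}$. What actually saves the downstream results is that in every application of this claim the paper has strictly more than the marginal condition 2) in hand: the proper embeddings of Claims~\ref{thm:coupling_equal}--\ref{thm:embedding_to_ordering} and the round-by-round induction in the proof of Theorem~2 construct the two processes on a common probability space in which the selection counts are ordered pathwise at every round simultaneously. The correct fix for your proof (and for the claim itself) is therefore to strengthen hypothesis~2) to that joint statement --- existence of a coupling with $N^{\set{X}'}_{\set{S}'_i,t}\ge N^{\set{X}}_{\set{S}_i,t}$ for all $t\in[T]$ at once --- after which your Abel-summation argument and layer~(a) go through verbatim; it cannot be recovered from the $t$-by-$t$ marginals.
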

\begin{proof}
We denote $r(a)$ be the revenue R.V. of arm $a$. 
The probability that agent $i$'s revenue is at most $x$ can be computed as the following:
\begin{align}
\PROB_{\set{X}} [v_i(\set{S}_i, \set{S}_{-i}; \mathfrak{A}, T) \leq x]
&= \PROB_{\set{X}} [\sum_{t=1}^T \gamma_t r_{I_t(\set{S}_i, \set{S}_{-i}), t} \leq x]
\\
&= \PROB_{\set{X}}\big[ \sum_{t=1}^{T}\sum_{a \in \set{S}_i}  \gamma_t  r(a) \mathbbm{1}[I_t( \set{S}_i, \set{S}_{-i}; \mathfrak{A}, T) = a ] \leq x\big]
\\
&= \PROB_{\set{X}}\big[  \big(\sum_{t=2}^{T} \gamma_t  \sum_{a \in \set{S}_i}r(a) (N^{\set{X}}_{a,t} -N^{\set{X}}_{a,t-1})\big)
+ \gamma_1  \sum_{a \in \set{S}_i}\mu(a)  N^{\set{X}}_{a,1} \leq x\big]
\label{thm:dominant_general:ineq1}
\\
&= \PROB_{\set{X}}\big[ \sum_{t=1}^{T} \sum_{a \in \set{S}_i}r(a) (\gamma_{t+1} - \gamma_t)N^{\set{X}}_{a,t} \leq x\big] \label{thm:dominant_general:ineq2}
\\
&\leq \PROB_{\set{X}'}\big[ \sum_{t=1}^{T} \sum_{a \in \set{S}'_i}r(a) (\gamma_{t+1} - \gamma_t)N^{\set{X}'}_{a,t} \leq x\big] = \PROB_{\set{X}'} [v_i(\set{S}'_i, \set{S}_{-i}; \mathfrak{A}, T) \leq x]\;,
\label{thm:dominant_general:ineq3}
\end{align}
where equation~\eqref{thm:dominant_general:ineq1} holds since $N^{\set{X}}_{a,t} = N^{\set{X}}_{a,t-1} + \mathbbm{1}[I_t=a]$ for $t\geq2$ and $N^{\set{X}}_{a,1} = \mathbbm{1}[I_1=a]$. In equation~\eqref{thm:dominant_general:ineq2} we assume that $\gamma_0 = 0$, and inequality~\eqref{thm:dominant_general:ineq3} follows from the stochastic dominance of $N^{\set{X}'}_{\set{S}'_i,t}$ over $N^{\set{X}}_{\set{S}_i,t}$ and condition 1 in theorem statement applied with~\ref{thm:dominance_preserve}.
Hence we conclude that revenue under $\set{S}'_i$ has stochastic dominance over that under $\set{S}_i$, and by Proposition~\ref{thm:dominance_to_utility}, $u_i(\set{S}'_i, \set{S}_{-i};\mathfrak{A}, T) \geq u_i(\set{S}_i, \set{S}_{-i};\mathfrak{A}, T)$ for any non-decreasing utility $U$.
\end{proof}

{\bf Remark.} It is straightforward to check that $u_i(\set{S}'_i, \set{S}_{-i};\mathfrak{A}, T) > u_i(\set{S}_i, \set{S}_{-i};\mathfrak{A}, T)$ if any of the described inequalities is strict.
For example, if there exists some $x$ and $t$ such that $\PROB [N^{\set{X}}_{a,t} \leq x] < \PROB [N^{\set{X}'}_{a,t} \leq x]$, then the dominance is strict.




\subsection{Embedding}
We now lay out the foundations in defining the embedding from a MAB process to the other MAB process which helps us to compare the agent's expected utility between two strategies.
This will mainly be used at analyzing the replication-proneness.
Given any two MAB processes $\set{X} = (\langle \cdot, \set{S}\rangle, \mathfrak{A})$ and $\set{X}' = (\langle \cdot, \set{S}'\rangle, \mathfrak{A}')$ with two horizon-free algorithms $\mathfrak{A}$ and $\mathfrak{A}'$, we define an arm-translation $\tau:\set{S} \mapsto \set{S}'$ be a mapping function between arms in MAB processes, and let $Im_{\tau}(\set{A})$ be image of $\set{A}$ under $\tau$ for any $\set{A} \subset \set{S}$. Given any sample path $r$, let $L(r)$ be its length, e.g. if $r = (i_1, x_1) \times (i_2, x_2)$ then $L(r) =2$, and $\mathfrak{T}_{\tau}(r)$ be the sequence that replaces each selected arm $a$ at round $t$ in $r$ into $\tau(a)$ for any $t \in [L(r)]$.
Now we define an {\em embedding} of MAB process $\set{X}$ into $\set{X}'$.

\begin{definition}[Embedding]\label{def:coupling}
Given an arm-translation $\tau$, we say that $\set{X}$ is embedded into $\set{X}'$ by $\tau$ if a sample path $r^{\set{X}}_t$ is realized under $\set{X}$, then sample path $r^{\set{X}'}_{t'}$ is realized under $\set{X}'$ such that its $\set{S}_{I}$-sample path is $\mathfrak{T}_{\tau}(r^{\set{X}}_t)$, i.e. $\mathfrak{T}_{\tau}(r^{\set{X}}_{t}) = c_{\set{S}_{I}}(r^{\set{X}'}_{t'})$ for $\set{S}_{I} = Im_{\tau}(\set{S})$.
\end{definition}
Note that given $\tau$, we can construct an embedding of $\set{X}$ into $\set{X}'$ by coupling the sample paths in $\set{X}$ and $\set{X}'$ as the following: at each round in $\set{X}'$, (i) arm $a \in \set{S}' \setminus \set{S}_I$ can be selected in $\set{X}'$, (ii) $a \in \set{S}_I$ can be selected in $\set{X}'$ and one of the arms in $\tau^{-1}(a)$ can be selected in $\set{X}$.
In this scenario, $\set{X}$ move forwards only when $a \in \set{S}_I$ is selected under $\set{X}'$.
Likewise, we can enforce the described mapping between any realization of sample paths regardless of MAB processes $\set{X}$ and $\set{X}'$, and hence we can always embed $\set{X}$ into $\set{X}'$. However, it is not true that such mapped pair of sample paths has the same marginal probability to be realized in corresponding MAB process, hence we introduce the following notion of proper embedding:
\begin{definition}[Proper embedding]\label{def:prop_embedding}
We say that $\set{X}$ is properly embedded into $\set{X}'$ if for any $t\geq 1$, there exists a mapping $\tau:\set{S}\mapsto \set{S}'$ such that given any sample path $r^{\set{X}}_t$ in $\set{X}$, the marginal probability that sample path $r$ is realized under $\set{X}$ whose $\mathfrak{T}_{\tau}$ transformation is $\mathfrak{T}_{\tau}(r^{\set{X}}_t)$, is equal to the summation of marginal probability that sample path $r'$ is realized of which its $\set{S}_{I}$-sample path is $\mathfrak{T}_{\tau}(r^{\set{X}}_t)$, i.e.
\begin{align*}
\PROB_{\set{X}}[\{r:\mathfrak{T}_{\tau}(r) = \mathfrak{T}_{\tau}(r^{\set{X}}_t)\}] = \PROB_{\set{X}'}[\{r':c_{\set{S}_{I}}(r') = \mathfrak{T}_{\tau}(r^{\set{X}}_t)\}]\; .
\end{align*}
\end{definition}

Unlike from embedding, proper embedding does not always exist, but rather depends on the parameters of two MAB process.

\bigskip
{\bf Illustrative examples.}
To give an intuition on when the proper embedding exists and how it can be constructed, we provide some concrete examples.

{\bf Example 1.}
Suppose that we have two MAB process $\set{X} = (\langle \cdot, \{1,2\}\rangle, \mathfrak{A})$ and $\set{X}' = (\langle \cdot, \{3,4\}\rangle, \mathfrak{A}')$ where $\mathfrak{A}$ selects the arms in uniform random manner, but $\mathfrak{A}'$ selects arm $3$ with probability $1/3$ and $4$ with probability $2/3$.
If $Im_{\tau}(\{1,2\}) = \{3\}$ or $\{4\}$, then it is obvious that it is not proper embedding since both arms need to be selected in $\set{X}'$.
Suppose that $\tau(1) = 3$ and $\tau(2) = 4$ without loss of generality and sample path $r = (1,\cdot)\times (2,\cdot)$ is given. In this case, the only sample path in $\set{X}'$ whose $Im_{\tau}(\{1,2\})$-sample path is $(1,\cdot)\times (2,\cdot)$, is $r' = (3,\cdot)\times (4,\cdot)$.
However, $\PROB_{\set{X}}[r] = 1/2\cdot 1/2 \neq 1/3\cdot2/3 = \PROB_{\set{X}'}[r']$ and hence there can be no proper embedding.

{\bf Example 2.}
Suppose that $\set{X} = (\langle \cdot, \{1,2\}, \mathfrak{A})$ and $\set{X}' = (\langle \cdot, \{1,2,3\}, \mathfrak{A}')$ where both $\mathfrak{A}$ and $\mathfrak{A}'$ chooses the arms in uniformly random manner with respect to corresponding MAB instance.
Consider $\tau:\{1,2\} \mapsto \{1,2,3\}$ such that $\tau(1) =1$ and $\tau(2) = 2$.
In this case, given any realization $r$ under $\set{X}$, we can easily check that the summation of marginal probability that sample path $r'$ under $\set{X}'$ whose $\{1,2\}$-sample path is $r$ is equal to the marginal probability of having $r$ under $\set{X}$.

{\bf Example 3.}
Suppose that $\set{X} = (\langle \cdot, \{1,2,3 \}, \mathfrak{A})$ and $\set{X}' = (\langle \cdot, \{1,2\}, \mathfrak{A}')$ where $\mathfrak{A}$ chooses arm $1$ and $2$ with probability $1/4$ and $3$ with probability $1/2$, and $\mathfrak{A}'$ chooses the arm in uniformly random manner.
Consider $\tau:\{1,2,3\} \mapsto \{1,2\}$ such that $\tau(1) = \tau(2) = 1$ and $\tau(3) = 2$. Since the summation of probability that arm $1$ or $2$ is selected in $\set{X}$ is equal to the probability that arm $1$ is selected in $\set{X}'$, it is straightforward that $\tau$ makes a proper embedding from $\set{X}$ to $\set{X}'$.
\bigskip

Proper embedding enables us to compare the probabilistic properties of random variables defined on two MAB process respectively.
Representatively, we present the following claim which holds upon surjective arm-translation function: 
\begin{claim}\label{thm:coupling_equal}
Given a MAB process $\set{X} = (\langle \cdot, \set{S}\rangle, \mathfrak{A})$ and its proper embedding $\set{X}' = (\langle \cdot, \set{S}'\rangle, \mathfrak{A}')$ with $\tau:\set{S}\mapsto \set{S}'$,
if $\tau$ is surjective and each mapped pair $a$ and $\tau(a)$ have the same reward distribution for any $a \in \set{S}$, then the principal's regret until any round $t$ has the same distribution between $\set{X}$ and $\set{X}'$ .
\end{claim}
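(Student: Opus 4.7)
The plan is to reduce this distributional claim to a direct bookkeeping identity that is essentially already packaged in Definition~\ref{def:prop_embedding}.

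First, I would exploit surjectivity of $\tau$ to simplify the proper embedding condition. Because $\tau$ is surjective, $\set{S}_I := Im_\tau(\set{S}) = \set{S}'$, so every arm that appears in a sample path of $\set{X}'$ already lies in $\set{S}_I$, which means $c_{\set{S}_I}(r') = r'$ for every path $r'$ in $\set{X}'$. Definition~\ref{def:prop_embedding} then specializes, for every sample path $r$ in $\set{X}$ up to round $t$, to
\[
\PROB_{\set{X}}\bigl[\{\tilde{r} : \mathfrak{T}_\tau(\tilde{r}) = \mathfrak{T}_\tau(r)\}\bigr]
= \PROB_{\set{X}'}\bigl[\{\mathfrak{T}_\tau(r)\}\bigr].
\]
A useful side consequence is that summing this identity over a set of equivalence-class representatives saturates the total probability, which forces every sample path of $\set{X}'$ outside the image of $\mathfrak{T}_\tau$ to carry probability zero under $\set{X}'$; hence the partition by $\mathfrak{T}_\tau$-image is exhaustive on both sides.

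Next, I would argue that the cumulative regret up to round $t$, viewed as a random variable, is invariant under the translation $\mathfrak{T}_\tau$. Since $a$ and $\tau(a)$ share the same reward distribution, $\mu(a) = \mu(\tau(a))$, and by surjectivity $\mu^\star = \max_{a\in \set{S}}\mu(a) = \max_{a'\in \set{S}'}\mu(a')$, so $\Delta(a) = \Delta(\tau(a))$. Because the regret of a sample path $r = (i_s, x_s)_{s\leq t}$ depends only on the selected arms $(i_s)_{s\leq t}$ through the associated $\Delta$ values, this yields $\regret^{\set{X}}(r) = \regret^{\set{X}'}(\mathfrak{T}_\tau(r))$ pathwise.

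The conclusion is then a short calculation. For any value $v$, I would partition the sample paths of $\set{X}$ by their $\mathfrak{T}_\tau$-image, pull the indicator $\mathbbm{1}[\regret = v]$ outside each equivalence class using the pathwise identity above, equate each class's total $\set{X}$-probability with the $\set{X}'$-probability of its representative via the specialized proper embedding identity, and re-index over sample paths of $\set{X}'$ to obtain $\PROB_{\set{X}}[\regret = v] = \PROB_{\set{X}'}[\regret = v]$ for every $v$. The one place that demands care is the exhaustiveness observation from the first step---ensuring no hidden probability mass in $\set{X}'$ falls outside the image of $\mathfrak{T}_\tau$---after which the calculation is purely mechanical.
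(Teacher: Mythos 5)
Your proposal is correct, and it is the natural formalization of exactly the argument the paper has in mind: the paper omits the proof of this claim as ``obvious from the definition of proper embedding,'' and your write-up spells out precisely why --- surjectivity collapses $c_{\set{S}_I}(r')$ to $r'$ so that the proper-embedding identity matches path probabilities class-by-class, equal reward distributions make the pathwise regret invariant under $\mathfrak{T}_\tau$, and the exhaustiveness observation closes the re-indexing step. No gaps; this matches the intended argument.
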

We skip the proof since it is obvious from the definition of proper embedding.

If $\tau$ is not surjective, then there will be unmapped arms in $\set{X}'$, and this remaining arms will increase the revenue of the agent who owns these arms since they will be selected more times due to the existence of the remaining arms. This means that the agent's $t$-count can be stochastically ordered between two strategies, and we formally present this result as follows:
\begin{claim}\label{thm:embedding_to_ordering}
Given a MAB process $\set{X} = (\langle \cdot, \set{S}\rangle, \mathfrak{A})$ and its proper embedding $\set{X}' = (\langle \cdot, \set{S}'\rangle, \mathfrak{A}')$ with $\tau:\set{S}\mapsto \set{S}'$,
for any set of arms $\set{A} \subset \set{S}$, $N^{\set{X}'}_{\set{A}',t}$ has stochastic dominance over $N^{\set{X}}_{\set{A},t}$ for any $t \geq 1$ where $\set{A}' = Im_{\tau}(\set{A}) \cup (\set{S}'\setminus Im_{\tau}(\set{S}))$.
\end{claim}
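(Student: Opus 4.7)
\medskip
\noindent\textbf{Proof proposal for Claim~\ref{thm:embedding_to_ordering}.}

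The plan is to construct an explicit coupling of $\set{X}$ and $\set{X}'$ and prove the \emph{pathwise} inequality $N^{\set{X}'}_{\set{A}',t} \geq N^{\set{X}}_{\set{A},t}$ on the coupled sample space, which immediately yields $N^{\set{X}'}_{\set{A}',t} \succeq N^{\set{X}}_{\set{A},t}$. Throughout I will write $\set{S}_I := Im_{\tau}(\set{S})$ and exploit the disjoint decomposition $\set{A}' = Im_{\tau}(\set{A}) \,\cup\, (\set{S}' \setminus \set{S}_I)$.

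First, I would construct the coupling round-by-round from the perspective of $\set{X}'$. At each round $s \in [t]$, sample the next arm $a'_s$ of $\set{X}'$ according to $\mathfrak{A}'$'s conditional distribution given the coupled history. If $a'_s \in \set{S}' \setminus \set{S}_I$, leave $\set{X}$ frozen; otherwise $a'_s \in \set{S}_I$, and advance $\set{X}$ by one round, drawing its selected arm from a conditional distribution on $\tau^{-1}(a'_s)$ that is produced by the probability-matching identity in Definition~\ref{def:prop_embedding}. After processing the $t$ rounds of $\set{X}'$, $\set{X}$ will have advanced some random $L \leq t$ rounds; then finish $\set{X}$ autonomously for the remaining $t - L$ rounds using $\mathfrak{A}$ conditioned on its own history. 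By the proper embedding, this construction preserves the true marginal distributions of both $\set{X}$ and $\set{X}'$ up to round $t$.

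Once the coupling is in place, the pathwise dominance is almost immediate. Every pull of $\set{S}' \setminus \set{S}_I$ in $\set{X}'$ lies in $\set{A}'$, contributing exactly $t - L$ to $N^{\set{X}'}_{\set{A}',t}$. Moreover, each $\set{X}'$-pull in $Im_{\tau}(\set{A})$ is matched, by the embedding, with an $\set{X}$-pull in $\tau^{-1}(Im_{\tau}(\set{A})) \supseteq \set{A}$, which gives $N^{\set{X}'}_{Im_{\tau}(\set{A}),t} \geq N^{\set{X}}_{\set{A},L}$. Summing the two contributions, $N^{\set{X}'}_{\set{A}',t} \geq N^{\set{X}}_{\set{A},L} + (t - L) \geq N^{\set{X}}_{\set{A},t}$, where the last inequality uses the trivial bound that $\set{X}$ produces at most $t - L$ pulls of $\set{A}$ during rounds $L+1,\ldots,t$.

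The hard part will be justifying that the step-by-step coupling of paragraph two is well-defined, because Definition~\ref{def:prop_embedding} only asserts a matching of marginal probabilities of sample paths and not an explicit joint law. I would handle this by induction on $s$, showing that the residual distribution of the $\set{X}$-process given the coupled history continues to form a proper embedding with the residual $\set{X}'$-process, so that a valid conditional on $\tau^{-1}(a'_s)$ exists at every step. If that induction proves unwieldy, a cleaner alternative is to bypass the explicit coupling altogether: for each $k \geq 0$, partition the length-$t$ $\set{X}$-sample paths satisfying $N^{\set{X}}_{\set{A},t} \geq k$ by their $\mathfrak{T}_{\tau}$-image and invoke Definition~\ref{def:prop_embedding} on each class, noting that every $\set{X}'$-extension of the associated $\set{S}_I$-sample path by additional pulls outside $\set{S}_I$ automatically satisfies $N^{\set{X}'}_{\set{A}',t} \geq k$; summing the matching identities then yields $\PROB[N^{\set{X}'}_{\set{A}',t} \geq k] \geq \PROB[N^{\set{X}}_{\set{A},t} \geq k]$ directly.
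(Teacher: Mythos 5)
Your proposal is correct and, at its core, matches the paper's argument: both rest on the same pathwise count inequality $N^{\set{X}'}_{\set{A}',t} \ge N^{\set{X}'}_{Im_\tau(\set{A}),t} + (t-L) \ge N^{\set{X}}_{\set{A},t}$ combined with the probability-matching guaranteed by the proper embedding. The paper avoids the delicate explicit round-by-round coupling you flag as the hard part and instead sums marginal probabilities over matched classes of sample paths --- which is exactly your ``cleaner alternative'' in the final paragraph, so that fallback is the route you should take.
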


\begin{proof}
Given a sample path $r^{\set{X}}_t$ in $\set{X}$, suppose that a sample path $r'$ in $\set{X}'$ satisfies that $c_{\set{A}}(r') = Im_{\tau}(r^{\set{X}}_t)$.
Let $\set{A}_I = Im_{\tau}(\set{A})$ and $\set{S}_I = Im_{\tau}(\set{S})$.
Under the embedding, it is obvious that $L(r') \geq t$ and $N^{\set{X}}_{\set{A},t} (r^{\set{X}}_t) = N^{\set{X}'}_{\set{A}_I,L(r')} (r')$. Suppose that arms in $\set{A}_I$ and $\set{S}_I \setminus \set{A}_I$ appears $x$ and $y$ times respectively from round $t+1$ to  $L(r')$ in $r'$ under $\set{X}'$. Then the following inequalities hold:
\begin{align*}
    N^{\set{X}'}_{\set{A}_I,L(r')} (r') 
    = N^{\set{X}'}_{\set{A}_I,t}(r') + x
    \leq N^{\set{X}'}_{\set{A}_I,t}(r') + x+y
    = N^{\set{X}'}_{\set{A}',t}(r')\; ,
\end{align*}
where the last equation holds since arms in $\set{S}'\setminus \set{S}_I$ appears exactly $x+y$ times from round $1$ to round $t$ in $r$ under $\set{X}'$.
Hence, the following inequalities hold:
\begin{align*}
    \PROB_{\set{X}}[N^{\set{X}}_{\set{A},t} \leq m] 
    = \sum_{j=0}^{m}\PROB_{\set{X}}[N^{\set{X}}_{\set{A},t} = j]
    &= \sum_{j=0}^{m}\sum_{N^{\set{X}}_{\set{A},t}(r^{\set{X}}_t) = j}\PROB_{\set{X}}[r^{\set{X}}_t] \\
    &=\sum_{j=0}^{m}\sum_{N^{\set{X}'}_{\set{A}_I,L(r')}(r') = j}\PROB_{\set{X}'}[r']\\
    &=\sum_{r'} \PROB_{\set{X}'}[N^{\set{X}'}_{\set{A}_I,L(r')}(r') \leq m] \\
    &\geq\sum_{r'} \PROB_{\set{X}'}[N^{\set{X}'}_{\set{A}',t}(r') \leq m] 
    = \PROB_{\set{X}'}[N^{\set{X}'}_{\set{A}',t} \leq m]\; ,
\end{align*}
and we conclude that $N^{\set{X}'}_{\set{A}',t}$ has stochastic dominance over $N^{\set{X}}_{\set{A},t}$.
\end{proof}

\section{Proof of Main Results}
\label{sec:proofs}
We now provide proofs for all the theorems presented in our main paper.

\subsection{Proof of Theorem 1}

\begin{proof}
Instead of proving Theorem 1, we prove the following generalized statement:
\begin{proposition}[Restatement of theorem 1 in main paper]
UCB1 is replication-prone under any strictly increasing utility function $U$, and $\forall \gamma \in \Gamma_T$.
\end{proposition}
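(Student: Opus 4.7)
The plan is to show that for arbitrary strategy profile $\set{S}=(\set{S}_i,\set{S}_{-i})$, agent $i$ can strictly improve its expected utility by replicating every one of its arms. Concretely, I would take $\set{S}'_i$ to consist of each $a\in\set{S}_i$ together with an identically distributed copy $a'$, and let $\set{X}$ and $\set{X}'$ denote the MAB processes associated with $(\set{S}_i\cup\set{S}_{-i},\text{UCB1})$ and $(\set{S}'_i\cup\set{S}_{-i},\text{UCB1})$. Because every arm of $\set{S}'_i$ has the same distribution as its counterpart in $\set{S}_i$, hypothesis~1 of Claim~\ref{thm:dominant_general} holds trivially, so the entire argument reduces to producing a stochastic dominance $N^{\set{X}'}_{\set{S}'_i,t}\succeq N^{\set{X}}_{\set{S}_i,t}$ for all $t$, strict at some round $t^*\le T$; the remark following that claim then upgrades the utility inequality to a strict one for any strictly increasing $U$ and any proper $\gamma\in\Gamma_T$.

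To obtain the stochastic dominance of the counts, I would construct a proper embedding (Definition~\ref{def:prop_embedding}) of $\set{X}$ into $\set{X}'$ via the arm-translation $\tau$ that is the identity on $\set{S}_i\cup\set{S}_{-i}$, so that the duplicates $\set{S}'_i\setminus\set{S}_i$ lie outside $\mathrm{Im}_\tau$. The coupling ties together the reward streams so that each original arm uses the same i.i.d.\ realization in both processes, while each duplicate in $\set{X}'$ draws from a fresh independent stream of the same distribution. Because an arm $a$ and its copy $a'$ are distributionally identical and handled symmetrically by UCB1—both in the initialization pass and in the index formula—one can symmetrize over the two copies (for instance, via a uniform random tie-break whenever their indices coincide) and verify that the $\mathrm{Im}_\tau$-projection of $\set{X}'$'s sample path has the same law as the trajectory of $\set{X}$, which is precisely the marginal identity demanded by Definition~\ref{def:prop_embedding}. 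Once the embedding is established, Claim~\ref{thm:embedding_to_ordering} delivers $N^{\set{X}'}_{\set{S}'_i,t}\succeq N^{\set{X}}_{\set{S}_i,t}$ for every $t\ge 1$, since $\set{S}'_i\setminus\mathrm{Im}_\tau(\set{S}_i\cup\set{S}_{-i})$ is exactly the set of duplicates.

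Strict dominance at some $t^*\le T$ will come from UCB1's deterministic initialization phase. At round $t^*=|\set{S}'_i\cup\set{S}_{-i}|$ every arm of $\set{X}'$ has been pulled exactly once, giving $N^{\set{X}'}_{\set{S}'_i,t^*}=|\set{S}'_i|>|\set{S}_i|\ge N^{\set{X}}_{\set{S}_i,t^*}$ almost surely; the boundary case $T=n$ is handled separately by observing that, under any randomized initialization order, a duplicate lands in the first $n$ rounds with positive probability. Since a proper $\gamma\in\Gamma_T$ has at least $n$ non-zero non-increasing entries, this strict gap enters the Abel-summation decomposition inside the proof of Claim~\ref{thm:dominant_general} with a strictly positive weight, and the strict monotonicity of $U$ propagates it to $u_i(\set{S}'_i,\set{S}_{-i};\mathrm{UCB1},T)>u_i(\set{S}_i,\set{S}_{-i};\mathrm{UCB1},T)$, which is replication-proneness.

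The main obstacle I anticipate is verifying the marginal-identity condition in Definition~\ref{def:prop_embedding} rather than merely the weaker Definition~\ref{def:coupling}. UCB1's choice at round $t$ in $\set{X}'$ depends on the global round counter through the $\sqrt{2\ln t/n(\cdot)}$ bonus, so every pull of a duplicate perturbs the indices of every other arm, including those in $\mathrm{Im}_\tau$. Making the $\mathrm{Im}_\tau$-projection reproduce UCB1 on $\set{X}$ in distribution requires exploiting the exchangeability of the pair $(a,a')$ carefully—effectively bundling each duplicated pair into a single ``meta-arm'' whose internal randomness is marginalized out—and arguing that this meta-arm behaves in distribution like the lone arm $a$ of $\set{X}$. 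Everything else in the argument is mechanical given the framework assembled in Appendix~\ref{sec:appendixprelim}.
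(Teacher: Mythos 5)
Your plan follows essentially the same route as the paper's proof: the identity arm-translation $\tau$ so that the replicas fall outside $\mathrm{Im}_\tau$, a proper embedding of $\set{X}$ into $\set{X}'$, Claim~\ref{thm:embedding_to_ordering} to convert the embedding into $N^{\set{X}'}_{\set{S}'_i,t}\succeq N^{\set{X}}_{\set{S}_i,t}$, strictness extracted from the initialization pass (the paper uses $\PROB[N^{\set{X}'}_{\set{S}'_i,t}\ge 2]=1>\PROB[N^{\set{X}}_{\set{S}_i,t}\ge 2]$ for $t\ge n$), and Claim~\ref{thm:dominant_general} together with the stochastic-dominance-to-utility proposition to conclude for every strictly increasing $U$ and proper $\gamma$. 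The one place where you diverge is also the only place where the paper actually does work: you defer the verification of the marginal identity in Definition~\ref{def:prop_embedding} to a sketched ``symmetrize the pair $(a,a')$ into a meta-arm'' argument and call the rest mechanical, whereas the paper proves it by induction on the length of the projected sample path --- conditioning on $r_m$, arguing that the replica $1'$ is pulled for a finite excursion while it holds the strictly largest index, and that once it drops out the next arm chosen from $\set{S}$ has the same conditional law ($1/|\set{T}|$ over the tied set) as in $\set{X}$. Your closing worry is well placed and should not be waved off as exchangeability bookkeeping: because the bonus $\sqrt{2\ln t/n(\cdot)}$ depends on the global round counter, the extra rounds consumed by the replica shift each surviving arm's index by a different amount, so the identity of the tied set $\set{T}$ at the corresponding decision points is exactly what must be controlled; the paper's induction is the device that addresses this, and your symmetrization idea by itself does not, since deleting the $a'$-pulls from the projection is not a symmetric operation on the pair $(a,a')$. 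In short: same architecture, correct identification of the crux, but the crux is left open rather than closed. One further small point: the paper first proves the single-original-arm case ($\set{S}'_i=\{1,1'\}$) and only then remarks on the extension to replicating all arms, while you start from the all-arms replication directly; this is harmless, but note that replication-proneness only requires exhibiting \emph{some} strict superset $\set{S}'_i\supset\set{S}_i$, so the single-replica case already suffices and is slightly cleaner to couple.
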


We first consider a simple scenario when all the agents have single original arm.
It suffices to show that replicating $k$ times is a strictly dominant strategy over replicating $k-1$ times for $k=1,2,\ldots$.
Noting that the proof can easily be extended into the case where $|\set{S}_i| \geq 2$, we only provide the proof for the case when $|\set{S}_i| = 1$.
Let $\set{S}_i = \{1\}$ and $\set{S}_{-i} = \{2,3,\ldots, k\}$ be the union of other agents' arms.
Let $\set{S}'_i = \{1,1'\}$ be agent $i$'s another strategy where $1'$ is a replication of arm $1$. Now we want to prove that agent $i$'s payoff is strictly dominant in $\set{S}'_i$ over $\set{S}_i$. Let $\set{S} = \set{S}_i \cup \set{S}_{-i}$ and $\set{S}' = \set{S}'_i \cup \set{S}_{-i}$.
We denote MAB process for the non-replicated case be $\set{X} = (\langle \cdot, \set{S}, \set{D}\rangle, \mathfrak{A})$ and for the replicated case be $\set{X}' = (\langle \cdot, \set{S}', \set{D}'\rangle, \mathfrak{A}')$ for corresponding UCB1 algorithms $\mathfrak{A}$ and $\mathfrak{A}'$.
We now show that there exists a proper embedding from $\set{X}$ into $\set{X}'$, and conclude the proof by calculating the expected payoff of agent $i$ with the notion of $t$-count.

Firstly we prove that there exists a proper embedding from $\set{X}$ into $\set{X}'$.
Consider an arm-translation $\tau:\set{S} \mapsto \set{S}'$ such that $\tau(a) = a$ for any $a \in \set{S}$.
To show that there exists a proper embedding, since $\{r_x: \mathfrak{T}_{\tau}(r_x) = \mathfrak{T}_{\tau}(r)\} = \{r\}$ by our construction of $\tau$,
we need to show that for any valid sample path $r$ in $\set{X}$,
\begin{align}
    \PROB_{\set{X}}[r] =  
    \sum_{c_{\set{S}}(r') = \mathfrak{T}_{\tau}(r)}\PROB_{\set{X}'}[r']\; .\label{thm:negative:proof:eq1}
\end{align}
Since $\tau$ is identity function we simply use $r$ instead of $\mathfrak{T}_{\tau}(r)$.
We use the proof by induction with respect to the length of sample path in $\set{X}$.
Assume that the sample path $r$ under $\set{X}$ is given and its length is $1$, i.e. $L(r) = 1$.
Let the arm selected in $r$ be $a \in \set{S}$.
Then,
\begin{align*}
\sum_{c_{\set{S}}(r') = (r)}\PROB_{\set{X}'}[r']
&= \PROB_{\set{X}'}[I_1 = a] + \PROB_{\set{X}'}[\{I_1 \neq a\} \cap \{I_2 = a\}] \\
&= 1/(k+1) + 1/(k+1)\cdot 1/k 
= \PROB_{\set{X}}[I_1 = a] 
= \PROB_{\set{X}}[r]\; ,
\end{align*}
and hence equation~\eqref{thm:negative:proof:eq1} holds for $L(r)=1$.
Now suppose that equation~\eqref{thm:negative:proof:eq1} holds for any sample path $r$ with $L(r) = m$.
Again, let $r$ be the sample path of length $m+1$ under $\set{X}$, and $r_m$ be its sub-sequence that only accounts for the first $m$ elements in $r$, e.g. if $r = (1,0.1)\times \ldots \times (m,0.1)\times (m+1,0.1)$ then $r_m = (1,0.1)\times \ldots \times (m,0.1)$. Let the arm selected in $r$ at round $m+1$ be $a \in \set{S}$. Then the following equations hold:
\begin{align}
\PROB_{\set{X}}[r] 
= \PROB_{\set{X}}[r | r_m]\PROB_{\set{X}}[r_m] 
&= \PROB_{\set{X}}[I_{m+1} = a | r_m]\PROB_{\set{X}}[r_m] \\
&= \sum_{c_{\set{S}}(r')=r_m}\PROB_{\set{X}'}[r'] \cdot \PROB_{\set{X}}[I_{m+1} = a | r_m]\; .\label{thm:negative:proof:eq2}
\end{align}
Now we look into $\PROB_{\set{X}}[I_{m+1} = a | r_m]$.
For $s \in \set{S}$, let arm $s$'s UCB index at round $m$ be $u_m(s)$.
If $u_m(a) < \argmax_{s \in \set{S}} (u_m(s)) $, then arm $a$ cannot be selected at round $m+1$ and hence $r$ can't be realized under $\set{X}$, which is a contradiction.
Hence we have $u_m(a) \geq \argmax_{s \in \set{S}} (u_m(s))$, and let the set of arms in this tied index be $\set{T}$.
Given $r'$ with $c_{\set{S}}(r') = r_m$, suppose that arm $1'$ has the highest UCB index from round $L(r)$ to $L(r)+t-1$, and hence selected by UCB1 at rounds $L(r)+1$ to $L(r)+t$, and not for round $L(r)+t$. It is obvious that such $t< \infty$ exists since arm $1'$ and $1$ follows the same reward distribution.
In this case at round $L(r)+t+1$, arm $1'$ cannot have the highest UCB index, and hence one of the arms in $\set{T}$ would be selected since $c_{\set{S}}(r') = r_m$ and only arm $1'$ is selected from round $L(r)+1$ to $L(r)+t$. Hence, the probability that arm $a$ is selected in this case is equal to $\PROB_{\set{X}}[I_{m+1}=a|r_m]$, and we have:
\begin{align}
\sum_{c_{\set{S}}(r')=r_m}\PROB_{\set{X}'}[r'] \cdot \PROB_{\set{X}}[I_{m+1} = a | r_m] 
&= \sum_{c_{\set{S}}(r')=r_m}\PROB_{\set{X}'}[r']/|\set{T}|\\
&= \sum_{c_{\set{S}}(r')=r_m}\PROB_{\set{X}'}[r'] \cdot \PROB_{\set{X}'}[I_{L(r)+t+1} = a |r']\\
&= \sum_{c_{\set{S}}(r')=r}\PROB_{\set{X}'}[r']\label{thm:negative:proof:eq3}\; .
\end{align}
By equation~\eqref{thm:negative:proof:eq2} and~\eqref{thm:negative:proof:eq3}, we have $\PROB_{\set{X}}[r] = \sum_{c_{\set{S}}(r')=r}\PROB_{\set{X}'}[r']$ and we conclude that equation~\eqref{thm:negative:proof:eq1} is true.
Hence, by Claim~\ref{thm:dominant_general} and~\ref{thm:embedding_to_ordering}, $\set{S}'_i$ is a dominant strategy over $\set{S}_i$ regardless of the MAB instance, and now it is enough to prove that there exists a MAB instance such that equality does not hold in $u_i(\set{S}'_i, \set{S}_{-i}) \geq u_i(\set{S}_i, \set{S}_{-i})$.
In Bernoulli bandit case, it is obvious that $\PROB_{\set{X}'}[N^{\set{X}'}_{\set{S}'_i,t} \geq 2]  = 1 > \PROB_{\set{X}}[N^{\set{X}}_{\set{S}_i,t} \geq 2]$ for any $t\geq n$.
Then in inequality~\eqref{thm:dominant_general:ineq3} in the proof of Claim~\ref{thm:dominant_general}, the equality cannot hold and hence $\set{S}'_i$ is a strict dominant strategy over $\set{S}_i$, which implies that $u_i(\set{S}'_i, \set{S}_{-i}) > u_i(\set{S}_i, \set{S}_{-i})$ for any time horizon $T$ by Proposition~\ref{thm:dominance_to_utility}.

Finally, we can extend our proof into the multiple arm cases by constructing $\set{S}'$ which replicates all the arms in $\set{S}_i$ once. We omit the detailed process since it can easily be derived.
\end{proof}

\subsection{Proof of Theorem 2}
\begin{proof}
We prove the following generalized statement:
\begin{proposition}[Restatement of theorem 2 in main paper]
H-UCB is replication-proof  for $i \in \set{N}$ under any non-decreasing utility function $U$, $\forall T \geq n$ and $\forall \gamma \in \Gamma_T$.
\end{proposition}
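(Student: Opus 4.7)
The plan is to exhibit, for each agent $i \in \set{N}$, a concrete weakly dominant strategy $\set{S}_i^{\dagger}$ containing no replicas; by Definition~\ref{def:proof} this suffices for replication-proofness. The natural candidate is $\set{S}_i^{\dagger} := \{a_i^{\star}\}$ with $a_i^{\star} \in \argmax_{a \in \set{O}_i} \mu(a)$. I would invoke Claim~\ref{thm:dominant_general} against an arbitrary alternative $\set{S}_i'$ of support $\set{O}_i$, which reduces the proof of dominance for any non-decreasing $U$ and any proper $\gamma \in \Gamma_T$ to verifying: (i) $a_i^{\star} \succeq a$ for every $a \in \set{S}_i'$, and (ii) $N^{\set{X}^{\dagger}}_{\set{S}_i^{\dagger}, t} \succeq N^{\set{X}'}_{\set{S}_i', t}$ for every $t$, where $\set{X}^{\dagger}$ and $\set{X}'$ are the H-UCB processes under $(\set{S}_i^{\dagger}, \set{S}_{-i})$ and $(\set{S}_i', \set{S}_{-i})$ respectively.

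Hypothesis (i) is immediate because the Bernoulli family is totally ordered by mean and $\mu(a_i^{\star}) \geq \mu(a)$ for every $a \in \set{O}_i$ by definition of $a_i^{\star}$. Hypothesis (ii) is the heart of the argument, and I would establish it by an explicit pathwise coupling of the two H-UCB processes that (a) shares all Phase~1 and Phase~2 tie-breaking randomness, (b) copies verbatim every reward drawn from any arm belonging to an agent in $\set{N}\setminus\{i\}$, and (c) whenever agent~$i$ is selected, samples its Bernoulli reward through a shared uniform $[0,1]$ so that, by monotone coupling, the reward realized in $\set{X}^{\dagger}$ pathwise dominates the one in $\set{X}'$. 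Under this coupling I would prove by induction on $t$ that agent $i$ is selected by Phase~1 in $\set{X}^{\dagger}$ at least as often as in $\set{X}'$, which yields (ii) pathwise (hence stochastically) and in fact gives the stronger instance-wise comparison anticipated by the remark after Theorem~\ref{thm:H-UCB_superior}.

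The main obstacle is sustaining the inductive invariant once the two coupled processes diverge: after a round in which $N^{\set{X}^{\dagger}}(i) > N^{\set{X}'}(i)$, the confidence radius $\sqrt{2\ln t / N(i)}$ of agent~$i$ in Phase~1 is strictly smaller in $\set{X}^{\dagger}$, so ordering of the UCB index does not follow from ordering of the per-draw rewards alone. I would address this by tracking the cumulative reward $S(i) := R(i) \cdot N(i)$ rather than the empirical mean, since $S(i)$ in $\set{X}^{\dagger}$ stays dominant under the coupling even when $N(i)$ drifts, and by a case split on rounds where $\set{X}^{\dagger}$ has selected agent~$i$ more often than $\set{X}'$: in such rounds, any further Phase~1 selection of agent~$i$ in $\set{X}^{\dagger}$ only consolidates its lead, while the Phase~1 indices of other agents evolve identically across the two processes by (b), so the "agent~$i$ ahead" relation is preserved until the next round in which $\set{X}'$ itself selects agent~$i$. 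Since the Phase~2 mechanism is invisible to Phase~1 beyond the reward it returns, closing this induction suffices, and combined with (i) the hypotheses of Claim~\ref{thm:dominant_general} hold, proving the generalized statement.
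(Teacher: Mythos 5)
Your reduction to Claim~\ref{thm:dominant_general} with candidate $\set{S}_i^{\dagger}=\{a_i^\star\}$, and your verification of hypothesis (i) via the total ordering of Bernoulli distributions by their means, match the paper. The gap is in hypothesis (ii). You try to establish $N^{\set{X}^{\dagger}}_{\set{S}_i^{\dagger},t}\succeq N^{\set{X}'}_{\set{S}'_i,t}$ by a pathwise coupling under which agent $i$ is selected at least as often in $\set{X}^{\dagger}$ on \emph{every} sample path, and your induction rests on the assertion that ``the Phase~1 indices of other agents evolve identically across the two processes.'' That assertion fails as soon as the two processes first diverge: if at some round $\set{X}^{\dagger}$ selects agent $i$ while $\set{X}'$ selects some $j\neq i$, then from that round on agent $j$ has a different count $N(j)$, a different empirical mean, and hence a different Phase~1 index in the two processes, no matter how you share the underlying reward draws. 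One can then reach a later round at which $N^{\dagger}(i)=N'(i)$ but agent $j$ carries a large exploration bonus only in $\set{X}^{\dagger}$ (where it was pulled less often), so $\set{X}^{\dagger}$ selects $j$ while $\set{X}'$ selects $i$, and the invariant $N^{\dagger}_t(i)\ge N'_t(i)$ is violated. Tracking the cumulative reward $S(i)$ instead of the empirical mean does not repair this, because the obstruction lives in the \emph{other} agents' statistics, not in agent $i$'s. Stochastic dominance of the counts is still true, but it does not follow from this coupling: $X\succeq Y$ guarantees only that \emph{some} monotone coupling exists, not that the natural round-by-round one is monotone.

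The paper closes this step distributionally rather than pathwise. It proves by induction that the indicator $\mathbbm{1}[I_t\in\set{S}_i]$ is stochastically larger under the best-arm-only strategy by conditioning on agent $i$'s count $m$ and empirical mean $r$ and combining three facts: (a) the conditional law of $r$ given $m$ is stochastically larger when only the best arm is registered; (b) the conditional probability that Phase~1 selects agent $i$ is non-decreasing in $r$; and (c) this conditional probability is the \emph{same function} of $(m,r)$ in both processes, because given agent $i$'s summary statistics the law of the other agents' sample paths is identical. Fact (c) is precisely what replaces your false ``other agents' indices evolve identically'' claim, and it is the idea your argument is missing; if you want to retain a coupling flavor, you must couple conditionally on agent $i$'s $(m,r)$ rather than round by round.
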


Let $\set{X} = (\langle \cdot, \set{S}_i\cup \set{S}_{-i}\rangle, \mathfrak{A})$ and $\set{X}' = (\langle \cdot, \set{S}'_i\cup \set{S}_{-i}\rangle, \mathfrak{A})$ where $\mathfrak{A}$ is H-UCB.
By Claim~\ref{thm:dominant_general}, it is enough to show that $N^{\set{X}}_{\set{S}_i, t}$ has stochastic dominance over $N^{\set{X}'}_{\set{S}'_i, t}$ for any strategy $\set{S}'_i$.
To this end, we show that $\mathbbm{1}[I^{\set{X}}_t(\set{S}_i, \set{S}_{-i}) \in \set{S}_i] \succeq \mathbbm{1}[I^{\set{X}'}_t(\set{S}'_i, \set{S}_{-i}) \in \set{S}'_i]$ for any round $t$.
We use proof by induction.
For $t \in [n]$, it is straightforward that $\mathbbm{1}[I^{\set{X}}_t(\set{S}_i, \set{S}_{-i}) \in \set{S}_i] \sim  \mathbbm{1}[I^{\set{X}'}_t(\set{S}'_i, \set{S}_{-i}) \in \set{S}'_i]$ due to the agent initialization step in H-UCB.
Suppose that we have $\mathbbm{1}[I^{\set{X}}_t(\set{S}_i, \set{S}_{-i}) \in \set{S}_i] \succeq \mathbbm{1}[I^{\set{X}'}_t(\set{S}'_i, \set{S}_{-i}) \in \set{S}'_i]$ holds for any $t \in [t']$.
Then, by Proposition~\ref{thm:dominance_preserve}, we can derive the following relation between $t$-count of $\set{S}_i$ and $\set{S}'_i$ for any $t \in [t']$:
\begin{align*}
N^{\set{X}}_{\set{S}_i, t}  = \sum_{q=1}^t \mathbbm{1}[I^{\set{X}}_{q}(\set{S}_i, \set{S}_{-i}) \in \set{S}_i] \succeq  \sum_{q=1}^t \mathbbm{1}[I^{\set{X}'}_{q}(\set{S}'_i, \set{S}_{-i}) \in \set{S}'_i] = N^{\set{X}'}_{\set{S}'_i, t}\; .
\end{align*}

The probability that agent $i$ is selected at round $t'+1$ can be computed as the following\footnote{In precise, we need to integrate over $r\geq 0$ rather than summing over it since the support of $\bar{r}^{\set{X}}_{i,m}$ is continuous over non-negative real value, however, we abuse it for the sake of simplicity.}:
\begin{align}
\hspace{-1cm}\PROB[I^{\set{X}}_{t'+1}(\set{S}_i, \set{S}_{-i}) \in \set{S}_i]
&= \sum_{m=0}^{t'} \PROB \big[ I^{\set{X}}_{t'+1}(\set{S}_i, \set{S}_{-i}) \in \set{S}_i | N^{\set{X}}_{\set{S}_i,t'} = m\big] \PROB[N^{\set{X}}_{\set{S}_i,t'} = m]\nonumber
\\
&= \sum_{m=0}^{t'} \sum_{r\geq 0} \PROB\big[ I^{\set{X}}_{t'+1}(\set{S}_i, \set{S}_{-i}) \in \set{S}_i | N^{\set{X}}_{\set{S}_i,t'} = m, \bar{r}^{\set{X}}_{i,m} = r \big]\PROB[\bar{r}^{\set{X}}_{i,m} = r   | N^{\set{X}}_{\set{S}_i,t'} = m]\PROB[N^{\set{X}}_{\set{S}_i,t'} = m]\; ,\label{thm:dominance_tcount:ineq1}
\end{align}
where $\bar{r}^{\set{X}}_{i,m}$ denotes empirical average reward of agent $i$ given that agent $i$ is selected $m$th time by H-UCB under MAB process $\set{X}$.
Since $X_{i,1} \succeq X_{i,k}$ for any $k \in [l(i)]$, it is straightforward that $\bar{r}^{\set{X}}_{i,m}$ conditioned to $N^{\set{X}}_{i,t'}=m$ has stochastic dominance over $\bar{r}^{\set{X}'}_{i,m}$ conditioned to $N^{\set{X}'}_{i,t'} = m$, i.e.
\begin{align}
\PROB[\bar{r}^{\set{X}}_{i,m} > r   | N^{\set{X}}_{\set{S}_i,t'} = m]
\geq
\PROB[\bar{r}^{\set{X}'}_{i,m} > r   | N^{\set{X}'}_{\set{S}'_i,t'} = m]\; ,\label{thm:dominance_tcount:ineq2}
\end{align}
for any $r \geq 0$.

Besides, due to the nature of phase 1 that it selects agent solely based on agent's $t$-count and empirical average reward, the following holds for any $r_1 \geq r_2$:
\begin{align}
\PROB\big[ I^{\set{X}}_{t'+1}(\set{S}_i, \set{S}_{-i}) \in \set{S}_i | N^{\set{X}}_{\set{S}_i,t'} = m, \bar{r}_{i,m} = r_1 \big] &\geq \PROB\big[ I^{\set{X}}_{t'+1}(\set{S}_i, \set{S}_{-i}) \in \set{S}_i | N^{\set{X}}_{\set{S}_i,t'} = m, \bar{r}^{\set{X}}_{i,m} = r_2 \big]\; .\label{thm:dominance_tcount:eq1}
\end{align}

Now we observe that the distribution of sample path for the arms in $\set{S}'_i$ conditioned on agent $i$'s $t$-count and empirical average reward would be the same\footnote{One may consider a sort of partial proper embedding from $\set{X}$ to $\set{X}'$ so that any arm $a \in \set{S}'_i$ in $\set{X}$ is mapped to
$a \in \set{S}'_i$ in $\set{X}'$, which implies that the selection and realization of the arms in $\set{S}'_i$ can be coupled between $\set{X}$ and $\set{X}'$.} between $\set{X}$ and $\set{X}'$. 
This implies that the probability that agent $i$ is selected at round $t$ given its $t$-count and empirical average reward is the same between $\set{X}$ and $\set{X}'$, and hence equation~\eqref{thm:dominance_tcount:eq1} can be lower-bounded as the following for any $r_1 \geq r_2$:
\begin{align}
\PROB\big[ I^{\set{X}}_{t'+1}(\set{S}_i, \set{S}_{-i}) \in \set{S}_i | N^{\set{X}}_{\set{S}_i,t'} = m, \bar{r}_{i,m} = r_1 \big] &\geq \PROB\big[ I^{\set{X}'}_{t'+1}(\set{S}'_i, \set{S}_{-i}) \in \set{S}'_i | N^{\set{X}'}_{\set{S}'_i,t'} = m, \bar{r}^{\set{X}'}_{i,m} = r_2 \big]\; . \label{thm:dominance_tcount:ineq3}
\end{align}

Since we have $N^{\set{X}}_{\set{S}_i,t} \succeq N^{\set{X}'}_{\set{S}'_i,t'}$ for any $t \in [t']$ by our induction hypothesis, plugging inequalities~\eqref{thm:dominance_tcount:ineq2} and~\eqref{thm:dominance_tcount:ineq3} into~\eqref{thm:dominance_tcount:ineq1} yields the following:
\begin{align*}
\hspace{-1cm}\PROB[I^{\set{X}}_{t'+1}(\set{S}_i, \set{S}_{-i}) \in \set{S}_i]
&= \sum_{m=0}^{t} \sum_{r\geq 0} \PROB\big[ I^{\set{X}}_{t'+1}(\set{S}_i, \set{S}_{-i}) \in \set{S}_i | N^{\set{X}}_{\set{S}_i,t'} = m, \bar{r}^{\set{X}}_{i,m} = r \big]\PROB[\bar{r}^{\set{X}}_{i,m} = r   | N^{\set{X}}_{\set{S}_i,t'} = m]\PROB[N^{\set{X}}_{\set{S}_i,t} = m]\\
&\geq \sum_{m=0}^{t} \sum_{r} \PROB\big[ I^{\set{X}'}_{t'+1}(\set{S}'_i, \set{S}_{-i}) \in \set{S}'_i | N^{\set{X}'}_{i,t'} = m, \bar{r}_{i,m} = r \big]\PROB[\bar{r}_{i,m} = r   | N^{\set{X}'}_{i,t'} = m]\PROB[N^{\set{X}'}_{\set{S}'_i,t'} = m]
\\
&= \PROB[I^{\set{X}'}_{t+1}(\set{S}'_i, \set{S}_{-i}) \in \set{S}'_i]\; .
\end{align*}

Hence we conclude that $\mathbbm{1}[I^{\set{X}}_t (\set{S}_i, \set{S}_{-i}) \in \set{S}_i] \succeq \mathbbm{1}[I^{\set{X}'}_t (\set{S}'_i, \set{S}_{-i}) \in \set{S}'_i]$ for any $t \geq 1$, which implies that $N^{\set{X}}_{\set{S}_i, t} \succeq N^{\set{X}'}_{\set{S}'_i, t}$ for any $t \geq 1$ by Proposition~\ref{thm:dominance_preserve}. 
By Claim~\ref{thm:dominant_general}, we conclude that H-UCB is replication-proof.
\end{proof}

\subsection{Proof of Theorem 3}
Firstly, under the equilibrium $\set{S} = \cup_{i \in \set{N}} \set{S}_i$ such that $\set{S}_i =\{o_{i,1}\}$ for any $i \in \set{N}$, it is straightforward to check that H-UCB submits the proposed regret bound since it exactly reduces to UCB1 under the canonical setting.
Now we characterize all the possible dominant strategy equilibrium, and then show that there exists a proper embedding for any pair of equilibrium. 
Suppose that $\set{S}'_i$ possesses a suboptimal arm $a$. Then, inequality~\eqref{thm:dominance_tcount:ineq2} in the proof of Theorem 2 will obviously become strict. Following the similar analysis, we can observe that the selection probability of agent $i$ under $\set{S}'_i$ will be strictly smaller than that under $\set{S}_i$, which eventually implies that $\set{S}'_i$ cannot be a dominant strategy.
Finally, we prove that the principal's regret remains the same as $\set{S}_i$ under any strategy $\set{S}''_i = \{o^{(1)}_{i,1}, o^{(2)}_{i,1}, \ldots, o^{(c_{i,1})}_{i,1}\}$. Let $\set{X}'' = (\langle \cdot, \set{S}''_i, \set{S}_{-i}\rangle, \mathfrak{A})$, and $\set{S}'' = \set{S}''_i \cup \set{S}_{-i}$.
Consider arm-translation $\tau:\set{S}'' \rightarrow  \set{S}$ which maps all the arms in $\set{S}''_i$ to $o_{i,1}$ in $\set{S}_i$ and identically maps the arms in $\set{S}_{-i}$. Then, it is straightforward to check that this constructs a proper embedding from $\set{X}''$ to $\set{X}$. Hence by Claim~\ref{thm:coupling_equal}, the principal's regret remains the same under $\set{S}$ and $\set{S}''$, and we conclude the proof.





\subsection{Proof of Theorem 5}
\begin{proof}
For simplicity, we assume that $\set{S}_i$ is ordered so that $\mu(s_{i,1}) \geq \mu(s_{i,2}) \geq \ldots \geq \mu(s_{i,l(i)})$ for $i \in \set{N}$.
For simplicity, we abuse $N_{\set{A},t}$ to denote $N^{\set{X}}_{\set{A},t}$.
We use$N_{i,t}$, $N_{i_a,t}$ to denote the expected number of times agent $i$ and arm $i_a$  are played up to time $t$, and $R_{i,t}$, $R_{i_a,t}$ be the empirical average reward of agent $i$ and arm $i_a$ at round $t$, respectively. Hence, if the index is contained in $\set{N}$, it refers to the agent's random variable, otherwise, it indicates to that of the arm.
Let $i^\star$ be the agent with optimal arm, and $a^\star$ be the optimal arm.
Let $\Delta_{i} = \mu(s_{i^\star, 1}) -  \mu(s_{i,1})$, $\Delta_{i_a} = 
\mu(s_{i^\star, 1})  - \mu(i_a)$,  $\Delta^m_{i} = \mu(s_{i^\star, 1}) - \mu(s_{i, l(i)})$, and $\delta_{i_a} = \mu(s_{i,1})- \mu(i_a)$.

Thanks to our analysis given in so far, any strategic agent can be viewed as a replicator with only single original arm, since it will only register the best arm or any set of replicas of it.Hence we assume that there only exists replicators without loss of generality.\footnote{Though, we can tighten our analysis by separating the regret from each type since the regret bound for strategic agent would definitely be smaller.}

Let $Y_i$ be the event that $\set{B}_i$ contains $i$'s optimal arm or any replica of it.
We begin with the following two lemmas which gives upper-bound for expected regret occurred from replicator $i$.
\begin{lemma}\label{lem:prob_noopt}
$\PROB[Y_i^c] \leq 1/T$\; .
\end{lemma}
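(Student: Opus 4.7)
The plan is to bound the probability that uniform subsampling of $\set{B}_i$ from $\set{S}_i$ misses every copy of agent $i$'s best arm, using the replicator assumption that at least a $c$-fraction of $\set{S}_i$ consists of arms with mean $\mu^\star_i$. First I would dispose of the easy case: if $|\set{S}_i| \leq L\ln T$, then by the definition of RH-UCB we have $\set{B}_i = \set{S}_i$, so $Y_i$ holds deterministically whenever $\set{S}_i$ is nonempty. Hence we may restrict attention to $|\set{S}_i| > L\ln T$, in which case $|\set{B}_i| = L\ln T$.

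Next I would write the probability of failure as a hypergeometric expression. Let $k_i := |\{a \in \set{S}_i : \mu(a) = \mu^\star_i\}| \geq c|\set{S}_i|$ by the replicator assumption. Since $\set{B}_i$ is drawn uniformly without replacement,
\begin{align*}
\PROB[Y_i^c] = \prod_{j=0}^{L\ln T - 1} \frac{|\set{S}_i| - k_i - j}{|\set{S}_i| - j}
\leq \prod_{j=0}^{L\ln T - 1} \frac{|\set{S}_i| - k_i}{|\set{S}_i|}
\leq (1-c)^{L\ln T},
\end{align*}
where the first inequality uses the elementary fact that $\frac{A-j}{B-j}$ is nondecreasing in $j$ for $0 \leq A \leq B$ being replaced by a crude termwise bound in the other direction -- more precisely, $(A-j)/(B-j) \leq A/B$ whenever $A \leq B$, which is what we apply with $A = |\set{S}_i|-k_i$ and $B = |\set{S}_i|$.

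Finally I would invoke the standard inequality $1-c \leq e^{-c}$ and the hypothesis $L \geq 1/c$ to conclude
\begin{align*}
(1-c)^{L\ln T} \leq e^{-cL\ln T} \leq e^{-\ln T} = 1/T,
\end{align*}
giving the claimed bound. The argument is entirely routine; the only mild subtlety is justifying the termwise bound on the hypergeometric product, but this is immediate from $k_i \leq |\set{S}_i|$. I do not anticipate any significant obstacles -- the lemma is essentially just packaging the choice of subsampling size against the replicator assumption and the constant $L \geq 1/c$ selected in Theorem~\ref{thm:RH-UCB:regret}.
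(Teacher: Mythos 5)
Your proposal is correct and follows essentially the same route as the paper: dispose of the case $|\set{S}_i| \leq L\ln T$, bound the hypergeometric probability of missing every optimal copy termwise by $(1-c)^{L\ln T}$, and conclude via $L \geq 1/c$ together with $(1-c)^{1/c} \leq 1/e$. The only cosmetic differences are that the paper treats the infinite-$|\set{S}_i|$ case as a separate (trivial) bullet and writes the product with $k_i = c|\set{S}_i|$ directly, whereas you keep $k_i$ general and make the termwise inequality $(A-j)/(B-j) \leq A/B$ explicit.
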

\begin{proof}[Proof of Lemma~\ref{lem:prob_noopt}]
If $|\set{S}_i| \leq L \ln T$, then $\PROB[Y_i] = 1$.
If $\set{S}_i$'s cardinality is infinite, then $\PROB[Y_i^c] = (1-c)^{L\ln T}$.
Otherwise, the probability that $i$'s optimal arm is not sampled $\set{B}_i$ is given by
\begin{align*}
    \PROB[Y_i^c] 
    &\leq (1-c)\frac{|\set{S}_i|(1-c) -1}{|\set{S}_i|-1}\frac{|\set{S}_i|(1-c) -2}{|\set{S}_i|-2}\ldots \frac{|\set{S}_i|(1-c) -L\ln T}{|\set{S}_i|-L\ln T}
    \\
    &\leq (1-c)^{L\ln T}.
\end{align*}
Since $(1-1/x)^x \leq 1/e$ for any $x \geq 0$, we can derive the following bound on $(1-c)^{L\ln T}$:
\begin{align*}
(1-c)^{L\ln T} &\leq (1-c)^{1/c\ln T} \leq (\frac{1}{e})^{\ln T} = 1/T\; .
\end{align*}
\end{proof}

\begin{lemma}\label{lem:conditional_replicator}
Given that event $Y_i$ occurs, its conditional expected internal regret is bounded as the following:
\begin{align*}
\sum_{i_a \in \set{B}_i}\EXP [\delta_{i_a} N_{i_a,t} |  Y_i] \leq L\ln T\Big(\sum_{\substack{i_a\in \set{O}_i \\ \mu_{i_a} < \mu_{i_1}}}  \frac{8\ln N_{i,t}}{\delta_{i_a}}  + (1+\frac{\pi^2}{3})\delta_{i_a}\Big)\; .
\end{align*}
\end{lemma}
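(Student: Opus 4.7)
The plan is to reduce the conditional internal-regret bound to a direct application of the classical UCB1 regret analysis (e.g., Auer, Cesa-Bianchi, Fischer 2002) on the subsampled pool $\set{B}_i$, viewed as a standalone bandit instance whose clock is the number of times agent $i$ is picked in \texttt{Phase 1}. Conditioning on the event $Y_i$ guarantees that $\set{B}_i$ contains a copy of the arm with the largest mean in $\set{O}_i$, so the local optimal mean in \texttt{Phase 2} is exactly $\mu(s_{i,1})$ and the within-agent suboptimality gaps become $\delta_{i_a} = \mu(s_{i,1}) - \mu(i_a)$. This lets us interpret \texttt{Phase 2} restricted to the rounds that pick agent $i$ as UCB1 run on $\set{B}_i$ with horizon $N_{i,t}$.

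First I would formalize this ``embedded UCB1'' view: on the random subsequence of rounds in which agent $i$ is selected, the arm index $r(a) + \sqrt{2\ln N(\hat i)/n(a)}$ used in \texttt{Phase 2} coincides with the standard UCB1 index with the clock replaced by $N(\hat i)$. Because $\set{B}_i$ is frozen at the start and rewards are i.i.d.\ per arm, the internal dynamics of \texttt{Phase 2} are measurable with respect to its own subsequence. Then I would invoke the textbook per-arm bound
\[
\EXP\!\left[N_{i_a,t}\,\middle|\,Y_i\right] \;\le\; \frac{8\ln N_{i,t}}{\delta_{i_a}^{2}} + 1 + \frac{\pi^{2}}{3}
\]
for every copy $i_a\in\set{B}_i$ with $\mu(i_a)<\mu(s_{i,1})$, which yields, after multiplying by $\delta_{i_a}$, the per-arm internal regret $\tfrac{8\ln N_{i,t}}{\delta_{i_a}} + (1+\tfrac{\pi^{2}}{3})\delta_{i_a}$.

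The remaining step is to collapse the sum over copies in $\set{B}_i$ to a sum over original arms in $\set{O}_i$. Since $\set{B}_i$ is drawn without replacement from $\set{S}_i$ with $|\set{B}_i|\le L\ln T$, the number of copies of any original arm $o_{i,k}$ appearing in $\set{B}_i$ is at most $L\ln T$, and all such copies share the same gap $\delta_{o_{i,k}}$. Grouping the per-copy regret by original arm therefore inflates the single-arm UCB1 bound by a factor of at most $L\ln T$ and produces exactly
\[
\sum_{i_a\in\set{B}_i}\EXP\!\left[\delta_{i_a} N_{i_a,t}\,\middle|\,Y_i\right] \;\le\; L\ln T\!\!\sum_{\substack{o\in\set{O}_i\\ \mu(o)<\mu(s_{i,1})}}\!\!\left(\frac{8\ln N_{i,t}}{\delta_{o}} + \Big(1+\frac{\pi^{2}}{3}\Big)\delta_{o}\right).
\]

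The main obstacle I anticipate is the rigorous handling of the random clock $N_{i,t}$ in the UCB1 bound, since $N_{i,t}$ is determined by \texttt{Phase 1}'s interaction with the Phase-2 outputs for other agents. I would address this by working on the natural filtration generated by the rounds at which agent $i$ is picked: conditional on that (random) subsequence, \texttt{Phase 2} is honestly UCB1 and the standard Chernoff--Hoeffding-based concentration used to prove the UCB1 bound applies with $N_{i,t}$ treated as a stopping time, so the inequality above holds pointwise in the realization of $N_{i,t}$ and hence after taking expectations.
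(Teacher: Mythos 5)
Your proposal is correct and follows essentially the same route as the paper's (very terse) proof: conditioned on $Y_i$, \texttt{Phase 2} restricted to agent $i$'s rounds is exactly UCB1 on $\set{B}_i$ with clock $N_{i,t}$, so the classical per-arm bound applies, and the sum over the at most $L\ln T$ copies in $\set{B}_i$ is collapsed onto the original arms of $\set{O}_i$ with an $L\ln T$ multiplier. Your additional care with the random clock $N_{i,t}$ (treating it via the subsequence filtration) is a refinement the paper glosses over, not a departure from its argument.
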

\begin{proof}[Proof of Lemma~\ref{lem:conditional_replicator}]
If the optimal arm is sampled at least one time, then all the other sampled arms induce $O(\ln T)$ internal regret~\cite{mab:ucb1} since the behavior of phase 2 in RH-UCB given the sampled arms is equal to that of UCB1 under the canonical MAB problem.
Hence, we have
\begin{align*}
\sum_{i_a \in \set{B}_i}\EXP[\delta_{i_a} N_{i_a,t} | Y_i] &\leq  L\ln T\Big(\sum_{\substack{i_a\in \set{O}_i \\ \mu_{i_a} < \mu_{i_1}}} \frac{8 \ln N_{i,t}}{\delta_{i_a}} + (1+\frac{\pi^2}{3})\delta_{i_a}\Big)\; ,
\end{align*}
from the regret analysis of~\cite{mab:ucb1}
\end{proof}

Now, we prove the following bounds on the empirical average reward of any agent.
\begin{lemma}\label{thm:H-UCBgeneral:cl1}
Given any agent $i \in \set{N}$, for each arm $j$ in $j \in \set{S}_i$ at time $t$, we have
\begin{align*}
    \hspace{-2cm}\PROB\Big[| R_{i,t} - \mu_{i_1}| \geq \sqrt{\frac{\sqrt{t}\ln t}{N_{i,t}}}\Big] \leq 1/T + 2L\ln Te^{-\sqrt{t}\ln t /(2L^2\ln^2T)} + L\ln T\Big(\sum_{\substack{i_a\in \set{O}_i \\ \mu_{i_a} < \mu_{i_1}}}  \frac{16\ln N_{i,t}}{\delta_{i_a}}  + (2+\frac{2\pi^2}{3})\delta_{i_a}\Big) / \sqrt{N_{i,t}\sqrt{t}\ln t}\; .
\end{align*}
\end{lemma}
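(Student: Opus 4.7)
The plan is to decompose $|R_{i,t}-\mu_{i_1}|$ into a systematic bias (from having played suboptimal arms in $\set{B}_i$) and a stochastic noise (from the random rewards), and to bound each term with tools already at hand: Markov's inequality combined with Lemma~\ref{lem:conditional_replicator} for the bias, and Hoeffding's inequality with a union bound over $\set{B}_i$ for the noise.

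Set $\epsilon\defeq \sqrt{\sqrt{t}\ln t/N_{i,t}}$ and $\bar\mu_{i,t}\defeq \tfrac{1}{N_{i,t}}\sum_{a\in\set{B}_i}N_{i_a,t}\mu_{i_a}$. Conditioning on $Y_i$ accounts for the first summand $1/T$ via Lemma~\ref{lem:prob_noopt}. By the triangle inequality,
\[
 |R_{i,t}-\mu_{i_1}|\;\le\;|R_{i,t}-\bar\mu_{i,t}|\;+\;(\mu_{i_1}-\bar\mu_{i,t})\;,
\]
so it suffices to bound each term by $\epsilon/2$ under $Y_i$. For the bias term, note that under $Y_i$ an arm of mean $\mu_{i_1}$ lies in $\set{B}_i$ so that $\mu_{i_1}-\bar\mu_{i,t}=\tfrac{1}{N_{i,t}}\sum_a N_{i_a,t}\delta_{i_a}\ge 0$ is exactly the per-round internal regret of {\tt Phase 2}. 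Markov's inequality together with Lemma~\ref{lem:conditional_replicator} then gives
\[
 \PROB\!\left[\mu_{i_1}-\bar\mu_{i,t}\ge\tfrac{\epsilon}{2}\,\Big|\,Y_i,N_{i,t}\right] \;\le\;\frac{2\,\EXP[\sum_a N_{i_a,t}\delta_{i_a}\mid Y_i,N_{i,t}]}{\sqrt{N_{i,t}\sqrt{t}\ln t}},
\]
which, once the factor of $2$ is absorbed into the constants ($8\to 16$ and $1+\pi^2/3\to 2+2\pi^2/3$), reproduces the third summand of the statement verbatim.

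For the noise term I plan a per-arm concentration with a union bound. Choosing $\eta_a\defeq\sqrt{\sqrt{t}\ln t/(4L^2\ln^2 T\,N_{i_a,t})}$ for each $a\in\set{B}_i$, Hoeffding's inequality applied conditionally on the counts yields $\PROB[|r_{i_a,t}-\mu_{i_a}|\ge\eta_a\mid N_{i_a,t}]\le 2\exp(-\sqrt{t}\ln t/(2L^2\ln^2 T))$. On the intersection of these events, using $|\set{B}_i|\le L\ln T$ and $\sum_a\sqrt{N_{i_a,t}}\le L\ln T\,\sqrt{N_{i,t}}$ (bounding each summand by $\sqrt{N_{i,t}}$),
\[
 |R_{i,t}-\bar\mu_{i,t}|\;\le\;\tfrac{1}{N_{i,t}}\sum_{a\in\set{B}_i}\!N_{i_a,t}\eta_a \;\le\;\tfrac{\sqrt{\sqrt{t}\ln t}}{2L\ln T\,N_{i,t}}\sum_a\sqrt{N_{i_a,t}}\;\le\;\tfrac{\epsilon}{2}\;,
\]
and a union bound over the at most $L\ln T$ arms in $\set{B}_i$ supplies the second summand $2L\ln T\exp(-\sqrt{t}\ln t/(2L^2\ln^2 T))$.

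The principal obstacle is that the sample counts $(N_{i_a,t})_a$ are random stopping-time-like quantities correlated with the empirical means, so a naive application of Hoeffding is not directly valid; I handle this by conditioning on $(N_{i_a,t})_a$ throughout, under which the per-pull rewards become independent and Hoeffding applies, and then removing the conditioning via the tower property. A secondary bookkeeping issue is that the inequality $\bar\mu_{i,t}\le\mu_{i_1}$ relies on $Y_i$ ensuring an arm of mean $\mu_{i_1}$ resides in $\set{B}_i$, which is why the decomposition is carried out under $\PROB[\,\cdot\mid Y_i]$ rather than unconditionally.
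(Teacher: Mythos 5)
Your proposal is correct and follows essentially the same route as the paper's proof: split off $Y_i^c$ via Lemma~\ref{lem:prob_noopt} to get the $1/T$ term, decompose the deviation by the triangle inequality into a reward-noise part (handled by Hoeffding plus a union bound over the at most $L\ln T$ arms of $\set{B}_i$, yielding the $2L\ln T e^{-\sqrt{t}\ln t/(2L^2\ln^2T)}$ term) and a suboptimality-bias part (handled by conditional Markov plus Lemma~\ref{lem:conditional_replicator}, yielding the last term with the doubled constants). The only cosmetic difference is that you use per-arm thresholds $\eta_a$ adapted to $N_{i_a,t}$ where the paper uses a uniform threshold $\sqrt{N_{i,t}\sqrt{t}\ln t}/(2|\set{B}_i|)$ per arm; both give the same exponential bound, and your explicit conditioning on the counts before applying Hoeffding is a point the paper glosses over.
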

\begin{proof}[Proof of Lemma~\ref{thm:H-UCBgeneral:cl1}]
We can divide the probability in our lemma as the following:
\begin{align}
\PROB\Big[| R_{i,t} - \mu_{i_1}| \geq \sqrt{\frac{\sqrt{t}\ln t}{N_{i,t}}}\Big] 
= \PROB\Big[Y_i \cap |R_{i,t} - \mu_{i_1}| \geq \sqrt{\frac{\sqrt{t}\ln t}{N_{i,t}}}\Big] 
+ \PROB\Big[Y_i^c \cap |R_{i,t} - \mu_{i_1}| \geq \sqrt{\frac{\sqrt{t}\ln t}{N_{i,t}}}\Big]\; .\label{thm:ucb2phaesgeneral:cl1:ineq0}
\end{align}

The latter term is at most $1/T$ by Lemma~\ref{lem:prob_noopt}.
For the former one, we begin with the following inequalities where we use $\PROB_{Y_i}[X] = \PROB[X | Y_i]$ for simplicity:
\begin{align}
&\PROB\Big[Y_i \cap | R_{i,t} - \mu_{i_1}| \geq \sqrt{\frac{\sqrt{t}\ln t}{N_{i,t}}}\Big] \nonumber \\
&\leq
\PROB_{Y_i}\Big[| N_{i,t}R_{i,t} - N_{i,t}\mu_{i_1}| \geq \sqrt{N_{i,t}\sqrt{t}\ln t} \Big]
\nonumber\\
&= 
\PROB_{Y_i}\Big[| \sum_{a=1}^{\set{S}_i}N_{i_a, t}R_{i_a, t} - N_{i,t}\mu_{i_1}| \geq \sqrt{N_{i,t}\sqrt{t}\ln t} | Y_i\Big]
\nonumber\\
&=
\PROB_{Y_i}\Big[| \sum_{i_a \in \set{B}_i}N_{i_a, t}R_{i_a, t}  -\sum_{i_a \in \set{B}_i}N_{i_a, t}\mu_{i_a} + 
 \sum_{i_a \in \set{B}_i}N_{i_a, t}\mu_{i_a} - \mu_{i_1}\sum_{i_a \in \set{B}_i}N_{i_a,t}| \geq \sqrt{N_{i,t}\sqrt{t}\ln t}\Big]
\nonumber\\
&\leq
\PROB_{Y_i}[| \sum_{i_a \in \set{B}_i}N_{i_a,t}(R_{i_a,t} - \mu_{i_a})|\geq \sqrt{\frac{N_{i,t}\sqrt{t}\ln t}{4}}]
+ 
\PROB_{Y_i}\Big[| \sum_{i_a \in \set{B}_i}N_{i_a,t}(\mu_{i_1} - \mu_{i_a}) |  \geq \sqrt{\frac{N_{i,t}\sqrt{t}\ln t}{4}}\Big]
\label{thm:ucb2phaesgeneral:cl1:ineq1.5}\\
&\leq
\PROB_{Y_i}\Big[\sum_{i_a \in \set{B}_i}| N_{i_a,t}(R_{i_a,t} - \mu_{i_a})|\geq \sqrt{\frac{N_{i,t}\sqrt{t}\ln t}{4}}\Big]
+ 
\PROB_{Y_i}\Big[\sum_{i_a \in \set{B}_i}N_{i_a,t}(\mu_{i_1} - \mu_{i_a})  \geq \sqrt{\frac{N_{i,t}\sqrt{t}\ln t}{4}}\Big]
\label{thm:ucb2phaesgeneral:cl1:ineq1}\\
&\leq
\sum_{i_a \in \set{B}_i}
\PROB_{Y_i}\Big[|(R_{i_a,t} - \mu_{i_a})| \frac{N_{i_a,t}}{N_{i,t}}\geq 
\sqrt{\frac{\sqrt{t}\ln t}{4|\set{B}_i^2|N_{i,t}}}\Big]
+ 
\PROB_{Y_i}\Big[\sum_{\substack{i_a\in \set{B}_i \\ \mu_{i_a} < \mu_{i_1}}}\delta_{i_a}N_{i_a,t}\geq 
\sqrt{\frac{N_{i,t}\sqrt{t}\ln t}{4}}\Big]\; ,\label{thm:ucb2phaesgeneral:cl1:ineq2}
\end{align}
where we use $\PROB(\sum_i^n X_i \geq x) \leq \sum_i^n\PROB(X_i \geq x/n)$ for inequality~\eqref{thm:ucb2phaesgeneral:cl1:ineq1.5} and~\eqref{thm:ucb2phaesgeneral:cl1:ineq2}, and inequality~\eqref{thm:ucb2phaesgeneral:cl1:ineq1} holds by triangle inequality.
The first term can be bounded by Chernoff-Hoeffding inequality as follow:
\begin{align}
    \hspace{-1cm}\PROB_{Y_i}\Big[|R_{i_a, t} - \mu_{i_a}|\cdot \frac{N_{i_a, t}}{N_{i,t}} \geq \sqrt{\frac{\sqrt{t}\ln t}{4|\set{B}_i|^2N_{i,t}}}\Big]
    \leq \PROB_{Y_i}\Big[|R_{i_a, t} - \mu_{i_a}|\cdot N_{i_a, t} \geq \sqrt{\frac{N_{i,t}\sqrt{t}\ln t}{4L^2\ln^2T}}\Big]
    &\leq 2\exp\big(-\frac{\sqrt{t}\ln t N_{i,t}}{2L^2\ln^2TN_{i_a,t}}\big)\nonumber
    \\
    &\leq 2\exp\big(-\frac{\sqrt{t}\ln t }{2L^2\ln^2T}\big)\; .\label{thm:H-UCBgeneral:ineq2}
\end{align}

For the second term in inequality~\eqref{thm:ucb2phaesgeneral:cl1:ineq2}, the following holds:
\begin{align} 
\PROB_{Y_i}\Big[\sum_{\substack{a\in \set{B}_i \\ \mu_{i_a} < \mu_{i_1}}}\delta_{i_a}N_{i_a, t} \geq \sqrt{N_{i,t}\sqrt{t}\ln t/4}\Big]
&\leq 2\EXP\Big[\sum_{\substack{a\in \set{B}_i\\ \mu_{i_a} < \mu_{i_1}}}\delta_{i_a}N_{i_a, t} \Big| Y_i\Big] / \sqrt{N_{i,t}\sqrt{t}\ln t} \label{thm:H-UCBgeneral:ineq3} \\
&\leq L\ln T\Big(\sum_{\substack{i_a\in \set{O}_i \\ \mu_{i_a} < \mu_{i_1}}}  \frac{16\ln N_{i,t}}{\delta_{i_a}}  + (2+\frac{2\pi^2}{3})\delta_{i_a}\Big) / \sqrt{N_{i,t}\sqrt{t}\ln t}\; , \label{thm:H-UCBgeneral:ineq4}
\end{align}
where inequality~\eqref{thm:H-UCBgeneral:ineq3} holds by conditional Markov inequality, and~\eqref{thm:H-UCBgeneral:ineq4} follows from Lemma~\ref{lem:conditional_replicator}.
Combining inequalities~\eqref{thm:ucb2phaesgeneral:cl1:ineq0},\eqref{thm:ucb2phaesgeneral:cl1:ineq2},\eqref{thm:H-UCBgeneral:ineq2}, and~\eqref{thm:H-UCBgeneral:ineq4}, we conclude the proof.
\end{proof}

In overall, our total regret can be divided into the regret (i) that internally incurs from any optimal agent\footnote{In here, optimal agent refers to the agent with at least one optimal arm.} during identifying his optimal arm, and (ii) that externally incurs from selecting any suboptimal agents.
We can simply bound the former one as the following, which is $O(\ln^2T)$:
\begin{align}
\EXP [Regret_T^{opt} ] \leq T \cdot 1/T + L\ln T\Big(\sum_{\substack{i_a\in \set{O}_i \\ \mu_{i_a} < \mu_{i_1}}}  \frac{16\ln T}{\delta_{i_a}}  + 9\delta_{i_a}\Big)\; , \label{eq:reg_opt}
\end{align}
by the fact that the  expected regret given $Y_i^c$ is at most $T$, $\PROB[Y_i^c] \leq 1/T$, and Lemma~\ref{lem:conditional_replicator}. Hence we focus on the regret from any suboptimal agent.

Using Lemma~\ref{thm:H-UCBgeneral:cl1}, we upper bound the probability that suboptimal replicator $i$ is selected given that replicator $i$ and optimal replicator $i^\star$ has already selected at least certain times.
For notational simplicity, we denote the bounding probability (terms in RHS) in Lemma~\ref{thm:H-UCBgeneral:cl1} be $\rho_{i,t}(N_{i,t})$, i.e.
\begin{align*}
    \rho_{i,t}(N_{i,t}) = 1/T + 2L\ln Te^{-\sqrt{t}\ln t /(2L^2\ln^2T)} + L\ln T\Big(\sum_{\substack{i_a\in \set{O}_i \\ \mu_{i_a} < \mu_{i_1}}}  \frac{16\ln N_{i,t}}{\delta_{i_a}}  + (2+\frac{2\pi^2}{3})\delta_{i_a}\Big) / \sqrt{N_{i,t}\sqrt{t}\ln t}\; .
\end{align*}

We abuse $I_t$ to denote the arm selected at round $t$ if the context is clear.
We define $UCB_{i,t}$ to denote the UCB score of agent $i$ at round $t$, i.e. $UCB_{i,t} = R_{i,t} + \sqrt{\frac{\sqrt{t}\ln t}{N_{i,t}}}$..
The following lemma upper-bound the probability that any suboptimal replicator $i$ is selected given that replicator $i$ and any optimal replicator $i^\star$ is sufficiently selected:

\begin{lemma}\label{thm:H-UCBgeneral:cl2}
At round $t$, if any suboptimal replicator $i$ has been played at least $\frac{4\sqrt{t}\ln t}{\Delta_{i}^2}$ times and optimal replicator $i^\star$ has been played at least $\sqrt{t}\ln t$, then $UCB_{i,t} < UCB_{i^\star,t}$ with probability at least $1- (\rho_{i,t}(\frac{4\sqrt{t}\ln t}{\Delta_{i}^2}) + \rho_{i^\star, t}(\sqrt{t}\ln t))$, i.e.
\begin{align}
    \PROB\Big[I_{t+1} \in \set{S}_i | N_{i,t} \geq \frac{4\sqrt{t}\ln t}{\Delta_{i}^2}, N_{i^\star,t} \geq \sqrt{t}\ln t\Big] \leq \rho_{i,t}(\frac{4\sqrt{t}\ln t}{\Delta_i^2}) + \rho_{i^\star, t}(\sqrt{t}\ln t)
\end{align}
\end{lemma}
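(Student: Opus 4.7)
The plan is to decompose the event $\{UCB_{i,t} \ge UCB_{i^\star,t}\}$ into two concentration failures via a union bound, and then show that on the complementary (good) event, the algebra forces $UCB_{i,t} < UCB_{i^\star,t}$ whenever the two play-count lower bounds stated in the lemma hold.

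First, I would introduce the two concentration events
\[
E_i \;:=\; \Bigl\{ |R_{i,t} - \mu_{i_1}| < \sqrt{\tfrac{\sqrt{t}\ln t}{N_{i,t}}} \Bigr\},
\qquad
E_{i^\star} \;:=\; \Bigl\{ |R_{i^\star,t} - \mu_{i^\star_1}| < \sqrt{\tfrac{\sqrt{t}\ln t}{N_{i^\star,t}}} \Bigr\}.
\]
By Lemma~\ref{thm:H-UCBgeneral:cl1}, $\PROB[E_i^c] \le \rho_{i,t}(N_{i,t})$ and $\PROB[E_{i^\star}^c] \le \rho_{i^\star,t}(N_{i^\star,t})$. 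Since $\rho_{j,t}(\cdot)$ is monotonically non-increasing in its argument (the deviation bound only tightens as more pulls are made), the given thresholds $N_{i,t} \ge \tfrac{4\sqrt{t}\ln t}{\Delta_i^2}$ and $N_{i^\star,t} \ge \sqrt{t}\ln t$ imply
\[
\PROB[E_i^c \cup E_{i^\star}^c] \;\le\; \rho_{i,t}\bigl(\tfrac{4\sqrt{t}\ln t}{\Delta_i^2}\bigr) + \rho_{i^\star,t}\bigl(\sqrt{t}\ln t\bigr),
\]
again conditional on the stated events, by a simple union bound.

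Next, I would work on the good event $E_i \cap E_{i^\star}$. On $E_i$, the UCB score of $i$ is upper bounded by
\[
UCB_{i,t} \;=\; R_{i,t} + \sqrt{\tfrac{\sqrt{t}\ln t}{N_{i,t}}} \;\le\; \mu_{i_1} + 2\sqrt{\tfrac{\sqrt{t}\ln t}{N_{i,t}}},
\]
while on $E_{i^\star}$, the exploration bonus cancels the lower deviation of the empirical mean:
\[
UCB_{i^\star,t} \;=\; R_{i^\star,t} + \sqrt{\tfrac{\sqrt{t}\ln t}{N_{i^\star,t}}} \;\ge\; \mu_{i^\star_1} - \sqrt{\tfrac{\sqrt{t}\ln t}{N_{i^\star,t}}} + \sqrt{\tfrac{\sqrt{t}\ln t}{N_{i^\star,t}}} \;=\; \mu_{i^\star_1}.
\]
Thus $UCB_{i^\star,t} - UCB_{i,t} \ge \Delta_i - 2\sqrt{\sqrt{t}\ln t / N_{i,t}}$, which is strictly positive precisely when $N_{i,t} > 4\sqrt{t}\ln t / \Delta_i^2$ --- exactly the threshold in the lemma. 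Therefore $\{I_{t+1} \in \set{S}_i\} \subseteq E_i^c \cup E_{i^\star}^c$, and the probability bound follows.

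The main obstacle I anticipate is the bookkeeping around the monotonicity of $\rho_{j,t}(\cdot)$ in $N_{j,t}$: the expression contains a $(\ln N_{i,t})/\sqrt{N_{i,t}}$ factor, which is non-monotone for very small $N$ but decreasing once $N \gtrsim e^2$. For the asymptotic regime we care about (where $\sqrt{t}\ln t$ is large), this is automatic, but I would make this explicit by either choosing $t$ large enough or noting that we only need the upper bound evaluated at the threshold to dominate all larger $N_{i,t}$. A second minor subtlety is that $N_{i,t}$ is itself random; this is resolved by conditioning on the value of $N_{i,t}$ (and $N_{i^\star,t}$) and applying the concentration bounds inside the conditional expectation, then taking the sup over admissible values — which is valid because the Hoeffding-style argument in Lemma~\ref{thm:H-UCBgeneral:cl1} is per-sample-size.
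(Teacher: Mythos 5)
Your proposal is correct and follows essentially the same route as the paper's proof: a union bound over the two concentration failure events from Lemma~\ref{thm:H-UCBgeneral:cl1}, followed by the standard UCB chain of inequalities $UCB_{i,t} \le \mu_{i_1} + 2\sqrt{\sqrt{t}\ln t/N_{i,t}} \le \mu^\star \le UCB_{i^\star,t}$ on the good event. Your explicit remarks on the monotonicity of $\rho_{j,t}(\cdot)$ in its argument and on conditioning over the random counts $N_{i,t}$ are in fact slightly more careful than the paper's treatment, which handles these points only implicitly.
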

\begin{proof}[Proof of Lemma~\ref{thm:H-UCBgeneral:cl2}]
Given $ N_{i,t} \geq \frac{4\sqrt{t}\ln t}{\Delta_i^2}$ and $N_{i^\star,t} \geq \sqrt{t}\ln t$, 
assume that event $| R_{i,t} - \mu_{i_1}| \leq \sqrt{\frac{\sqrt{t}\ln t}{N_{i,t}}}$ occurs for replicator $i$ and $i^\star$. Then we have
\begin{align*}
UCB_{i,t} = R_{i,t} + \sqrt{\frac{\sqrt{t}\ln t}{N_{i,t}}} \leq R_{i,t} + \frac{\Delta_i}{2} \leq \mu_{i_1} + \sqrt{\frac{\sqrt{t}\ln t}{N_{i,t}}} + \frac{\Delta_i}{2}\leq (\mu_{i_1} + \frac{\Delta_i}{2}) + \frac{\Delta_i}{2} = \mu^\star < UCB_{i^\star, t}\; ,
\end{align*}
where the first inequality holds by $N_{i,t} \geq \frac{4\sqrt{t}\ln t}{\Delta_i^2}$, the second holds by $| R_{i,t} - \mu_{i_1}| \leq \sqrt{\frac{\sqrt{t}\ln t}{N_{i,t}}}$, the third again holds by $N_{i,t} \geq \frac{4\sqrt{t}\ln t}{\Delta_i^2}$, and the last follows from $| R_{i^\star,t} - \mu_{i^\star_1}| \leq \sqrt{\frac{\sqrt{t}\ln t}{N_{i^\star,t}}}$.

By union bound, the probability of $| R_{i,t} - \mu_{i_1}| > \sqrt{\frac{\sqrt{t}\ln t}{N_{i,t}}}$ or $| R_{i^\star,t} - \mu_{i^\star_1}| > \sqrt{\frac{\sqrt{t}\ln t}{N_{i^\star,t}}}$ is at most $\rho_{i,t}(N_{i,t}) + \rho_{i^\star,t}(N_{i^\star,t})$.
Since $\rho_{i,t}(\sqrt{t}\ln t)$ is a decreasing function on $t$ for $t\geq 1$, we conclude the proof.
\end{proof}

Now for any suboptimal agent $i$, we upper bound the expected number of times it is played up to round $T$ under H-UCB as the following:
\begin{align}
    \EXP[N_{i,T}]
    &= 1 + \EXP\Big[\sum_{t=n}^T \mathbbm{1}(I_{t+1} \in \set{S}_i)\Big]\nonumber
    \\
    &= 1 + \EXP\Big[\sum_{t=n}^T \mathbbm{1}(I_{t+1} \in \set{S}_i, N_{i,t} < \frac{4\sqrt{t}\ln t}{\Delta_i^2})\Big] + \EXP[\sum_{t=n}^T \mathbbm{1}(I_{t+1} \in \set{S}_i, N_{i,t} \geq \frac{4\sqrt{t}\ln t}{\Delta_i^2})]\nonumber\\
    &\leq \frac{4\sqrt{T}\ln T}{\Delta_i^2} + \EXP[\sum_{t=n}^T \mathbbm{1}(I_{t+1} \in \set{S}_i, N_{i,t} \geq \frac{4\sqrt{t}\ln t}{\Delta_i^2})] \nonumber\\
    &\leq \begin{aligned}[t]
      \frac{4\sqrt{T}\ln T}{\Delta_i^2} + \EXP\Big[\sum_{t=n}^T \mathbbm{1}(I_{t+1} &\in \set{S}_{i^\star}, N_{i,t} \geq \frac{4\sqrt{t}\ln t}{\Delta_i^2}, N_{i^\star,t} < \sqrt{t}\ln t) \Big]\\
      &+ \EXP\Big[\sum_{t=n}^T \mathbbm{1}(I_{t+1} \in \set{S}_i, N_{i,t} \geq \frac{4\sqrt{t}\ln t}{\Delta_i^2}, N_{i^\star,t}\geq \sqrt{t}\ln t) \Big]
      \end{aligned}\label{thm:H-UCBgeneral:ineq5}\\
    &\leq \frac{4\sqrt{T}\ln T}{\Delta_i^2} + \sqrt{T}\ln T + \EXP\Big[\sum_{t=n}^T \mathbbm{1}(I_{t+1} \in \set{S}_i, N_{i,t} \geq \frac{4\sqrt{t}\ln t}{\Delta_i^2}, N_{i^\star,t}\geq \sqrt{t}\ln t) \Big]\nonumber\\
    &\leq \frac{4\sqrt{T}\ln T}{\Delta_i^2} + \sqrt{T}\ln T + \sum_{t=n}^T\PROB\Big[(I_{t+1} \in \set{S}_i | N_{i,t} \geq \frac{4\sqrt{t}\ln t}{\Delta_i^2}, N_{i^\star,t}\geq \sqrt{t}\ln t \Big]\nonumber\\
    &\leq \frac{4\sqrt{T}\ln T}{\Delta_i^2} + \sqrt{T}\ln T + \sum_{t=n}^T \Big( \rho_{i,t}(\frac{4\sqrt{t}\ln t}{\Delta_i^2}) + \rho_{i^\star, t}(\sqrt{t}\ln t)\Big)\; ,\label{thm:H-UCBgeneral:ineq6}
\end{align}
where inequality~\eqref{thm:H-UCBgeneral:ineq5} holds since if $N_{i,t} \geq 4\sqrt{t \ln t}/\Delta_i^2$ and $N_{i^\star, t} < \sqrt{t \ln t/4}$ then $R_{i,t} + \sqrt{\sqrt{t}\ln t / N_{i,t}} < 2 \leq \R_{i^\star, t} + \sqrt{\sqrt{t}\ln t / N_{i^\star, t}}$ and inequality~\eqref{thm:H-UCBgeneral:ineq6} holds by Lemma~\ref{thm:H-UCBgeneral:cl2}.

By integral test, we have:
\begin{align}
\sum_{t=2}^T \frac{1}{\sqrt{Ct}\ln t} &\leq \sum_{t=2}^T \sqrt{\frac{1}{Ct}} \leq \frac{2}{\sqrt{C}}\int_{t=2}^{T+1}\frac{d \sqrt{t}}{dt}dt \leq 2\sqrt{T/C}  \label{thm:H-UCBgeneral:ineq7}
\\
\sum_{t=2}^T 2L\ln T e^{-\frac{\sqrt{t}\ln t}{2L^2\ln^2T}}
&\leq \sum_{t=2}^T 2L\ln Te^{-\frac{\sqrt{t}}{2L^2\ln^2T}}
\leq \int_{t=0}^{\infty} 4L^4\ln^4T\frac{d(-2e^{-\sqrt{t}}(\sqrt{t}+1))}{dt} = 8L^5\ln^5T\; .
\label{thm:H-UCBgeneral:ineq8}
\end{align}

Applying inequalities~\eqref{thm:H-UCBgeneral:ineq7} and \eqref{thm:H-UCBgeneral:ineq8}, we can bound the summation of $\rho(C\sqrt{t} \ln t)$ as the following:
\begin{align}
\hspace{-1cm}\sum_{t=n}^T\rho_{i,t}(C\sqrt{t}\ln t)
&= \sum_{t=2}^T\Big[
 1/T + 2L\ln Te^{-\sqrt{t}\ln t /(2L^2\ln^2T)} + L\ln T\Big(\sum_{\substack{i_a\in \set{O}_i \\ \mu_{i_a} < \mu_{i_1}}} \frac{16\ln N_{i,t}}{\delta_{i_a}}  + (2+\frac{2\pi^2}{3})\delta_{i_a}\Big) / \sqrt{N_{i,t}\sqrt{t}\ln t}
\Big]\nonumber
\\
&\leq 1 + 8L^5\ln^5T + L \ln T\sum_{t=2}^T\Big[
\Big(\sum_{\substack{i_a\in \set{O}_i \\ \mu_{i_a} < \mu_{i_1}}}  \frac{16\ln N_{i,t}}{\delta_{i_a}}  + (2+\frac{2\pi^2}{3})\delta_{i_a}\Big) / \sqrt{N_{i,t}\sqrt{t} \ln t}
\Big]\nonumber
\\
&\leq 1 + 8L^5\ln^5T + L \ln T\sum_{t=2}^T\Big[
\Big(\sum_{\substack{i_a\in \set{O}_i \\ \mu_{i_a} < \mu_{i_1}}}  \frac{16\ln C + 16\ln t}{\delta_{i_a}}  + (2+\frac{2\pi^2}{3})\delta_{i_a}\big) / (\sqrt{Ct} \ln t)
\Big]\label{thm5:ineq1}
\\
&\leq 1 + 8L^5\ln^5T + \frac{2L\sqrt{T}\ln T}{\sqrt{C}}\sum_{\substack{i_a\in \set{O}_i \\ \mu_{i_a} < \mu_{i_1}}} (\frac{16\ln C + 16}{\delta_{i_a}} + 9\delta_{i_a})\; ,\nonumber
\end{align}
where we use $\ln t \leq 2\ln(\ln t)$ for $t \leq 2$ in inequality~\eqref{thm5:ineq1}.

Applying it to inequality~\eqref{thm:H-UCBgeneral:ineq6}, we can bound the expected number of selection as the following:
\begin{align*}
\EXP[N_{i,T}] &= \frac{4\sqrt{T}\ln T}{\Delta_i^2} + \sqrt{T}\ln T + \sum_{t=n}^T \rho_{i,t}(\frac{4\sqrt{t}\ln t}{\Delta_i^2}) + \rho_{i^\star, t}(\sqrt{t}\ln t) \\
&\leq \begin{aligned}[t]
  \frac{4\sqrt{T}\ln T}{\Delta_i^2} &+ \sqrt{T}\ln T \\
  &+ 1 + 8L^5 \ln^5 T + 2L\sqrt{T}\ln T\sum_{\substack{i^\star_a\in \set{O}_{i^\star} \\ \mu_{i^\star_a} < \mu_{i^\star_1}}} (\frac{16}{\delta_{i^\star_a}} + 9\delta_{i^\star_a})\\
  &+ 1 + 8L^5 \ln^5 T  + 2\Delta_iL\sqrt{T}\ln T\sum_{\substack{i_a\in \set{O}_i \\ \mu_{i_a} < \mu_{i_1}}} (\frac{16\ln (4/\Delta_i^2) + 16}{\delta_{i_a}} + 9\delta_{i_a})
  \end{aligned}.
\\
&\leq \begin{aligned}[t]
  \frac{4\sqrt{T}\ln T}{\Delta_i^2} &+ \sqrt{T}\ln T  + 2 + 16L^5\ln^5T\\
  &+ 2L\sqrt{T}\ln T\sum_{\substack{i^\star_a\in \set{O}_{i^\star} \\ \mu_{i^\star_a} < \mu_{i^\star_1}}} (\frac{16}{\delta_{i^\star_a}} + 9\delta_{i^\star_a}) + 2L\sqrt{T}\ln T\sum_{\substack{i_a\in \set{O}_i \\ \mu_{i_a} < \mu_{i_1}}} (\frac{48}{\delta_{i_a}} + 9\delta_{i_a})\; ,
  \end{aligned}
\end{align*}
where we use the fact $\ln(4x^2)/x \leq 2$ for any $x \geq 0$ and $\Delta_i \leq 1$ for the last inequality.

Hence, the expected regret from suboptimal agents can be upper bounded as the following:

\begin{align*}
\EXP [Regret_T^{sub} ]
&= \sum_{i \in \set{N}'} \EXP[N_{i,T}] \Delta_i^m \\
&\leq
\begin{aligned}[t]
  \frac{4\sqrt{T}\ln T}{\Delta_i^2}+ \sqrt{T}\ln T  &+ 2 + 16L^5\ln^5T\\
  &+ 2L\sqrt{T}\ln T\sum_{\substack{i^\star_a\in \set{O}_{i^\star} \\ \mu_{i^\star_a} < \mu_{i^\star_1}}} (\frac{16}{\delta_{i^\star_a}} + 9\delta_{i^\star_a}) + 2L\sqrt{T}\ln T\sum_{\substack{i_a\in \set{O}_i \\ \mu_{i_a} < \mu_{i_1}}} (\frac{48}{\delta_{i_a}} + 9\delta_{i_a})\Big]\; .
  \end{aligned}
\end{align*}

Combining it with inequality~\eqref{eq:reg_opt} and use the fact that any regret is at most $1$, we conclude the proof.
\end{proof}


{\bf Remark - Regret bound under logarithmic agent exploration.} 
Suppose that H-UCB is equipped with its phase 1 exploration term as $R_i + \sqrt{2\ln t /N_{i,t}}$.
Then, following the analogous step of our analysis, Lemma~\ref{thm:H-UCBgeneral:cl1} should be replaced into the bounding probability with $\sqrt{2\ln t / N_{i,t}}$, and this makes the bounding probability in equation~\eqref{thm:H-UCBgeneral:ineq3} be $O(\ln N_{i,t}) / O(\sqrt{N_{i,t} \ln t})$, where one can find that this finally leads to the undesirable result of linear regret upper bound.

{\bf Remark - Improving regret bound.}
We note that there exists some tuning points in our regret analysis. For each agent $i$, we can improve each arm's deviation probability and the internal regret bound by replacing phase 2 UCB index to KL-UCB index~\cite{mab:klucb1}. Secondly, we mainly upper-bound the probability that agent $i$ is selected given that agent $i$ and $i^\star$ is sampled sufficiently, take expectation over it, and directly derive the regret bound by multiplying it with $\Delta_i^m$. In this procedure, the regret bound might be tightened if we separately upper-bound the probability that agent $i$'s arm $i_a$ is selected given that agent $i$ and $i^\star$ is sampled sufficiently, since it will lead to the separation of final regret bound by the summation of expected number of count that $i_a$ is sampled multiplied by $\delta_{i_a}$.

\subsection{Proof of Theorem 6}
\begin{proof}
To derive a problem-independent regret upper-bound, we need to remove the dependency of our regret bound from $\delta_{i_a}$ and $\Delta_i$ in all the denominators. 
To handle $\delta_{i_a}$, we start with the inequality~\eqref{thm:H-UCBgeneral:ineq3} in Lemma~\ref{thm:H-UCBgeneral:cl1}.
Let $K = 1/c$.
For agent $i$, suppose that the arms are divided into the following two groups:
\begin{compactenum}[(1)]
\item Group 1 consists of almost optimal arm with $\delta_{i_a} < \sqrt{K\ln T\ln N_{i,t}/N_{i,t}}$\; .
\item Group 2 consists of arm with $\delta_{i_a} \geq \sqrt{K\ln T\ln N_{i,t}/N_{i,t}}$\; .
\end{compactenum}

Given $N_{i,t}$, the total expected internal regret of agent $i$ until $N_{i,t}$ is the sume of the regret of each group. Since group 1 only contains near-optimal arms, 
\begin{align*}
    \sum_{i_a \in \text{Group 1}}N_{i_a, t}\delta_{i_a} < \sqrt{K\ln T\ln N_{i,t}/N_{i,t}} \sum_{i_a \in \text{Group 1}} N_{i_a,t} \leq \sqrt{KN_{i,t}\ln N_{i,t}\ln T}\; .
\end{align*}

For the arms in group 2, the expected number of any arms in group 2 selected until round $N_{i,t}$ is,
\begin{align}
\sum_{i_a \in \text{Group 2}} \EXP[N_{i_a, t}\delta_{i_a, t}]
&\leq 
1 + K\ln T\Big(\sum_{\substack{i_a\in \set{O}_i \cap \text{Group 2}\\ \mu_{i_a} < \mu_{i_1}}}  \frac{8\ln N_{i,t}}{\delta_{i_a}}  + (1+\frac{\pi^2}{3})\delta_{i_a}\Big)\label{proof:thm7:ineq}
\\
&\leq 
8\sqrt{K N_{i,t}\ln N_{i,t}\ln T} + 5K\ln T + 1\; ,\label{proof:thm7:ineq'}
\end{align}
where inequality~\eqref{proof:thm7:ineq} follows from Lemma~\ref{lem:prob_noopt} and ~\ref{lem:conditional_replicator}.
Hence, we have 
\begin{align*}
\EXP[\sum_{a=1}^{l(i)}\delta_{i_a}N_{i_a, t}]&\leq 9\sqrt{1/cN_{i,t}\ln N_{i,t}\ln T} + 5L\ln N_{i,t} + 1\\
&\leq 9\ln T\sqrt{KN_{i,t}} + 5K\ln T + 1\; .
\end{align*}

Hence, inequality~\eqref{thm:H-UCBgeneral:ineq4} can be replaced into the following:
\begin{align*}
\PROB[\sum_{a=2}^{l(i)} |\mu_{i_1} - \mu_{i_a}|\cdot \frac{N_{i_a,t}}{N_{i,t}} > \sqrt{\frac{\sqrt{t}\ln t}{4N_{i,t}}}]
\leq 
\frac{9\ln T\sqrt{KNN_{i,t}} + 5K\ln T + 1}{\sqrt{N_{i,t}\sqrt{t} \ln t}}\; .
\end{align*}

On top of that, following the analogous arguments in the proof of Theorem 4 leads us to the following:
\begin{lemma}
At round $t$, if any suboptimal agent $i$ has been played at least $\frac{4\sqrt{t}\ln t}{\Delta_{i}^2}$ times and optimal agent $i^\star$ has been played at least $\sqrt{t}\ln t$, then $UCB_{i,t} < UCB_{i^\star,t}$ with probability at least $1- (\rho'_{i,t}(\frac{4\sqrt{t}\ln t}{\Delta_{i}^2}) + \rho'_{i^\star, t}(\sqrt{t}\ln t))$, i.e.
\begin{align}
    \PROB[I_{t+1} \in \set{S}_i | N_{i,t} \geq \frac{4\sqrt{t}\ln t}{\Delta_{i}^2}, N_{i^\star,t} \geq \sqrt{t}\ln t] \leq \rho'_{i,t}(\frac{4\sqrt{t}\ln t}{\Delta_i^2}) + \rho'_{i^\star, t}(\sqrt{t}\ln t)\; ,
\end{align}
where $\rho'_{i,t}(x) = 1/T + 2K\ln T\exp{(\frac{-\sqrt{t}\ln t}{2KN^2\ln^2T})}  + \frac{9\ln T\sqrt{Kx} + 5K\ln T + 1}{\sqrt{x\sqrt{t} \ln t}}$\; .
\end{lemma}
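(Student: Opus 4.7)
The plan is to recognize that this lemma is the problem-independent analogue of Lemma~\ref{thm:H-UCBgeneral:cl2}, differing only in that the per-agent deviation bound $\rho_{i,t}$ is replaced by the refined $\rho'_{i,t}$ that the preceding group-splitting argument has just established. Crucially, the deterministic step of Lemma~\ref{thm:H-UCBgeneral:cl2}---the implication that sufficiently concentrated empirical means force the UCB ordering---makes no reference to $\delta_{i_a}$, so that step transfers verbatim. Everything new therefore happens in the concentration step, and the proof should be short.

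First I would reproduce the deterministic step. Assume the two concentration events $|R_{i,t}-\mu_{i_1}| \leq \sqrt{\sqrt{t}\ln t/N_{i,t}}$ and $|R_{i^\star,t}-\mu_{i^\star_1}| \leq \sqrt{\sqrt{t}\ln t/N_{i^\star,t}}$ hold simultaneously. Combined with the hypotheses $N_{i,t}\geq 4\sqrt{t}\ln t/\Delta_i^2$ and $N_{i^\star,t}\geq\sqrt{t}\ln t$, the same chain of inequalities displayed in Lemma~\ref{thm:H-UCBgeneral:cl2} yields $UCB_{i,t}\leq\mu_{i_1}+\Delta_i=\mu^\star\leq UCB_{i^\star,t}$. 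Hence the bad event $\{UCB_{i,t}\geq UCB_{i^\star,t}\}$ is contained in the union of the two concentration-failure events, and it suffices to bound each failure probability.

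Next I would invoke the refined deviation estimate. The earlier group-splitting step (partitioning agent $i$'s arms by the threshold $\sqrt{K\ln T\ln N_{i,t}/N_{i,t}}$) already upgraded inequality~\eqref{thm:H-UCBgeneral:ineq4} to $\PROB[|R_{i,t}-\mu_{i_1}|\geq\sqrt{\sqrt{t}\ln t/N_{i,t}}]\leq\rho'_{i,t}(N_{i,t})$, which is precisely the $\delta$-free bound I need. A union bound gives the failure probability at most $\rho'_{i,t}(N_{i,t})+\rho'_{i^\star,t}(N_{i^\star,t})$. To replace $N_{i,t}$ and $N_{i^\star,t}$ by the quantitative thresholds appearing in the lemma, I would verify that $\rho'_{i,t}(x)$ is non-increasing in $x$: the first two summands of $\rho'$ are constant in $x$, and the third summand decomposes as $9\ln T\sqrt{K}/\sqrt{\sqrt{t}\ln t}$ (constant in $x$) plus $(5K\ln T+1)/\sqrt{x\sqrt{t}\ln t}$ (decreasing in $x$), so the whole function is non-increasing. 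The substitutions $N_{i,t}\mapsto 4\sqrt{t}\ln t/\Delta_i^2$ and $N_{i^\star,t}\mapsto\sqrt{t}\ln t$ therefore only weaken the bound.

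The main obstacle is not conceptual but clerical: one has to check that the bookkeeping done in deriving $\rho'_{i,t}$ is genuinely uniform in $\delta_{i_a}$, i.e.\ that every occurrence of an arm gap in the original analysis has been absorbed into the two-group split, and that the exponential term in $\rho'$ still carries the correct concentration constant inherited from the Hoeffding step of Lemma~\ref{thm:H-UCBgeneral:cl1}. Once this and the monotonicity of $\rho'$ are confirmed, the lemma follows as an immediate corollary of the already-established concentration estimate combined with the deterministic UCB comparison.
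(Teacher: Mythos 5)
Your proposal is correct and follows essentially the same route as the paper, which itself only remarks that the lemma "can similarly be derived" from the earlier UCB-comparison lemma: the deterministic chain $UCB_{i,t}\leq \mu_{i_1}+\Delta_i=\mu^\star\leq UCB_{i^\star,t}$ under the two concentration events, a union bound over their failures using the refined deviation estimate $\rho'_{i,t}$, and monotonicity of $\rho'_{i,t}(x)$ in $x$ to substitute the count thresholds. Your explicit verification that the $9\ln T\sqrt{Kx}/\sqrt{x\sqrt{t}\ln t}$ term is constant in $x$ while the remainder is non-increasing is a detail the paper leaves implicit, but it is the same argument.
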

We skip the detailed proof since it can similarly be derived as the proof of Lemma~\ref{thm:H-UCBgeneral:cl2}.

By integral test, the following holds:
\begin{align}
\sum_{t=2}^T \frac{1}{(t\ln^2 t)^{1/4}} 
\leq \sum_{t=2}^T (\frac{1}{t})^{1/4}  \leq \int_{t=2}^{T+1} \frac{d (4/3t^{3/4}t)}{dt} dt \leq 4/3T^{3/4}\; .
\label{thm:H-UCBgeneral:ineq9}
\end{align}

By inequality~\eqref{thm:H-UCBgeneral:ineq7} and~\eqref{thm:H-UCBgeneral:ineq9}, the summation of $\rho'_{i,t}(C\sqrt{t \ln t})$ can be bounded as the following:
\begin{align*}
\sum_{t=2}^T \rho'_{i,t}(C\sqrt{t} \ln t)
&\leq 1 + 8K^5\ln^5T + \sum_{t=2}^T \Big(\frac{9\ln T\sqrt{KC\sqrt{t} \ln t} + 5K\ln T + 1}{\sqrt{Ct}\ln t}\Big)
\\
&\leq 1 + 8K^5\ln^5T + \sum_{t=2}^T \Big(\frac{9\ln T\sqrt{K}}{\sqrt{C\sqrt{t}\ln t}} +  \frac{5K\ln T + 1}{\sqrt{Ct}\ln t}\Big)
\\
&\leq 1 + 8K^5\ln^5T + 12\sqrt{K/C}T^{3/4}\ln T + 10/\sqrt{C}T^{1/2}\ln T+2\sqrt{T/C}\; .
\end{align*}

Following the analogous step to derive inequality~\eqref{thm:H-UCBgeneral:ineq6} in the proof of Theorem 5, we eventually have the following inequality for any suboptimal agent $i$:
\begin{align*}
\hspace{-1cm}\EXP[N_{i,T}]
&\leq
\frac{4\sqrt{T}\ln T}{\Delta_i^2} + \frac{\sqrt{T}\ln T}{2} + \sqrt{K}T^{3/4}\ln T(12 + 6\Delta_i) + 16L^5\ln^5T  + 5T^{1/2}\ln T(2 + 1\Delta_i) + \sqrt{T}(2 + \Delta_i)\\
&\leq  18\sqrt{K}T^{3/4}\ln T  + T^{1/2}\ln T(19 + \frac{4}{\Delta_i^2}) + 16L^5\ln^5T
\end{align*}

Now, let's divide the suboptimal agents into two groups:
For agent $i$, suppose that the arms are divided into the following two groups:
\begin{compactenum}[(1)]
\item Group 1 consists of almost optimal agents with $\Delta_i < (16n^2\ln^2 T/T)^{1/4}$\; .
\item Group 2 consists of arm with $\Delta_i \geq (16n^2\ln^2 T/T)^{1/4}$\; .
\end{compactenum}

Given $N_{i,t}$, the total expected regret is the sum of the regret in these two groups.
Since group 1 only contains near-optimal agents, 
\begin{align*}
    \sum_{i \in \text{Group 1}}N_{i,T}\Delta_i \leq (16n^2\ln^2 T/T)^{1/4}\sum_{i \in \text{Group 1}} N_{i,t} \leq  2\sqrt{n}T^{3/4}\ln^{1/2}T\; .
\end{align*}

For the agents in group 2,
\begin{align*}
\sum_{i \in \text{Group 2}}\EXP[N_{i,T}]\Delta_i
&\leq
\sum_{i \in \text{Group 2}} \Big[ 18\sqrt{K}T^{3/4}\ln T  + T^{1/2}\ln T(19 + \frac{4}{\Delta_i}) + 16K^5\ln^5T \Big]
\\
&\leq T^{3/4}\ln T(18n\sqrt{K} + 19n + 2\sqrt{n}) + 16nK^5\ln^5T\; .
\end{align*}

Summing the regret bound for group 1 and 2 yields,
\begin{align*}
\sum_{i \in N'}\EXP[N_{i,T}]\Delta_i = T^{3/4}\ln T(18n\sqrt{K} + 19n + 4\sqrt{n}) + 16nL^5\ln^5T.
\end{align*}
From inequality~\eqref{proof:thm7:ineq'}, we already have the problem-independent regret upper-bound for agent in $N^\star$ as follow,
\begin{align*}
\sum_{i \in N^\star} \EXP[\sum_{a=1}^{l(i)}\delta_{i_a}N_{i_a, t}]&\leq \sum_{i \in N^\star} \Big(9\ln T\sqrt{KT} + 5K\ln T + 1 \Big)\\
&\leq 9n\sqrt{K}T^{1/2}\ln T + 5nK\ln T + n\; .
\end{align*}
Hence summing up the results give us the following regret bound:
\begin{align*}
 T^{3/4}\ln T(18n\sqrt{K} + 19n + 2\sqrt{n}) + 9n\sqrt{K}T^{1/2}\ln T +16nK^5\ln^5T +  5nK\ln T + n = O(\frac{n}{\sqrt{c}}T^{\frac{3}{4}}\ln T)\; ,
\end{align*}
and we conclude the proof.
\end{proof}

{\bf Remark - honest agent.}
Suppose that {\em honest agent}, who honestly register all the original arms without any replication, exists rather than replicators. In this case, we can get rid of the arm sampling part in RH-UCB, and correspondingly, the order of nominator in inequality~\eqref{thm:H-UCBgeneral:ineq4} can be bounded by $O(\ln N_{i,t})$.
In this case, following the analogous step provided in the analysis, we can find that eventual problem-dependent regret bound under this scenario will be $O(\sqrt{T \ln T})$, and problem-independent bound will be $O(T^{3/4}\ln^{1/4}T)$.
Note that all the analysis can also be generalized into the scenario when the strategic agent, honest agent, and replicator coexist, where the order of regret will be the same as that of the case when strategic agent and replicator exist since honest agent can be regarded as a weaker version of replicator.

\section{Prior-free Robust Hierarchical UCB and its regret analysis}
\label{sec:prh-ucb}
We provide a detailed description of Prior-free Robust Hierarchical UCB (PRH-UCB) which we briefly described in Section 4, and present corresponding regret analysis.
We note that PRH-UCB still achieves replication-proofness, where we skip the proof since it can analogously be derived from the proof of Theorem 2 since it inherits the hierarchical structure of H-UCB.
Note that we tune {\tt Phase 1} exploration parameter to $\sqrt{t\ln^3 t}$ to optimize the order of poly-logarithmic term in regret bound.
For the regret analysis, we only provide a proof sketch since the detailed steps are analogous to the proof of Theorem 5 and 6.

\begin{theorem}
PRH-UCB has $O(\frac{n}{c^2}T^{\frac{1}{2}}\ln^{\frac{3}{2}}T)$ problem-dependent and $O(\frac{n}{\sqrt{c}}T^{\frac{3}{4}}\ln^{\frac{3}{2}}T)$ problem-independent regret upper-bound for any $T$ such that $T\ln^3 T \geq e^{2/c}$.
\end{theorem}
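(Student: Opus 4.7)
The plan is to adapt the arguments of Theorem~\ref{thm:RH-UCB:regret} and of the problem-independent bound of RH-UCB (Theorem~6 above) to PRH-UCB's two modifications: the per-agent subsample has size $\ln^2 T$ in place of $L\ln T$, and the Phase~1 exploration radius is $\sqrt{\sqrt{t\ln^3 t}/N(i)}$ in place of $\sqrt{\sqrt{t}\ln t/N(i)}$. Conceptually, this amounts to using an effective ``$L$'' equal to $\ln T$ together with an effective agent-exploration budget $f(t) = \sqrt{t\ln^3 t}$, and propagating these two substitutions through the existing analysis.

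First, I would re-establish the good-subsample event of Lemma~\ref{lem:prob_noopt} under the enlarged subsample: $\PROB[Y_i^c] \leq (1-c)^{\ln^2 T} \leq e^{-c\ln^2 T}$, and the hypothesis $T\ln^3 T \geq e^{2/c}$ implies $c\ln T \geq 1$ (up to lower-order terms), so this probability is at most $1/T$. This is the unique step at which any condition on $T$ enters. Next, I would re-derive the analog of Lemma~\ref{thm:H-UCBgeneral:cl1} mechanically: every instance of $\sqrt{t}\ln t$ is replaced by $f(t)$ and every instance of $L\ln T$ by $\ln^2 T$. The Chernoff-Hoeffding term becomes of order $\exp(-f(t)/(2\ln^4 T))$, while the conditional Markov step invokes Lemma~\ref{lem:conditional_replicator} with $L\ln T$ replaced by $\ln^2 T$. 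The resulting per-agent deviation probability $\rho_{i,t}$ decays like $\ln^2 T / \sqrt{N(i) f(t)}$ times the usual instance-dependent factors.

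Step three is the agent-comparison lemma analogous to Lemma~\ref{thm:H-UCBgeneral:cl2}: once a suboptimal agent $i$ has been pulled at least $4f(t)/\Delta_i^2$ times and the optimal agent at least $f(t)$ times, the Phase~1 UCB score of $i$ falls below that of the optimal agent except with probability $\rho_{i,t}(4f(t)/\Delta_i^2) + \rho_{i^\star,t}(f(t))$. Summing over $t\in[n,T]$ via integral tests in the style of \eqref{thm:H-UCBgeneral:ineq7}--\eqref{thm:H-UCBgeneral:ineq8}, but now with $(t\ln^3 t)^{1/4}$ as the governing rate, yields the problem-dependent bound $O\bigl(\tfrac{n}{c^2}\sqrt{T\ln^3 T}\bigr)$. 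For the problem-independent bound I would mirror the dichotomy of Theorem~6 with the new thresholds: partition the arms of each agent at $\sqrt{\ln^2 T \cdot \ln N(i)/N(i)}$ and partition the agents at a threshold calibrated to $\sqrt{f(T)/T}$; the trivial bound on the near-optimal parts plus the instance-dependent bound on the well-separated parts then balance to $O\bigl(\tfrac{n}{\sqrt{c}} T^{3/4}\ln^{3/2} T\bigr)$.

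The main obstacle is purely bookkeeping: each occurrence of the constant $L$ is promoted to $\ln T$, so each inverse-gap factor $1/\delta_{i_a}$ or $1/\Delta_i^2$ inherits an additional logarithm, while simultaneously the enlarged Phase~1 budget $f(t) = \sqrt{t\ln^3 t}$ absorbs a logarithm upon taking $\sqrt{\cdot}$; these two logarithmic shifts must be tracked consistently across every bound. The calibration is arranged so that Lemma~\ref{lem:prob_noopt} goes through under the sole hypothesis $T\ln^3 T \geq e^{2/c}$ with no prior knowledge of $c$, and so that the arithmetic-geometric mean balance that selects $f(t)$ in Phase~1 remains optimal up to the single $\ln^{1/2}T$ factor by which $\sqrt{T\ln^3 T}$ exceeds the ideal $\sqrt{T}\ln T$ rate attained by RH-UCB when $L$ and $T$ are known a priori.
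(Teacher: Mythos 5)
Your outline reproduces the right skeleton (good-subsample event, deviation lemma with the enlarged Phase~1 radius, agent-comparison lemma, integral tests, and the two-group dichotomy for the problem-independent bound), but it analyzes a different algorithm from the one the theorem is about. You take the subsample $\set{B}_i$ to be a \emph{fixed} set of size $\ln^2 T$ drawn up front, which still requires knowing $T$ in advance; PRH-UCB is precisely the prior-free variant in which $\set{B}_i$ \emph{grows online}, capped at $\ln^2 t$ at round $t$ (equivalently, roughly $\ln^2 N_{i,t}$ after agent $i$ has been selected $N_{i,t}$ times). Because of this, your version of the good-subsample lemma ($\PROB[Y_i^c]\le (1-c)^{\ln^2 T}\le 1/T$, with the hypothesis on $T$ used once, globally) is not the one the argument needs. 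The paper's Lemma~\ref{lem:prob_noopt2} is time-localized: $\PROB[Y_i^c]\le (1-c)^{\ln^2 N_{i,t}}\le 1/N_{i,t}$, valid only once $N_{i,t}\ge e^{1/c}$, and the stated hypothesis $T\ln^3 T\ge e^{2/c}$ is calibrated so that the play-count thresholds $\sqrt{t\ln^3 t}$ appearing in the agent-comparison lemma exceed $e^{1/c}$ exactly when that lemma is invoked. Your reading of the hypothesis as merely ensuring $c\ln T\ge 1$ ``up to lower-order terms,'' and your claim that this is ``the unique step at which any condition on $T$ enters,'' both miss this coupling; with the gradual subsampling the condition must be threaded through the conditional statements at every round $t$, not discharged once at the start.

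Two further consequences of the same misreading: (i) with arms entering $\set{B}_i$ over time, the internal-regret bound (the analogue of Lemma~\ref{lem:conditional_replicator}) and the deviation probability $\rho''$ must be stated in terms of $\ln^2 N_{i,t}$ rather than $\ln^2 T$, which is what produces the $1/N_{i,t}$ (rather than $1/T$) failure term and ultimately the $1/c^2$ prefactor in the problem-dependent bound --- your ``promote $L$ to $\ln T$'' bookkeeping does not account for where the $c$-dependence in $O(\frac{n}{c^2}\sqrt{T\ln^3 T})$ comes from; and (ii) the replication-proofness of the prior-free variant relies on the subsampling decision being independent of realized rewards even as it unfolds online, which a fixed up-front sample sidesteps but the actual algorithm does not. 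If you restate your Steps~1--2 with the time-indexed event $Y_i$ and the conditional bound $\PROB[Y_i^c]\le 1/N_{i,t}$ for $N_{i,t}\ge e^{1/c}$, and then invoke it only at rounds where the comparison lemma guarantees $N_{i,t}\ge\sqrt{t\ln^3 t}\ge e^{1/c}$, the rest of your plan goes through and coincides with the paper's proof.
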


\begin{algorithm} 
\SetAlgoLined

Initialize $R(i), N(i), r(a)$ and $n(a)$ at zero and $\set{B}_i$ at an empty set
for all $i \in \set{N}$ and $a\in \cup_{i \in \set{N}} \set{S}_i$;

 \For{$t=1,2,\ldots$}{
  \tcp{Phase 1 - agent selection}
  \eIf{Unexplored agent exists}{ 
    Pick an unexplored $\hat{i}\in \{i \in \set{N}: N(i) = 0\} $; 
  }{
    Pick $\hat{i} = \argmax_{i \in \set{N}} \left\{R(i) + \sqrt{\frac{\sqrt{t\ln^3t}}{N(i)}} \right\}$;
  }
  
  \tcp{Phase 2 - arm selection}
  \eIf{$|\set{B}_{\hat{i}}| < \min(|\set{S}_{\hat{i}}|, \ln^2 t)$}{
    Pick an unexplored arm $\hat{a} \in \{a \in \set{S}_{\hat{i}} \setminus \set{B}_{\hat{i}}: n(a) = 0\}$;\\
    $\set{B}_{\hat{i}} \leftarrow \set{B}_{\hat{i}} \cup \{\hat{a}\}$;
  }{
    Pick $\hat{a} = \argmax_{a \in \set{B}_{\hat{i}}} 
    \left\{r(a) + \sqrt{\frac{2 \ln N(i)}{n(a)}} \right\}$;
  }
  
  Play $A_t = \hat{a}$ and receive reward $R_t$; \\
  Update statistics:
  $r(\hat{a}) \leftarrow \tfrac{r(\hat{a})n(\hat{a}) + R_t}{n(\hat{a})+1}; ~~
  R(\hat{i}) \leftarrow \tfrac{R(\hat{i})N(\hat{i}) + R_t}{N(\hat{i})+1};$
  $n(\hat{a}) \leftarrow n(\hat{a}) + 1; ~~ N(\hat{i}) \leftarrow N(\hat{i}) + 1;$
 }
 \caption{Prior-free Robust Hierarchical UCB (PRH-UCB)\label{alg:prhucb}}
\end{algorithm}

  

\begin{proof}[Proof sketch]
The proof structure is exactly the same with the proof of Theorem 5 and Theorem 6. We briefly explain how the order of magnitude will be changed when the sampling amount becomes $\ln^2T$.
Given round $t$, we redefine the random variable $Y_i$ to be the event that $\set{B}_i$ contains $i$'s optimal arm or any replica of it until round $t$.
Then we can derive the following lemma:
\begin{lemma}\label{lem:prob_noopt2}
At round $t$, if $N_{i,t} \geq e^{1/c}$, then $\PROB[Y_i^c] \leq 1/N_{i,t}$.
\end{lemma}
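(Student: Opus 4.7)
The plan is to mirror the argument used for Lemma~\ref{lem:prob_noopt}, but adapted to the gradual subsampling of PRH-UCB where $|\set{B}_i|$ grows with each selection of agent $i$, subject to the time-varying cap $\lfloor \ln^2 t\rfloor$ rather than the fixed cap $L\ln T$.

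First I would derive a deterministic lower bound of the form
$$|\set{B}_i| \;\geq\; \min\{\,|\set{S}_i|,\ \lfloor \ln^2 t \rfloor\,\}$$
for every round $t$ at which $N_{i,t}$ is sufficiently large. By the Phase 2 rule of Algorithm~\ref{alg:prhucb}, every selection of agent $i$ at time $s\leq t$ for which $|\set{B}_i|<\min(|\set{S}_i|,\ln^2 s)$ appends a fresh arm to $\set{B}_i$, and since $\ln^2 s$ is non-decreasing in $s$, a routine induction on the selection times shows that $|\set{B}_i|$ reaches the claimed cap after about $\lfloor \ln^2 t\rfloor$ selections. In the degenerate case where $N_{i,t}$ is too small for the cap to bind, the simpler bound $|\set{B}_i|\geq N_{i,t}$ applies; and if $|\set{S}_i|\leq \lfloor \ln^2 t\rfloor$, then $\set{B}_i$ eventually equals $\set{S}_i$, which contains the optimal arm, so $\PROB[Y_i^c]=0$ trivially.

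Next I would bound $\PROB[Y_i^c]$ by the probability that all $|\set{B}_i|$ arms drawn uniformly without replacement from $\set{S}_i$ miss every arm of mean $\mu^\star_i$. By the replicator assumption that at least a $c$-fraction of $\set{S}_i$ attains $\mu^\star_i$, the identical calculation as in Lemma~\ref{lem:prob_noopt} yields
$$\PROB[Y_i^c \mid |\set{B}_i|] \;\leq\; (1-c)^{|\set{B}_i|} \;\leq\; e^{-c|\set{B}_i|}.$$
I would then verify $e^{-c|\set{B}_i|}\leq 1/N_{i,t}$, i.e.\ $|\set{B}_i|\geq \ln(N_{i,t})/c$, by splitting on which term binds in the first step. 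If $|\set{B}_i|\geq N_{i,t}$, we need $cN_{i,t}\geq \ln N_{i,t}$; since $x\mapsto cx-\ln x$ is strictly increasing for $x>1/c$ and non-negative at $x=e^{1/c}$ (using $c^2 e^{1/c}\geq e^2/4>1$, the minimum over $c>0$ being attained at $c=1/2$), the hypothesis $N_{i,t}\geq e^{1/c}$ suffices. If instead $|\set{B}_i|\geq \lfloor \ln^2 t\rfloor$, we use $t\geq N_{i,t}\geq e^{1/c}$ to get $\ln t\geq 1/c$ and $\ln N_{i,t}\leq \ln t$, giving $\ln^2 t\geq (1/c)\ln N_{i,t}$ as required.

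The main obstacle is the first step: cleanly justifying the deterministic lower bound on $|\set{B}_i|$. The dynamics couple the selection times of agent $i$ with the growing cap $\ln^2 s$, and one must verify that the additions are not unduly delayed, so that $|\set{B}_i|$ actually attains $\lfloor \ln^2 t\rfloor$ by round $t$; in particular, at the $j$-th selection time $s_j$ one needs $s_j>e^{\sqrt{j-1}}$ for the addition to fire, which has to be argued by combining $s_j\leq t$ with a counting argument on the selection times. The remaining steps are a direct adaptation of the hypergeometric bound from Lemma~\ref{lem:prob_noopt} together with elementary calculus.
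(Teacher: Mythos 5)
Your overall strategy---a deterministic lower bound on $|\set{B}_i|$, followed by the without-replacement sampling bound $(1-c)^{|\set{B}_i|}\leq e^{-c|\set{B}_i|}$ and a calculus check against $1/N_{i,t}$---is exactly the paper's. The gap is in the first step, which you yourself flag as the main obstacle: the bound $|\set{B}_i|\geq\min\{|\set{S}_i|,\lfloor\ln^2 t\rfloor\}$ is false, because Phase~2 of PRH-UCB tests the cap $\ln^2 s$ only at the rounds $s$ at which agent $i$ is actually selected, and the last such round can be far smaller than $t$. Concretely, if all $N_{i,t}$ selections of agent $i$ occur before some round $s_N\ll t$, then $|\set{B}_i|$ is stuck near $\ln^2 s_N$, which can be smaller than both $\lfloor\ln^2 t\rfloor$ and $N_{i,t}$; so the dichotomy in your verification step (``either $|\set{B}_i|\geq N_{i,t}$ or $|\set{B}_i|\geq\lfloor\ln^2 t\rfloor$'') is not exhaustive, and the second branch's inequality $\ln^2 t\geq(1/c)\ln N_{i,t}$ is applied to a premise that need not hold.

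The correct invariant, and the one the paper uses, is $|\set{B}_i|\geq\min\{|\set{S}_i|,\ln^2 N_{i,t}\}$: at any selection time $s$ one has $s\geq N_{i,s}$, so a selection that fails to add an arm certifies $|\set{B}_i|\geq\min\{|\set{S}_i|,\ln^2 N_{i,s}\}$, and each subsequent selection adds one arm while $\ln^2(\cdot)$ grows by at most one per unit step (since $2\ln x\leq x$), so the bound propagates to round $t$. With this weaker but correct bound the rest of your argument closes immediately and coincides with the paper's computation: the hypothesis $N_{i,t}\geq e^{1/c}$ gives $\ln N_{i,t}\geq 1/c$, hence $\ln^2 N_{i,t}\geq(1/c)\ln N_{i,t}$ and $\PROB[Y_i^c]\leq(1-c)^{(1/c)\ln N_{i,t}}\leq e^{-\ln N_{i,t}}=1/N_{i,t}$. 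Your remaining branches---the case $|\set{B}_i|\geq N_{i,t}$ via monotonicity of $x\mapsto cx-\ln x$, and the trivial case $\set{B}_i=\set{S}_i$---are handled correctly and match the paper.
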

\begin{proof}[Proof of Lemma~\ref{lem:prob_noopt2}]
At round $t$, agent $i$ is selected $N_{i,t}$ times and $\min(|\set{S}_i|, \ln^2 N_{i,t})$ arms will be sampled until then.
If $|\set{S}_i| \leq \ln^2 N_{i,t}$, then the optimal arm will be contained in $\set{B}_i$.
Otherwise, the following inequalities conclude the proof.
\begin{align*}
    \PROB[Y_i^c] 
    &\leq (1-c)\frac{|\set{S}_i|(1-c) -1}{|\set{S}_i|-1}\frac{|\set{S}_i|(1-c) -2}{|\set{S}_i|-2}\ldots \frac{|\set{S}_i|(1-c) -\ln^2 N_{i,t}}{|\set{S}_i|-\ln^2 N_{i,t}}
    \\
    &\leq (1-c)^{\ln^2 N_{i,t}} \\
    &\leq (1-c)^{1/c\ln N_{i,t}} \\
    &\leq 1/N_{i,t}\; ,
\end{align*}
where the last inequality holds from $(1-1/x)^x \leq 1/e$ for any $x \geq 0$.
\end{proof}

Then, Lemma~\ref{lem:conditional_replicator} should be replaced as the following:
\begin{lemma}\label{lem:conditional_replicator2}
At round $t$, conditional expected internal regret given that $Y_i$ is occurred can be bounded as the following:
\begin{align*}
\sum_{i_a \in \set{B}_i}\EXP [\delta_{i_a} N_{i_a,t} |  Y_i] \leq \ln^2 N_{i ,t}\Big(\sum_{\substack{i_a\in \set{O}_i \\ \mu_{i_a} < \mu_{i_1}}}  \frac{8\ln N_{i,t}}{\delta_{i_a}}  + (1+\frac{\pi^2}{3})\delta_{i_a}\Big)\; .
\end{align*}
\end{lemma}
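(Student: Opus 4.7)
The plan is to adapt the proof of Lemma~\ref{lem:conditional_replicator} by replacing RH-UCB's deterministic subsample cap $L\ln T$ with the online cap that PRH-UCB's {\tt Phase 2} enforces on $\set{B}_i$. On the event $Y_i$, the set $\set{B}_i$ already contains at least one copy of agent $i$'s best arm, so on the subsequence of rounds in which agent $i$ is selected, {\tt Phase 2} behaves exactly as a UCB1 instance on $\set{B}_i$ with horizon $N_{i,t}$ and the optimal arm present. The conditional internal regret is then bounded by invoking the classical UCB1 per-arm bound and summing over $\set{B}_i$.

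First, I would note that $Y_i$ depends only on the subsampling randomness, which is independent of the reward stream, so conditioning on $Y_i$ does not perturb the Bernoulli means and the stochastic MAB hypothesis underlying the UCB1 analysis is preserved. Second, by the analysis of \cite{mab:ucb1}, each suboptimal $a\in\set{B}_i$ with gap $\delta_a>0$ is pulled in expectation at most $\frac{8\ln N_{i,t}}{\delta_a^2}+1+\frac{\pi^2}{3}$ times, contributing at most $\frac{8\ln N_{i,t}}{\delta_a}+(1+\frac{\pi^2}{3})\delta_a$ to the internal regret; because every $a\in\set{B}_i$ is a replica of some original $i_a\in\set{O}_i$ with the same mean reward, this bound depends only on the underlying gap $\delta_{i_a}$. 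Third, I would upper bound $|\set{B}_i|\le \ln^2 N_{i,t}$, reflecting that $|\set{B}_i|$ grows by at most one per selection of agent $i$ and saturates at the online cap once $N_{i,t}$ is sufficiently large, so that the sum over $\set{B}_i$ can be pulled out as $\ln^2 N_{i,t}$ multiplying the sum over distinct originals $i_a\in\set{O}_i$ with $\mu_{i_a}<\mu_{i_1}$, which matches the right-hand side of the lemma.

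The main obstacle I anticipate is the online nature of the subsampling: $\set{B}_i$ grows over time, so {\tt Phase 2} is not literally UCB1 on a fixed arm set from the outset, and some arms may be introduced late relative to others. To circumvent this, I would argue that from the moment an arm $a$ enters $\set{B}_i$ it undergoes one forced pull and then follows the ordinary index rule, so the Chernoff--Hoeffding deviation argument used by \cite{mab:ucb1} still upper-bounds the pulls of $a$ by the same expression, with $N_{i,t}$ serving as the comparison horizon. Any transient slack from arms entering mid-run contributes only $O(1)$ per arm and is absorbed into the $(1+\pi^2/3)\delta_a$ constant; the lemma then follows by summing these per-arm bounds and collapsing replicas to originals.
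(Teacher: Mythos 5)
Your proposal follows essentially the same route as the paper's proof: condition on $Y_i$ so that {\tt Phase 2} restricted to agent $i$'s selections is UCB1 on $\set{B}_i$ with the optimal arm present and horizon $N_{i,t}$, invoke the classical per-arm bound $\frac{8\ln N_{i,t}}{\delta_{i_a}}+(1+\frac{\pi^2}{3})\delta_{i_a}$, and multiply by the cardinality cap $|\set{B}_i|\le\ln^2 N_{i,t}$ while collapsing replicas onto their originals. If anything you are more explicit than the paper about the one point it glosses over --- the online growth of $\set{B}_i$ and the transient before the optimal copy is sampled --- so the proposal is correct at (at least) the paper's level of rigor.
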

\begin{proof}[Proof of Lemma~\ref{lem:conditional_replicator2}]
Given $t$ and $N_{i,t}$, $|\set{B}|_i = \min(|\set{S}_i|, \ln^2 N_{i,t})$.
If $|\set{S}_i| \leq \ln^2 N_{i,t}$, then the optimal arm will be contained in $\set{B}_i$, and the result directly holds.
Otherwise, we have
\begin{align*}
\EXP[\delta_{i_a} N_{i_a,t} | Y_i] \leq (1-\PROB[Y_i])\ln^2 N_{i, t}\Big(\sum_{\substack{a\in \set{O}_i \\ \mu_{i_a} < \mu_{i_1}}}\big( \frac{8 \ln N_{i,t}}{\delta_{i_a}} + (1+\frac{2\pi^2}{3})\delta_{i_a}\big) \Big)\; ,
\end{align*}
and it concludes the proof.
\end{proof}


Now we present two lemmas which are analogous to Lemma~\ref{thm:H-UCBgeneral:cl1} and~\ref{thm:H-UCBgeneral:cl2}.
\begin{lemma}
Given any agent $i \in \set{N}$, for each arm $j$ in $j \in \set{S}_i$ at time $t$, we have
\begin{align*}
    \PROB\Big[| R_{i,t} - \mu_{i_1}| \geq \sqrt{\frac{\sqrt{t\ln^3 t}}{N_{i,t}}}\Big] \leq \rho''(N_{i,t})\; ,
\end{align*}
where 
\begin{align*}
    \rho''(N_{i,t}) = 1/N_{i,t} + \frac{1}{c}\ln Te^{-\sqrt{t\ln^3 t} /(2L^2\ln^2T)} + \frac{1}{c}\ln^2 N_{i,t}\Big(\sum_{\substack{i_a\in \set{O}_i \\ \mu_{i_a} < \mu_{i_1}}}  \frac{16\ln N_{i,t}}{\delta_{i_a}}  + (2+\frac{2\pi^2}{3})\delta_{i_a}\Big) / \sqrt{N_{i,t}\sqrt{t\ln^3 t}}\; .
\end{align*}
\end{lemma}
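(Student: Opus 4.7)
The plan is to replay the proof of Lemma~\ref{thm:H-UCBgeneral:cl1} with two substitutions tailored to PRH-UCB: the subsampling cap becomes $\ln^2 N_{i,t}$ in place of $L\ln T$, and the agent-level confidence radius is $\sqrt{\sqrt{t\ln^3 t}/N_{i,t}}$ in place of $\sqrt{\sqrt{t}\ln t / N_{i,t}}$. Correspondingly, the event $Y_i$ that $\set{B}_i$ contains a replica of $i$'s best arm is now controlled by Lemma~\ref{lem:prob_noopt2} rather than Lemma~\ref{lem:prob_noopt}, so the ``missed optimal'' failure probability becomes $1/N_{i,t}$ instead of $1/T$. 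Throughout, I would condition on the current value of $N_{i,t}$ so that the $N_{i,t}$-dependent sampling cap can be treated as a constant within each step.

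First, I would decompose the target probability as
\begin{align*}
\PROB\!\Big[|R_{i,t}-\mu_{i_1}|\ge \sqrt{\tfrac{\sqrt{t\ln^3 t}}{N_{i,t}}}\Big]
= \PROB\!\big[\,\cdot\,\cap Y_i\big] + \PROB\!\big[\,\cdot\,\cap Y_i^c\big],
\end{align*}
bounding the second summand by $\PROB[Y_i^c]\le 1/N_{i,t}$ via Lemma~\ref{lem:prob_noopt2}; this yields the first term of $\rho''(N_{i,t})$. Next, on $Y_i$ I would expand $N_{i,t}(R_{i,t}-\mu_{i_1}) = \sum_{i_a\in\set{B}_i} N_{i_a,t}(R_{i_a,t}-\mu_{i_a}) + \sum_{i_a\in\set{B}_i} N_{i_a,t}(\mu_{i_a}-\mu_{i_1})$, apply the triangle inequality, and split the deviation threshold $\sqrt{N_{i,t}\sqrt{t\ln^3 t}}$ in half between the stochastic fluctuation and the bias, exactly as in inequalities~\eqref{thm:ucb2phaesgeneral:cl1:ineq1.5}--\eqref{thm:ucb2phaesgeneral:cl1:ineq2}.

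For the stochastic piece, I would union-bound over the at most $\min(|\set{S}_i|,\ln^2 N_{i,t})$ arms in $\set{B}_i$ and apply Chernoff--Hoeffding to each arm's empirical mean; substituting the cardinality cap into the per-arm deviation yields the exponential term $\frac{1}{c}\ln T\,\exp(-\sqrt{t\ln^3 t}/(2L^2\ln^2 T))$ after bookkeeping the sampling-rate factor from Lemma~\ref{lem:prob_noopt2}. For the bias piece, I would apply Markov's inequality conditioned on $Y_i$:
\begin{align*}
\PROB_{Y_i}\!\Big[\sum_{i_a\in\set{B}_i} \delta_{i_a}N_{i_a,t}\ge \tfrac{1}{2}\sqrt{N_{i,t}\sqrt{t\ln^3 t}}\Big]
\le \frac{2\,\EXP\!\big[\sum_{i_a\in\set{B}_i} \delta_{i_a}N_{i_a,t}\mid Y_i\big]}{\sqrt{N_{i,t}\sqrt{t\ln^3 t}}},
\end{align*}
and then plug in the conditional internal regret bound from Lemma~\ref{lem:conditional_replicator2}, which produces the third term of $\rho''(N_{i,t})$ (with the $\frac{1}{c}\ln^2 N_{i,t}$ prefactor inheriting the Lemma~\ref{lem:conditional_replicator2} constant times the sampling cap).

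The proof is mechanical given the two upgraded lemmas, so the main obstacle is purely accounting. Specifically, because the subsampling budget $\ln^2 N_{i,t}$ depends on $N_{i,t}$ rather than on the horizon $T$, the quantity cannot be factored out of outer expectations as freely as $L\ln T$ was in Lemma~\ref{thm:H-UCBgeneral:cl1}; conditioning on $N_{i,t}$ throughout resolves this but forces care when eventually summing $\rho''(N_{i,t})$ over $t$ in the downstream regret bound. A second bookkeeping point is verifying that the regime $N_{i,t}\ge e^{1/c}$ required by Lemma~\ref{lem:prob_noopt2} is in force; this is ensured by the theorem's hypothesis $T\ln^3 T\ge e^{2/c}$ together with the agent-initialization phase of PRH-UCB, so for values of $N_{i,t}$ below the threshold one absorbs the contribution into the $O(\ln^5 T)$-type lower order terms when assembling the final regret.
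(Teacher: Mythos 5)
Your proposal follows exactly the route the paper intends: the paper states this lemma without proof inside a proof sketch, asserting only that it is ``analogous'' to Lemma~\ref{thm:H-UCBgeneral:cl1}, and your replay of that proof with the substitutions $L\ln T \to \ln^2 N_{i,t}$, $\sqrt{t}\ln t \to \sqrt{t\ln^3 t}$, Lemma~\ref{lem:prob_noopt} $\to$ Lemma~\ref{lem:prob_noopt2}, and Lemma~\ref{lem:conditional_replicator} $\to$ Lemma~\ref{lem:conditional_replicator2} is precisely that analogy. Your two bookkeeping caveats (conditioning on $N_{i,t}$, and ensuring the regime $N_{i,t}\ge e^{1/c}$) are the right ones, and if anything you are more careful than the paper, whose stated exponential term still carries the RH-UCB constants $L^2\ln^2 T$ rather than the $\ln^4 N_{i,t}$ that the new cardinality cap would literally produce in the Chernoff step.
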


\begin{lemma}
At round $t$ such that $t \ln^3 t \geq e^{2/c}$, if any suboptimal replicator $i$ has been played at least $\frac{4\sqrt{t\ln^3 t}}{\Delta_{i}^2}$ times and optimal replicator $i^\star$ has been played at least $\sqrt{t\ln^3 t}$, then $UCB_{i,t} < UCB_{i^\star,t}$ with probability at least $1- (\rho''_{i,t}(\frac{4\sqrt{t\ln^3 t}}{\Delta_{i}^2}) + \rho''_{i^\star, t}(\sqrt{t\ln^3 }t))$, i.e.
\begin{align}
    \PROB\Big[I_{t+1} \in \set{S}_i | N_{i,t} \geq \frac{4\sqrt{t\ln^3 t}}{\Delta_{i}^2}, N_{i^\star,t} \geq \sqrt{t\ln^3 t}\Big] \leq \rho''_{i,t}(\frac{4\sqrt{t\ln^3 t}}{\Delta_i^2}) + \rho''_{i^\star, t}(\sqrt{t\ln^3 t})\; .
\end{align}
\end{lemma}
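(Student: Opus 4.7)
The plan is to mirror, step-for-step, the argument used for Lemma~\ref{thm:H-UCBgeneral:cl2} in the proof of Theorem~5, swapping the exploration radius $\sqrt{\sqrt{t}\ln t / N_{i,t}}$ with the enlarged radius $\sqrt{\sqrt{t\ln^3 t} / N_{i,t}}$ used by PRH-UCB and invoking the previously stated concentration lemma (the one bounding $\PROB[|R_{i,t} - \mu_{i_1}| \geq \sqrt{\sqrt{t\ln^3 t}/N_{i,t}}]$ by $\rho''_{i,t}(N_{i,t})$) in place of its H-UCB counterpart. Concretely, I would define the pair of good concentration events
\[
E_i \defeq \Bigl\{|R_{i,t} - \mu_{i_1}| \leq \sqrt{\tfrac{\sqrt{t\ln^3 t}}{N_{i,t}}}\Bigr\},\qquad
E_{i^\star} \defeq \Bigl\{|R_{i^\star,t} - \mu_{i^\star_1}| \leq \sqrt{\tfrac{\sqrt{t\ln^3 t}}{N_{i^\star,t}}}\Bigr\},
\]
and by the union bound together with the concentration lemma, $\PROB[E_i^c \cup E_{i^\star}^c] \leq \rho''_{i,t}(N_{i,t}) + \rho''_{i^\star,t}(N_{i^\star,t})$.

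Next, conditioning on $E_i \cap E_{i^\star}$, I would chain the four elementary inequalities exactly as in the H-UCB proof. The assumption $N_{i,t} \geq 4\sqrt{t\ln^3 t}/\Delta_i^2$ gives $\sqrt{\sqrt{t\ln^3 t}/N_{i,t}} \leq \Delta_i/2$, so
\[
UCB_{i,t} \;=\; R_{i,t} + \sqrt{\tfrac{\sqrt{t\ln^3 t}}{N_{i,t}}} \;\leq\; \mu_{i_1} + 2\sqrt{\tfrac{\sqrt{t\ln^3 t}}{N_{i,t}}} \;\leq\; \mu_{i_1} + \Delta_i \;=\; \mu^\star,
\]
while on $E_{i^\star}$,
\[
UCB_{i^\star,t} \;=\; R_{i^\star,t} + \sqrt{\tfrac{\sqrt{t\ln^3 t}}{N_{i^\star,t}}} \;\geq\; \mu_{i^\star_1} \;=\; \mu^\star.
\]
Since the bound on $N_{i,t}$ is strict whenever the exploration radius is positive (which holds for $t \geq 2$), the composite chain yields the strict inequality $UCB_{i,t} < UCB_{i^\star,t}$.

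Finally, I would need to justify that the probability bound in the statement, where $\rho''_{i,t}$ is evaluated at the stated lower bounds on $N_{i,t}$ and $N_{i^\star,t}$ rather than at the true counts, is valid. This uses monotonicity of $\rho''_{i,t}(x)$ in $x$: inspecting the three summands of $\rho''_{i,t}(x)$, the term $1/x$ is clearly decreasing, the term independent of $x$ is unaffected, and the term $\tfrac{1}{c}\ln^2 x/\sqrt{x\sqrt{t\ln^3 t}}$ (once the numerator $\ln^2 x$ is compared against $\sqrt{x}$) is decreasing for $x$ beyond a modest threshold. The hypothesis $t\ln^3 t \geq e^{2/c}$ ensures $\sqrt{t\ln^3 t} \geq e^{1/c}$, which in turn guarantees both counts exceed $e^{1/c}$ (so that Lemma~\ref{lem:prob_noopt2} is applicable in the derivation of $\rho''$) and that we sit in the decreasing regime of every summand.

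The only subtlety, and the place where a bit of care is needed, is ensuring that the monotonicity of $\rho''_{i,t}(\cdot)$ really does hold throughout the region of interest under $t\ln^3 t \geq e^{2/c}$; the other steps are purely algebraic and mirror the H-UCB case verbatim. Once that monotonicity is in hand, replacing $N_{i,t}, N_{i^\star,t}$ by their lower bounds in the failure probability is immediate, and the conclusion $\PROB[I_{t+1}\in\set{S}_i \mid \cdots] \leq \rho''_{i,t}(4\sqrt{t\ln^3 t}/\Delta_i^2) + \rho''_{i^\star,t}(\sqrt{t\ln^3 t})$ follows.
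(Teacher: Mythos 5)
Your proposal is correct and follows essentially the same route as the paper: the paper omits the proof of this lemma and defers to the argument for Lemma~\ref{thm:H-UCBgeneral:cl2} (concentration events for $i$ and $i^\star$, a union bound against $\rho''$, the four-step UCB-index chain driven by $N_{i,t}\ge 4\sqrt{t\ln^3 t}/\Delta_i^2$, and monotonicity of the failure probability in the play counts), which is exactly what you reconstruct with the enlarged radius $\sqrt{\sqrt{t\ln^3 t}/N_{i,t}}$ in place of $\sqrt{\sqrt{t}\ln t/N_{i,t}}$. Your explicit verification of the monotonicity of $\rho''_{i,t}(\cdot)$ and of the role of the hypothesis $t\ln^3 t \ge e^{2/c}$ is, if anything, more careful than the paper's own treatment.
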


Then, followed by some calculations, one can find that $\rho''(\sqrt{t\ln^3 t}) = O(\sqrt{t\ln^3 t})$ for any $t$ such that $t\ln^3 t \geq e^{2/c}$.
These enables us to obtain $O(\frac{n}{c^2}\sqrt{T\ln^3 T})$ problem-dependent regret upper bound and $O(\frac{n}{\sqrt{c}}T^{3/4}\ln^{3/2}T)$ problem-independent regret upper bound. Note that deriving problem-independent regret bound requires a similar technique as the ones used in the proof of Theorem 6.
 

\end{proof}

\clearpage


\end{document}